\documentclass{article}

% if you need to pass options to natbib, use, e.g.:
% \PassOptionsToPackage{numbers, compress}{natbib}
% before loading neurips_2025
\usepackage[square,numbers]{natbib}

% ready for submission
% \usepackage{neurips_2025}

% to compile a preprint version, e.g., for submission to arXiv, add add the
% [preprint] option:
%     \usepackage[preprint]{neurips_2025}

% to compile a camera-ready version, add the [final] option, e.g.:
\usepackage[final]{neurips_2025}

% to avoid loading the natbib package, add option nonatbib:
% \usepackage[nonatbib]{neurips_2025}

\usepackage[utf8]{inputenc} % allow utf-8 input
\usepackage[T1]{fontenc}    % use 8-bit T1 fonts
\usepackage{hyperref}       % hyperlinks
\usepackage{url}            % simple URL typesetting
\usepackage{booktabs}       % professional-quality tables
\usepackage{amsfonts}       % blackboard math symbols
\usepackage{nicefrac}       % compact symbols for 1/2, etc.
\usepackage{microtype}      % microtypography
\usepackage{xcolor}         % colors
\usepackage[table]{xcolor}

\definecolor{mine}{RGB}{205, 232, 248}
\usepackage{graphicx}
\usepackage{booktabs}
\usepackage{multirow}
\usepackage{array} 
\usepackage{algorithm}
\usepackage{algpseudocode}
\usepackage{amsmath,amssymb,amsthm}
\usepackage{booktabs}
\usepackage{caption}
\usepackage{subcaption}
\usepackage{wrapfig}
\usepackage{enumitem}
\definecolor{mine}{RGB}{205, 232, 248}
\newtheorem{proposition}{Proposition}
\newtheorem{lemma}{Lemma}
\usepackage{wrapfig}
\usepackage{minted}
\definecolor{gray}{rgb}{0.5,0.5,0.5}

\usepackage{listings}
\usepackage{xcolor}

\definecolor{codegreen}{rgb}{0,0.6,0}
\definecolor{codegray}{rgb}{0.5,0.5,0.5}
\definecolor{codepurple}{rgb}{0.58,0,0.82}
\definecolor{backcolour}{rgb}{0.95,0.95,0.92}

\lstdefinestyle{mystyle}{
    backgroundcolor=\color{backcolour},   
    commentstyle=\color{codegreen},
    keywordstyle=\color{magenta},
    numberstyle=\tiny\color{codegray},
    stringstyle=\color{codepurple},
    basicstyle=\ttfamily\footnotesize,
    breakatwhitespace=false,         
    breaklines=true,                 
    captionpos=b,                    
    keepspaces=true,                 
    numbers=left,                    
    numbersep=5pt,                  
    showspaces=false,                
    showstringspaces=false,
    showtabs=false,                  
    tabsize=2
}

\lstset{style=mystyle}

\title{Towards Robust Zero-Shot Reinforcement Learning}

% The \author macro works with any number of authors. There are two commands
% used to separate the names and addresses of multiple authors: \And and \AND.
%
% Using \And between authors leaves it to LaTeX to determine where to break the
% lines. Using \AND forces a line break at that point. So, if LaTeX puts 3 of 4
% authors names on the first line, and the last on the second line, try using
% \AND instead of \And before the third author name.

\author{%
  Kexin Zheng$^{1}$\footnotemark[1]\enspace\footnotemark[3]~~, Lauriane Teyssier$^{2}$\footnotemark[1]~~, Yinan Zheng$^{2}$, Yu Luo$^{3}$, Xianyuan Zhan$^{2,4}$\footnotemark[2]\\
$^1$ The Chinese University of Hong Kong \quad$^2$ Tsinghua University \\
$^3$ Huawei Noah's Ark Lab \quad$^4$ Shanghai Artificial Intelligence Laboratory \\
\texttt{1155173723@link.cuhk.edu.hk,} \texttt{zhanxianyuan@air.tsinghua.edu.cn}\\
}

\begin{document}

\maketitle
\renewcommand{\thefootnote}{\fnsymbol{footnote}}
\footnotetext[1]{Equal contribution.}
\footnotetext[2]{Corresponding author.}
\footnotetext[3]{Work done during internships at Institute for AI Industry Research (AIR), Tsinghua University.}

\begin{abstract}
The recent development of zero-shot reinforcement learning (RL) has opened a new avenue for learning pre-trained generalist policies that can adapt to arbitrary new tasks in a zero-shot manner. While the popular Forward-Backward representations (FB) and related methods have shown promise in zero-shot RL, we empirically found that their modeling lacks expressivity and that extrapolation errors caused by out-of-distribution (OOD) actions during offline learning sometimes lead to biased representations, ultimately resulting in suboptimal performance.
To address these issues, we propose \textit{\textbf{B}ehavior-\textbf{RE}gulariz\textbf{E}d \textbf{Z}ero-shot RL with \textbf{E}xpressivity enhancement} (BREEZE), an upgraded FB-based framework that simultaneously enhances learning stability, policy extraction capability, and representation learning quality. BREEZE introduces behavioral regularization in zero-shot RL policy learning, transforming policy optimization into a stable in-sample learning paradigm. Additionally, BREEZE extracts the policy using a task-conditioned diffusion model, enabling the generation of high-quality and multimodal action distributions in zero-shot RL settings. Moreover, BREEZE employs expressive attention-based architectures for representation modeling to capture the complex relationships between environmental dynamics. Extensive experiments on ExORL and D4RL Kitchen demonstrate that BREEZE achieves the best or near-the-best performance while exhibiting superior robustness compared to prior offline zero-shot RL methods. The official implementation is available at:~\url{https://github.com/Whiterrrrr/BREEZE}.
\end{abstract}
\section{Introduction}

Reinforcement learning (RL) has become a cornerstone of artificial intelligence, enabling transformative advances in robotics~\citep{Tang2024DeepRL}, autonomous systems~\citep{rl4ad}, industrial control~\citep{zhan2022deepthermal}, and large language models (LLM)~\citep{wang2025reinforcementlearningenhancedllms}. However, its real-world adoption faces two persistent challenges: the reliance on human-provided reward functions and its task-specific learning paradigm, which limits adaptability to novel or multiple tasks. These challenges sparked growing interest in zero-shot RL~\citep{Touati2021FB,Touati2022DoesZR,park2024foundationpolicieshilbertrepresentations,Jeen2023ZeroShotRL,Borsa2018UniversalSF,amy2024FE,Tirinzoni2025ZeroShotWH,levine2024FRE}, which enables learning a versatile agent through pretraining on reward-free transitions and then zero-shot adaptation to arbitrary reward functions during inference. This opens up new possibilities for developing general-purpose RL agents capable of generalizing across diverse tasks in open-world scenarios.

Existing zero-shot RL approaches mainly fall into two categories: task/skill-conditioned RL~\citep{levine2024FRE,amy2024FE} and dynamic representation-based methods~\citep{Borsa2018UniversalSF,Barreto2016SuccessorFF, Touati2022DoesZR,Jeen2023ZeroShotRL,Tirinzoni2025ZeroShotWH,park2024foundationpolicieshilbertrepresentations}. The first category encodes demonstrations or reward functions into embeddings as conditioning signals for policy learning. This preserves the task generalization capability of policies, but often results in heavy, manually designed pretraining without optimality guarantees. The second category of methods adopts a more principled approach by decomposing the problem into dynamic representations that can be recomposed for novel tasks without retraining. Among dynamic representation-based methods, Forward-Backward representations (FB)~\citep{Touati2021FB,Touati2022DoesZR} have recently attracted notable attention, factorizing occupancy measures into two components: a forward representation that captures policy dynamics and a backward representation that encodes global state information. Through offline, unsupervised pretraining, the FB framework learns linearized representations that approximate value functions for arbitrary tasks, holding great promise for zero-shot generalization.

However, despite the elegant theoretical framework provided by the FB representations, our empirical studies reveal that the successor measures learned through existing FB-based methods are often inconsistent and biased, thereby compromising the stability and overall performance (see Section~\ref{sec:motivation} for details). In this paper, we identify two causes for the shortcomings of existing FB-based methods. First, learning complex approximators and multimodal behaviors requires highly expressive models, which current FB methods lack in both their representations and policy. Second, the offline, unsupervised pretraining stage suffers from extrapolation error due to out-of-distribution (OOD) actions, which is a similar problem to offline RL~\citep{Fujimoto2018OffPolicyDR,kumar2019stabilizing,kummar2020cql}, but exhibits more complex behavior. As our results show, naively integrating value constraints can be ineffective~\citep{Jeen2023ZeroShotRL}, indicating a need for a more delicate OOD regularization mechanism.

Based on the above observations, we propose \textit{\textbf{B}ehavior-\textbf{RE}gulariz\textbf{E}d \textbf{Z}ero-shot RL with \textbf{E}xpressivity enhancement} (BREEZE), a novel FB-based framework that simultaneously enhances offline learning stability and zero-shot generalization capability. First, we introduce a behavior-regularized reformulation of FB, which mitigates extrapolation errors while preserving the fidelity of representations. Second, we extract the policy using a task-conditioned diffusion model, enabling high-quality multimodal action distributions in zero-shot RL settings. Finally, we employ expressive representation networks based on the attention architecture to capture complicated dynamics. We conducted extensive experiments on the ExORL benchmark~\citep{yarats2022dontchangealgorithmchange} and the D4RL Kitchen dataset~\citep{fu2021d4rldatasetsdeepdatadriven}, under both full datasets and small-sample data regimes. The results demonstrate that BREEZE achieves the best or near-the-best performance while exhibiting superior robustness.
\section{Preliminary}
\paragraph{Reward-free Markov decision process (MDP).}
Zero-shot RL is typically formulated as a reward-free Markov Decision Process (MDP)~\citep{mdp}, defined by a tuple \( \mathcal{M} = (\mathcal{S}, \mathcal{A}, \mathcal{P}, \gamma) \), consisting of a state space $\mathcal{S}$, an action space $\mathcal{A}$, a transition kernel $\mathcal{P}:\mathcal{S} \times \mathcal{A} \to \Delta(\mathcal{S})$, and a discount factor $\gamma \in (0,1)$. Given an initial state-action pair \((s_0, a_0) \in \mathcal{S} \times \mathcal{A}\) and a policy \(\pi: \mathcal{S} \to \Delta(\mathcal{A})\), we denote $\text{Pr}(\cdot|s_0, a_0, \pi)$ a probability and $\mathbb{E}[\cdot|s_0,a_0,\pi]$ an expectation under the state-action trajectories \((s_0, a_0,...,s_t, a_t)_{t \geq 0}\) generated by sampling \(s_t \sim \mathcal{P}(\cdot | s_{t-1}, a_{t-1})\) and \(a_t\sim\pi(\cdot| s_t)\). The state transition under \(\pi\) is given by \(\mathcal{P}_{\pi}(\mathrm{d}s' | s) = \int \mathcal{P}(\mathrm{d}s' | s, a)\pi(\mathrm{d}a | s)\).

Approximate dynamic programming-based RL~\citep{powell2007approximate} uses an action-value \(Q\)-function, or optionally a state-value \(V\)-function. The \(Q\)-function and \(V\)-function for \(\pi\) starting at \(s_0,a_0\) under a given reward function \(r: \mathcal{S} \to \mathbb{R}\) are respectively defined as \(Q^{\pi}_r(s_0, a_0):= \sum_{t \geq 0} \gamma^t \mathbb{E}[r(s_{t+1}) | s_0, a_0, \pi]\) and \(V^{\pi}_r(s_0):= \sum_{t \geq 0} \gamma^t \mathbb{E}[r(s_{t+1}) | s_0, \pi]\). The goal of the zero-shot RL problem is to train a task-agnostic policy given an offline dataset \( \mathcal{D}=\{s_i, a_i, s_{i+1}\}_{i=1}^N\) generated by an unknown behavior policy $\mu(\cdot|s)$, that can later generalize to any downstream task $z$ defined by a reward function \(r_{\text{eval}}: \mathcal{S} \to \mathbb{R} \), i.e. find \(\max_{\pi} \ \mathbb{E} [\sum_{t=0}^{\infty} \gamma^{t} r_{\text{eval}}\left(s_{t+1}\right) | s_{0}, a_{0}, \pi]\),
relying only on a small set of reward-labeled samples with no further finetuning.

\paragraph{Forward-Backward representations (FB).} FB introduces a rank-$d$ approximation of the successor measure, which is defined as the expected discounted occurrences of future states $s_+\in \mathcal{S}_+$ after starting from $(s_0, a_0)$ under a policy $\pi$:
\begin{equation}
\label{eq:M}
    M^{\pi}(s_0, a_0, \mathcal{S}_+) := \sum_{t \geq 0} \gamma^t \, \text{Pr}\big(s_{t+1} \in \mathcal{S}_+ \mid s_0, a_0, \pi \big) \, \quad \forall \mathcal{S}_+ \subset \mathcal{S}.
\end{equation}
Following this definition, $Q$-function under policy $\pi$ can be expressed as: 
\begin{equation}
Q_r^{\pi}(s, a) = \int_{s_+ \in \mathcal{S}} M^{\pi}(s, a, ds_+) \, r(s_+). 
\end{equation}
Given a representation space \(\mathbb{R}^d\), a state distribution $\rho$, a task vector $z \in \mathbb{R}^d$ and a policy $\pi_z$ parameterized by $z$, FB expresses $M^{\pi_z}$ as the product of a forward representation $F: \mathcal{S} \times \mathcal{A} \times \mathbb{R}^d \to \mathbb{R}^d$ and a backward representation $B: \mathcal{S} \to \mathbb{R}^d$, resulting in an optimal policy $\pi_z$:
\begin{equation}
    \label{eq:fb}
    M^{\pi_z}(s_0,a_0,ds_+)\approx F(s_0,a_0,z)^\top B(s_+)\rho(ds_+),\quad \pi_z(s):=\arg\max_a F(s,a,z)^\top z,
\end{equation}
where $z$ is defined as $z := \mathbb{E}_{s \sim \rho}[r(s)B(s)]$ from a few samples with a known reward function $r$. The corresponding $Q$-function can be obtained immediately as $Q_z(s, a) =  F(s, a, z)^\top z$. 

Since the successor measure $M^{\pi_z}$ satisfies a Bellman-like equation $M^{\pi_z}=\mathcal{P}+\gamma \mathcal{P}_{\pi_z}M^{\pi_z}$, FB derives the temporal difference (TD) objective~\citep{suttonTD} on $M^{\pi_z}$ as:
\begin{equation}
\label{fb_loss}
\begin{gathered}
\mathcal{L}_{\text{FB}} = \mathbb{E}_{\substack{(s_t, a_t, s_{t+1}) \sim D \\ s_+ \sim D}} \left[ \left( F(s_t, a_t, z)^\top B(s_+) - \gamma \bar{F}(s_{t+1}, \pi_z(s_{t+1}), z)^\top \bar{B}(s_+) \right)^2 \right] \\
- 2 \, \mathbb{E}_{(s_t, a_t, s_{t+1}) \sim D} \left[ F(s_t, a_t, z)^\top B(s_{t+1}) \right],
\end{gathered}
\end{equation}
and incorporate the following objective on $F$ to enforce the Bellman property on the $Q$-function:
\begin{equation}
\label{f_loss}
\begin{gathered}
\scalebox{0.95}{$
\mathcal{L}_{\text{F}} = \mathbb{E}_{\substack{(s_t, a_t, s_{t+1}) \sim D}} \left[ \left( F(s_t, a_t, z)^\top z - B(s_{t+1})^\top \mathbb{E}_{D}[BB^\top]^{-1}z -  \gamma \bar{F}(s_{t+1}, \pi_z(s_{t+1}), z)^\top z \right)^2 \right]
$}
\end{gathered}
\end{equation}
where $B(s_{t+1})^\top \mathbb{E}_{D}[BB^\top]^{-1}z$ is the implicit reward estimation and $D$ denote the data distribution.

Recent extensions of FB incorporate additional regularization to improve learning performance. For example, \citet{Jeen2023ZeroShotRL} noted that the use of $\pi_z(s_{t+1})$ in Eq.~(\ref{fb_loss}) during offline learning could introduce OOD extrapolation errors. To resolve this problem, they introduce additional CQL-style~\citep{kummar2020cql} regularizers on $M^{\pi_z}$ (MCFB) or $Q_z$ (VCFB) in $\mathcal{L}_{\text{FB}}$ for OOD regularization (i.e., decrease values for OOD actions and increase those for data samples).
\section{Pitfalls of Exiting FB-Based Methods} \label{sec:motivation}

\begin{figure}[b]
  \centering
  \subcaptionbox{\footnotesize Distribution of empirically evaluated $M^{\pi_z}$}{
    \includegraphics[width=0.233\linewidth]{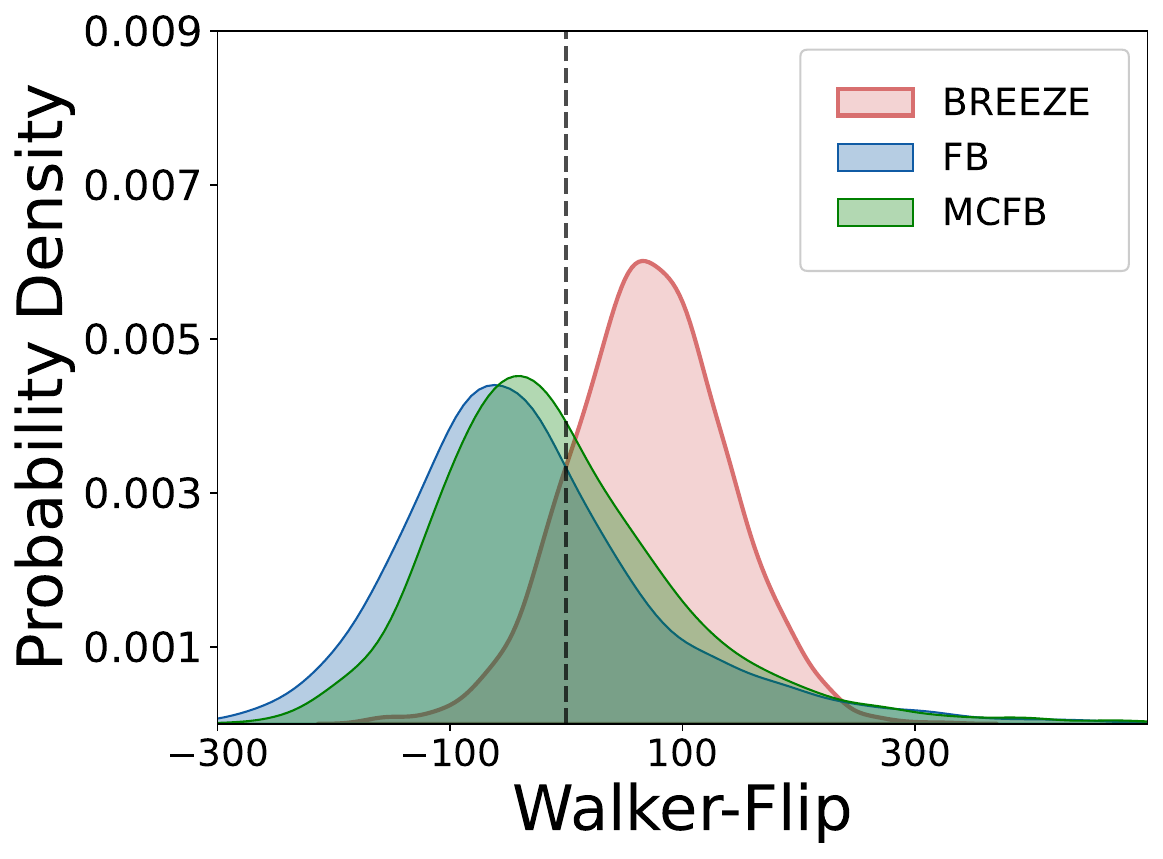}
    \includegraphics[width=0.233\linewidth]{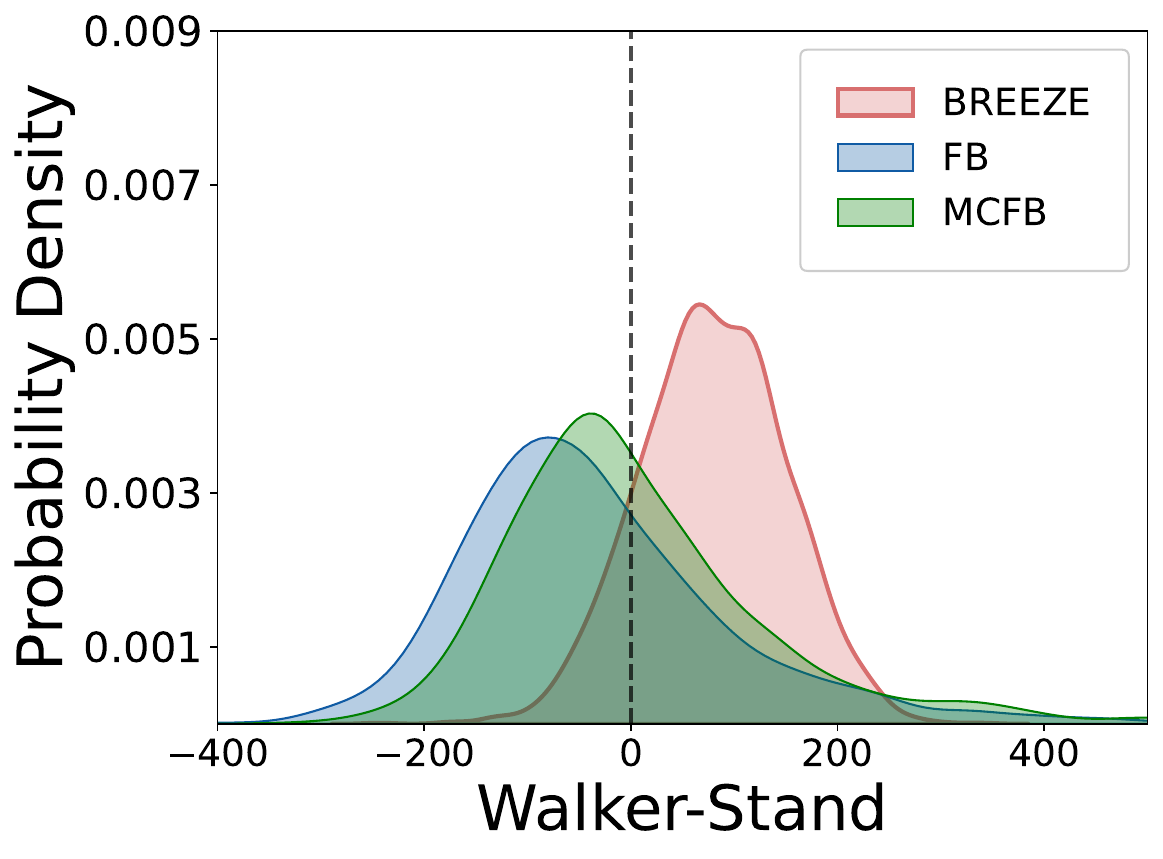}
  }%
  \hspace{0.02\linewidth}
  \subcaptionbox{\footnotesize Distribution of empirically evaluated $Q_z$}{
    \includegraphics[width=0.233\linewidth]{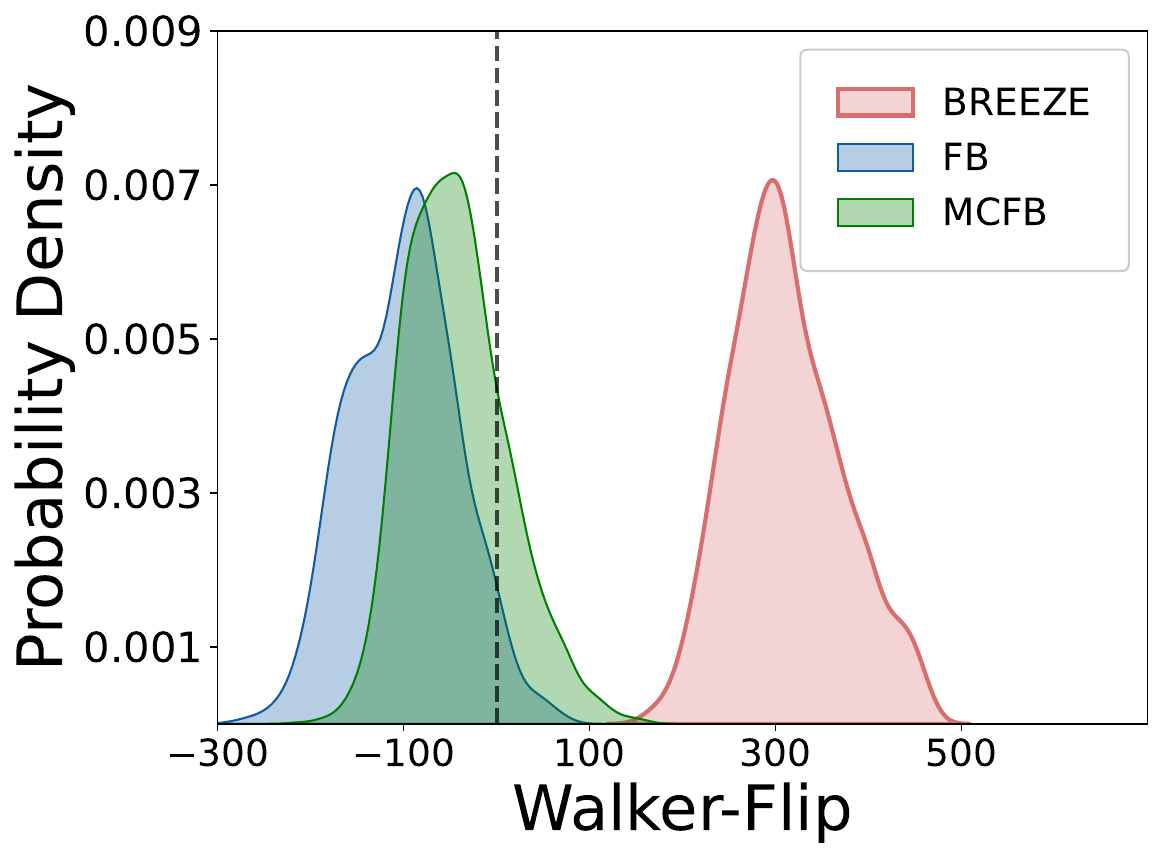}
    \includegraphics[width=0.233\linewidth]{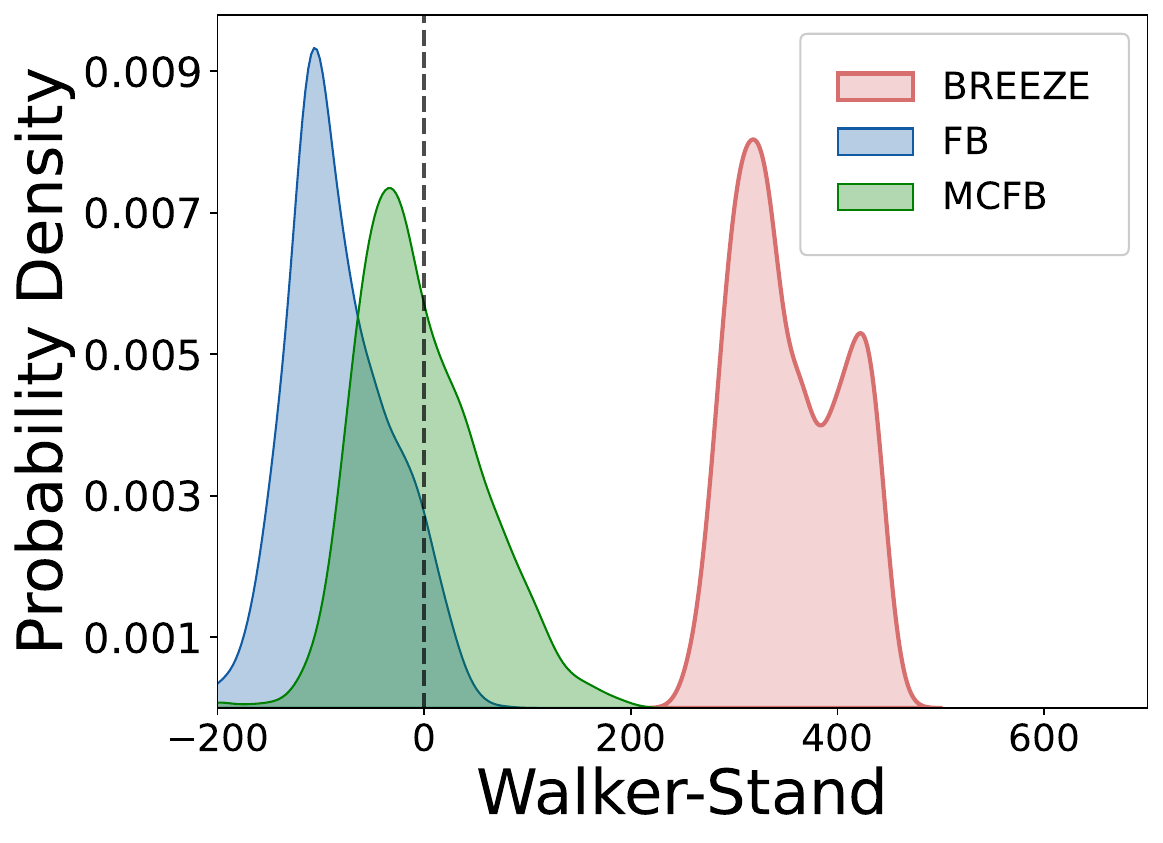}
  }
  \caption{\small Visualization of the empirical $M^{\pi_z}$ and $Q_z$ distributions during evaluation stage. We conduct experiments on FB-based methods on two Walker tasks using ExORL~\citep{yarats2022dontchangealgorithmchange} RND~\citep{Burda2018ExplorationBR} dataset. We use the learned $F$, $B$ representations to recover the task vector $z$ and compute $M^{\pi_z}=F^\top B$ and $Q_z=F^\top z$. Both the \textit{vanilla FB}~\cite{Touati2022DoesZR} and \textit{MCFB}~\citep{Jeen2023ZeroShotRL} result in a proportion of error scaling values.
  }
  \label{fig:M_V_distribution}
\end{figure}

FB offers an elegant theoretical foundation for zero-shot RL; however, our empirical findings reveal that FB-based methods often produce inconsistent and biased representations in practice. To illustrate this, we evaluate the $M^{\pi_z}$- and $Q_z$-value distributions derived from the learned $F$ and $B$ representations of the vanilla FB~\cite{Touati2022DoesZR}, the more recent MCFB~\cite{Jeen2023ZeroShotRL}, and our method on ExORL benchmark~\cite{yarats2022dontchangealgorithmchange} with RND~\citep{Burda2018ExplorationBR} datasets. For each method, we use its $B$ representation to derive the task vector $z$ under default settings with a fixed number of transitions. The $M^{\pi_z}$ values are computed from episode-start state-action pairs $(s_0,a_0) \sim \rho_0$ and randomly sampled batch of transitions $s_+ \sim \mathcal{D}$, while $Q$-values are evaluated using policy-induced actions $a \sim \pi_z(s)$. Results across additional environments and with method VCFB~\citep{Jeen2023ZeroShotRL} are provided in Appendix~\ref{app:empirical_motivation}.

Figure~\ref{fig:M_V_distribution} illustrates the value distributions from two tasks in the Walker domain. Although the successor measure $M^{\pi_z}$ is mathematically a positive quantity representing future state occupancy, the representations learned by existing FB-based methods fail to accurately capture this property. These distributions exhibit two notable discrepancies: a scale mismatch characterized by enormous absolute values, and the presence of a considerable number of invalid negative values. Such inaccuracies in the $F$ and $B$ representations subsequently affect downstream $Q$-value estimates, creating a consistent prediction bias across the system, which is further worsened by the suboptimal action rollout. The CQL-style regularizer in MCFB partially mitigates the issue, shifting the distributions of $M^{\pi_z}$ and $Q_z$ toward more plausible ranges. However, this correction is incomplete, and significant estimation bias persists, indicating the need for a more effective regularization strategy. In comparison, our method generates distributions for $M^{\pi_z}$- and $Q_z$-value that more closely match theoretical expectations, and can also result in a high-quality policy.
\begin{wrapfigure}{r}{0.5\textwidth}
    \centering
    \subcaptionbox{Impact of $F$ Networks}{
    \includegraphics[width=0.25\textwidth]{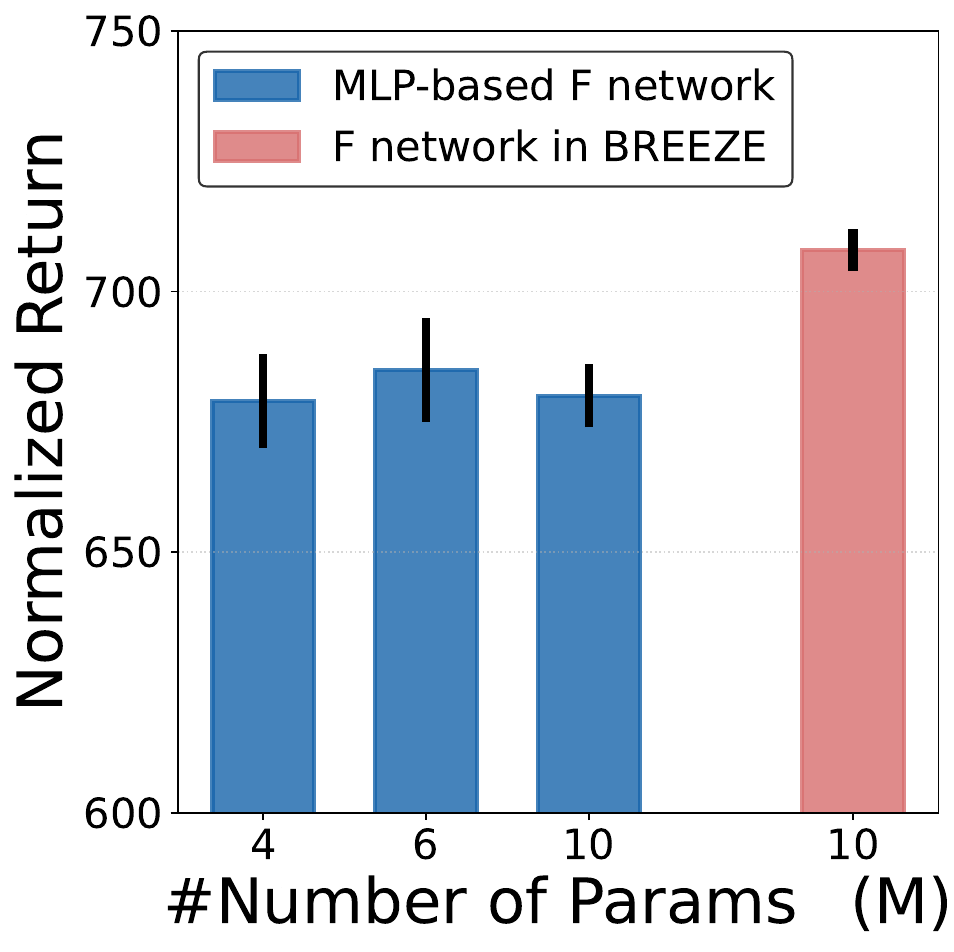}
  }%
  % \hspace{0.01\linewidth}
  \subcaptionbox{Impact of $B$ Networks}{
    \includegraphics[width=0.25\textwidth]{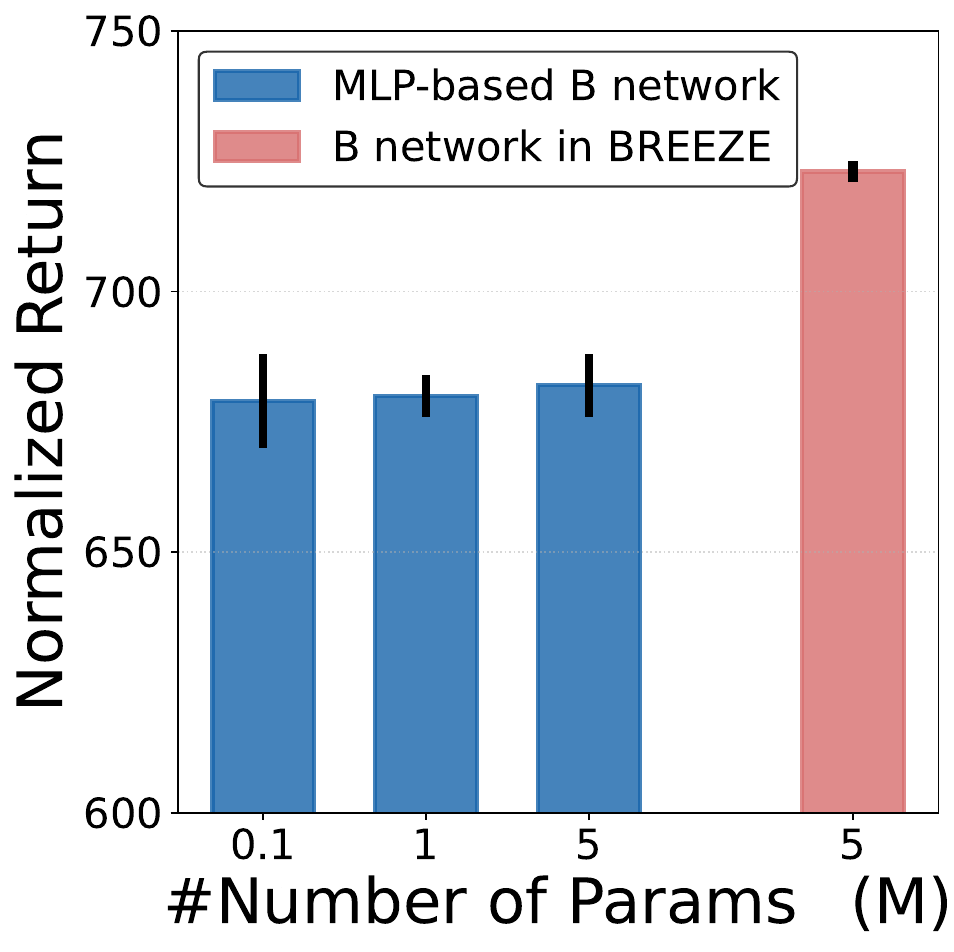}
  }
    \caption{\small Investigation of the modeling architecture on the \textit{vanilla FB}~\cite{Touati2022DoesZR}. We utilize the peak observed in the training stage of each run as a proxy for the architecture's capability ceiling.
    }
    \label{fig:scaling_experiments}
    \vspace{-10pt}
\end{wrapfigure}
The challenges in estimating $M^{\pi_z}$- and $Q_z$-value can be primarily attributed to the limited supervisory signal in the vanilla FB learning objective (Eq.~(\ref{fb_loss})), which fails to constrain the scale of the learned representations adequately. Additionally, modeling successor measures and zero-shot policies across all possible task vectors $z$ introduces substantial complexity, necessitating models with high expressive capacity. While the marginal gains from MCFB’s regularizer suggest its potential utility, a more effective formulation appears necessary. These observations are further corroborated by the results from our method, which integrates a refined OOD regularization strategy and employs a more expressive model architecture. As a result, the distributions of $M^{\pi_z}$ and $Q_z$ values are primarily confined to a reasonable range. Importantly, as illustrated in Figure \ref{fig:scaling_experiments}, these improvements are not attributable to increased model capacity alone: simply scaling up the original MLP-based FB networks does not yield measurable gains. In contrast, the architectural modifications designed in this work lead to a marked improvement in performance, underscoring the influence of model expressivity on the effectiveness of zero-shot learning.
\section{Behavior-Regularized Zero-Shot RL with Expressivity Enhancement}
\label{sec:method}
Motivated by these findings, we propose BREEZE, a framework designed to mitigate the OOD and expressivity issues in FB-based methods. We begin by imposing a behavioral regularizer on the offline zero-shot RL problem, followed by the transformation of the learning process into an in-sample weighted regression paradigm, which can naturally integrate powerful generative models to enhance policy capability. Recognizing the importance of expressivity for learning arbitrary tasks, we describe our practical architecture for stronger representation modeling.

\subsection{Behavior-regularized Optimization}
\label{sec:behavior_constraint}
The main OOD extrapolation issue in the vanilla FB learning losses $\mathcal{L}_{FB}$ and $\mathcal{L}_F$ in Eq.~(\ref{fb_loss}) and Eq.~(\ref{f_loss}) lies in the policy-generated actions $\hat{a}_{t+1}\sim \pi_z(s_{t+1})$ in $F(s_{t+1}, \pi_z(s_{t+1}),z)$ during offline learning. Since $\pi_z$ is trained to maximize $F(s, a, z)^\top z$, it may produce actions that appear optimal under the learned representation but are actually OOD and overestimated, leading to extrapolation errors. Our solution to this is to introduce two designs: 1) stabilize the $F(s_{t+1}, \pi_z(s_{t+1}),z)$ value estimate in the $\mathcal{L}_{FB}$; 2) ensure that $\pi_z$ is better regularized by the dataset samples during policy optimization.

\paragraph{Behavior-regularized representation guidance.} To stabilize the learning of representation learning, we consider a regularized version of $\mathcal{L}_F$ in Eq.~(\ref{f_loss}) by introducing the task-conditioned state-value function $V$ with respect to policy $\pi$, given by:
\begin{equation}
\begin{gathered}
V_{\pi_z}(s,z) := {\max_{\substack{a \in A\\ \mathrm{s.t.} \, \mu(a|s) > 0}}} F(s, a, z)^\top z,
\end{gathered}
\end{equation}
where $z$ is the task vector and $\mu(a|s)$ is the behavior policy in the dataset $\mathcal{D}$. Compared with the traditional formulation of the state-value function, this optimization problem aims to learn an optimal task-conditioned $V$-function solely from dataset samples without explicitly estimating $\mu(a|s)$. It can be solved through a class of value objectives according to different regularization formulations~\citep{Xu2023OfflineRW,garg2023extreme,Kostrikov2021OfflineRL}. In this work, we leverage the commonly used expectile regression as in IQL~\citep{Kostrikov2021OfflineRL}:
\begin{equation} \label{eql:vloss}
\mathcal{L}_{V_{\pi_z}} = \mathbb{E}_{(s,a) \sim \mathcal{D}, z \sim \mathcal{Z}} \left[ L_{2}^{\tau} \left(F(s, a, z)^\top z - V_{\pi_z}(s,z)\right) \right],
\end{equation}
where $ L_{2}^{\tau}(u) = |\tau - \mathbb{I}(u<0)|u^2$, with $\tau > 0.5$ serving as the expectile parameter. By minimizing the above equations, we can obtain a well-regularized and nearly optimal state-value function $V_{\pi_z}$ for different tasks. Given the reward function $r_z$, associated with task $z$, we have the following modified policy evaluation operator $\mathcal{T}^\pi$ given by
\begin{equation}
\begin{gathered}
(\mathcal{T}^\pi Q_z)(s, a, z) := \mathbb{E}_{s'\sim \mathcal{P}(s'|s,a)}\left[r_z(s') + \gamma V_{\pi_z}(s',z)\right],
\end{gathered}
\end{equation}
This formulation of $\mathcal{T}^\pi$ implicitly introduces regularization to the $Q$-function. Following above, we modified $\mathcal{L}_{F}$ in Eq.~(\ref{f_loss}) as following $\mathcal{L}_{F\text{-reg}}$:
\begin{equation}\label{eq:constraint}
\scalebox{0.9}{$\displaystyle
    \mathcal{L}_{F\text{-reg}} = \mathbb{E}_{\substack{(s, a, s') \sim \mathcal{D}}} \left[ \left( F(s, a, z)^\top z - B(s')^\top \mathbb{E}_D[BB^\top]^{-1}z -\gamma V_{\pi_z}(s',z) \right)^2 \right].
$}
\end{equation}

By substituting the potentially unstable target $Q$-approximation $F(s_{t+1}, \pi_z(s_{t+1}),z)^\top z$ with a more well-behaved state-value function $V_{\pi_z}$, we can greatly stabilize the learning of representation, as well as respond to optimality demands simultaneously. In comparison with the direct constraint on unseen value approximation, the behavioral regularization guidance ensures maximum preservation of the representation structure, while being flexible for tuning the degree of conservatism.

\paragraph{Behavior-regularized policy extraction.}
Having introduced regularization for the value function, we now turn to policy learning, which prior work suggests can be equally or even more critical for final performance~\citep{park2024value}. A good policy should effectively leverage high-quality behavior and further generalize near data distribution. To ensure that the policy is well regularized by the dataset samples, we replace the policy objective in Eq.~(\ref{eq:fb}) with the following behavior-regularized optimization problem:
\begin{equation} 
\label{obj}
\max_{\pi_z} \mathbb{E}_{a \sim \pi_z(\cdot|s)} \left[ F(s, a, z)^\top z - V_{\pi_z}(s,z) \right] \quad \text{s.t.} \ \int_a \pi_z(s) da = 1, \, \forall s \in \mathcal{S}, \quad D_{\text{KL}}(\pi_z \| \mu) \leq \epsilon,
\end{equation}
where the term $F(s,a,z)^\top z - V_{\pi_z}(s,z)$ acts as an advantage function maximization, which is equivalent to maximizing the $Q$-function. The KL constraint anchors $\pi_z$ to the behavior policy $\mu$, thereby preventing distributional shift in the offline setting. Given the optimal representation $F$ and $V_{\pi_z}$, we can obtain the closed-form solution for the constrained optimization objective Eq.~(\ref{obj}) by deriving the Lagrangian objective with respect to the policy and setting its derivative to zero, as shown in Proposition \ref{pro:weight_bc} below:
\begin{proposition}\label{pro:weight_bc}
The solution to the constrained optimization problem in Eq.~(\ref{obj}) yields an optimal policy of the form (See Appendix \ref{appendix:Theoretical_Interpretations} for proof):
\begin{equation} \label{weight_bc}
\pi_z^*(s) \propto \mu(a|s) \exp\left(\alpha \cdot (F(s, a, z)^\top z - V_{\pi_z}(s, z))\right), 
\end{equation}
where $1/\alpha$ is the Lagrangian multiplier for the KL constraint.
\end{proposition}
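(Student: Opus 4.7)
The plan is to treat Eq.~(\ref{obj}) as a convex optimization problem in the function $\pi_z(\cdot|s)$ for each fixed $s$ (since the objective is linear in $\pi_z$ and the KL constraint is convex in $\pi_z$), and solve it via the method of Lagrange multipliers. First I would write the two constraints explicitly: the KL constraint $D_{\mathrm{KL}}(\pi_z\|\mu)\le\epsilon$ with dual variable $1/\alpha \ge 0$, and the normalization constraint $\int_a \pi_z(a|s)\,da = 1$ with dual variable $\lambda(s)$. Because $V_{\pi_z}(s,z)$ does not depend on $a$, it acts as a state-dependent baseline and can be carried through the derivation as a constant w.r.t.\ the decision variable $\pi_z(a|s)$.

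Next I would form the pointwise Lagrangian
\begin{equation*}
\mathcal{L}(\pi_z;\alpha,\lambda) = \int_a \pi_z(a|s)\bigl(F(s,a,z)^\top z - V_{\pi_z}(s,z)\bigr)\,da - \tfrac{1}{\alpha}\bigl(D_{\mathrm{KL}}(\pi_z\|\mu) - \epsilon\bigr) - \lambda(s)\Bigl(\int_a \pi_z(a|s)\,da - 1\Bigr),
\end{equation*}
expand $D_{\mathrm{KL}}(\pi_z\|\mu)=\int_a \pi_z(a|s)\log\tfrac{\pi_z(a|s)}{\mu(a|s)}\,da$, and take the functional derivative with respect to $\pi_z(a|s)$. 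Setting the derivative to zero yields
\begin{equation*}
F(s,a,z)^\top z - V_{\pi_z}(s,z) - \tfrac{1}{\alpha}\bigl(\log \tfrac{\pi_z(a|s)}{\mu(a|s)} + 1\bigr) - \lambda(s) = 0,
\end{equation*}
which I would solve algebraically for $\pi_z(a|s)$, absorbing the $a$-independent terms $1$ and $\alpha\lambda(s)$ into a single state-dependent normalization factor $Z(s)$ fixed by $\int_a \pi_z(a|s)\,da = 1$. This directly produces $\pi_z^*(a|s) = Z(s)^{-1}\mu(a|s)\exp\bigl(\alpha(F(s,a,z)^\top z - V_{\pi_z}(s,z))\bigr)$, matching Eq.~(\ref{weight_bc}) up to the proportionality.

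I expect the main subtleties, rather than obstacles, to lie in three places: (i) justifying the exchange of the maximization and the pointwise treatment, which follows because the objective and constraints decompose additively over states and the KL bound is separable into per-state inequalities that can be handled by a common multiplier $1/\alpha$; (ii) verifying strict concavity of the Lagrangian in $\pi_z(a|s)$ (the $-\tfrac{1}{\alpha}H$ term provides this whenever $\alpha > 0$), so the stationary point is the unique global maximizer; and (iii) arguing that the KL constraint is active at the optimum, so that $1/\alpha > 0$ can indeed be interpreted as the Lagrange multiplier as stated. I would conclude by noting that the $V_{\pi_z}(s,z)$ baseline, although appearing inside the exponent, is $a$-independent and can equivalently be absorbed into $Z(s)$, so its role is purely one of numerical stabilization; this confirms the advantage-weighted form given in the proposition.
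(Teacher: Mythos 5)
Your proposal is correct and follows essentially the same route as the paper's proof: form the Lagrangian over the two constraints, take the functional derivative of the (expanded) KL term with respect to $\pi_z(a|s)$, set it to zero, and absorb the $a$-independent terms into a normalizing constant, with the KL multiplier identified as $1/\alpha$. Your additional remarks on per-state separability, strict concavity, and the $a$-independence of the $V_{\pi_z}(s,z)$ baseline go slightly beyond what the paper writes down but do not change the argument.
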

$\alpha$ acts as a temperature that controls the balance between the regularization strength of the behavioral policy and the optimization of the value functions. This closed-form solution elegantly combines the distribution of the behavior policy with the optimal value functions, encouraging the policy to favor high-advantage actions while ensuring all actions remain within the support of the dataset, thus balancing performance with OOD avoidance. With this stable guarantee, we now aim to enhance the policy generalization capability in the next section.

\subsection{Policy Extraction via Task-Conditioned Diffusion Model}
To compute the solution of the weighted behavior cloning objective in Eq.~(\ref{weight_bc}) in practice, a key challenge arises: \textit{how to efficiently model and sample from potentially highly complex and diverse distributions distilled from a re-weighted behavior policy?} 

Direct weighted behavior cloning with a Gaussian policy often fails to capture the complex, multi-modal policy distributions~\citep{Wang2022DiffusionPA,idql,Chen2022OfflineRL}, which is a demand for arbitrary task learning. This motivates the use of diffusion models~\citep{diffusion2015Sohl,ho2020DDPM,score2023songyang}, known for their capacity to learn complex distributions through iterative denoising. Rather than relying on guidance-based techniques that introduce a separate, time-dependent term to steer the sampling process, we draw inspiration from recent advances in weighted regression for diffusion models~\cite{Chen2022OfflineRL,idql,Kang0DPY23,Zheng2024fisor}. Building on the theoretical foundation of prior work~\citep{Zheng2024fisor}, we formalize in Proposition~\ref{thm:weighted-regression} how the optimal policy $\pi_z^*$ can be extracted using a diffusion model trained with a weighted regression objective:

\begin{proposition}[Task-conditioned diffusion policy extraction via weighted regression]
\label{thm:weighted-regression}
The extraction of optimal policy $\pi_z^*$ in Eq.~(\ref{weight_bc}) can be achieved by (i) minimizing the weighted regression loss defined as:
\begin{equation}
\label{eq:diffusion_objective}
\min_{\theta} \mathbb{E}_{t\sim\mathcal{U}([0,T]), \epsilon\sim\mathcal{N}(0, I),(s,a)\sim \mathcal{D}} \Big[\exp\Big( \alpha \cdot(F(s,a, z)^\top z - V_{\pi_z}(s,z))\Big) \|\epsilon - \epsilon_{\theta,z}(a_t,s,z,t)\|_2^2 \Big],
\end{equation}
with expectations taken over $t \sim \mathcal{U}([0,T])$, $\epsilon \sim \mathcal{N}(0,I)$, and $(s,a) \sim \mathcal{D}$, where $a_t = \alpha_t a + \sigma_t \epsilon$ follows the forward process $\mathcal{N}(a_t|\alpha_t a,\sigma_t^2 I)$ parameterized by noise schedules $\alpha_t, \sigma_t$ and $z \sim \mathbb{R}^d$ denotes the corresponding task;
and (ii) sampling from $\pi_z^*$ by solving the corresponding diffusion ODEs/SDEs with the learned $\epsilon_{\theta,z}$. (See Appendix~\ref {appendix:Theoretical_Interpretations} for the detailed proof.)
\end{proposition}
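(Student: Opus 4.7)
The plan is to recognize the weighted regression objective in Eq.~(\ref{eq:diffusion_objective}) as a standard denoising score matching (DSM) loss computed under a \emph{reweighted} data distribution, and then invoke the standard equivalence between DSM-trained diffusion models and sampling from the underlying score.

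First, I would define the auxiliary joint distribution
\begin{equation*}
    p_z(s,a) \;\propto\; p_{\mathcal{D}}(s,a)\,\exp\!\Big(\alpha\big(F(s,a,z)^\top z - V_{\pi_z}(s,z)\big)\Big),
\end{equation*}
and observe that, because $p_{\mathcal{D}}(a|s)=\mu(a|s)$, its conditional factor is
$p_z(a|s) \propto \mu(a|s)\exp(\alpha(F(s,a,z)^\top z - V_{\pi_z}(s,z))) = \pi_z^*(a|s)$
by Proposition~\ref{pro:weight_bc}. This identifies $\pi_z^*$ as the conditional of a reweighted data distribution, which is the pivotal bridge to diffusion training.

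Next, I would rewrite the objective in Eq.~(\ref{eq:diffusion_objective}). Pulling the exponential weight inside the expectation is equivalent to changing the sampling measure from $p_{\mathcal{D}}$ to (an unnormalized version of) $p_z$. Up to a normalizing constant independent of $\theta$, the loss becomes
\begin{equation*}
\mathbb{E}_{t,\,\epsilon,\,(s,a)\sim p_z}\Big[\big\|\epsilon-\epsilon_{\theta,z}(a_t,s,z,t)\big\|_2^2\Big],
\end{equation*}
which is exactly the standard conditional DSM loss associated with the forward process $a_t=\alpha_t a+\sigma_t \epsilon$ on the target distribution $p_z(a|s)=\pi_z^*(a|s)$ (the $s$ and $z$ dependence flows through the conditioning). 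I would then cite the classical result (Vincent's DSM identity, e.g.\ as used in~\citep{ho2020DDPM,score2023songyang}) that the minimizer $\epsilon_{\theta,z}^*$ of this objective satisfies, pointwise in $(s,z,t)$,
$\epsilon_{\theta,z}^*(a_t,s,z,t) = -\sigma_t\,\nabla_{a_t}\log p_{z,t}(a_t\mid s)$,
i.e.\ it recovers (a rescaling of) the conditional score of the noised reweighted distribution.

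Finally, with the correct score in hand, I would appeal to the standard forward/reverse diffusion equivalence: simulating the reverse-time SDE or the probability-flow ODE driven by $\epsilon_{\theta,z}^*$ produces samples whose law at $t=0$ is $p_z(a|s)=\pi_z^*(a|s)$. This establishes part (ii) and closes the loop between the weighted regression training of part (i) and sampling from $\pi_z^*$.

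The main obstacle I expect is purely bookkeeping: making the change-of-measure step between $p_{\mathcal{D}}$ and $p_z$ rigorous when the exponential weight is \emph{not} normalized (the normalizer depends on $s$ and $z$), and making sure the weighting does not bias the per-$s$ conditional. The clean way to handle this is to factor $p_{\mathcal{D}}(s,a)=p_{\mathcal{D}}(s)\mu(a|s)$, absorb the $s$-dependent normalizer into a constant that is irrelevant for $\theta$-optimization (since the per-sample loss is nonnegative and the minimizer is attained conditionally for each $s$), and then invoke DSM conditionally in $a$ given $(s,z)$. After that, the diffusion-theoretic conclusion is standard and can be imported directly from~\citep{Zheng2024fisor}.
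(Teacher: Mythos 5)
Your proposal is correct and follows essentially the same route as the paper: it identifies $\pi_z^*$ as an energy-tilted reweighting of the behavior distribution and reduces part (i)--(ii) to the weighted-regression-as-exact-energy-guidance result of \citep{Zheng2024fisor}, which the paper states as Lemma~\ref{lemma:weight-regression-fisor} and applies with energy $\mathcal{E}(s,a) = F(s,a,z)^\top z - V_{\pi_z}(s,z)$ and a task-conditioned noise network. The only difference is that you unpack that lemma yourself (change of measure, Vincent's DSM identity, reverse SDE/probability-flow ODE) and explicitly handle the $s$-dependent normalizer, details the paper imports as a black box.
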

With the above objective, we can avoid the need for learning the additional time-dependent guidance term and thereby reduce the complexity, while getting a stable policy.

For action selection, we employ a rejection sampling mechanism to boost policy performance. Specifically, we first sample \(K\) candidate actions \(\{a^{(1)}, \ldots, a^{(K)}\} \sim \pi_{z}(s)\) through the policy rollout. We then evaluate each candidate using the $Q$-function (approximated by $F(s,a,z)^\top z$) and select the action with the highest value:
\begin{equation} \label{eq:topk}
a^\star \triangleq \arg\max_{a \in \{a^{(1)}, \ldots, a^{(K)} \sim \pi_z(s)\}} F(s,a,z)^\top z.
\end{equation}
This two-stage approach ensures the policy balances both conservatism and optimism. The diffusion model generates diverse, in-distribution candidates, while the selection step identifies the action with the highest expected return. This combination of expressive generative modeling and value-based selection is crucial for achieving robust, high-performance zero-shot generalization.

\subsection{Expressivity Enhancement for Representation Modeling}

The effectiveness of any policy—particularly a highly expressive diffusion policy—in leveraging learned action weights hinges on accurate value estimation. Biased value estimates can degrade policy learning to simple imitation, failing to redistribute probability mass toward superior actions. To fully exploit the representational capacity of our diffusion policy, the underlying value representations must precisely capture complex task and dynamic relationships. We thus introduce enhanced network architectures for the forward ($F$) and backward ($B$) representations.

\begin{wrapfigure}{r}{0.52\textwidth}
    \centering
    \vspace{-10pt}
    \includegraphics[width=0.55\textwidth]{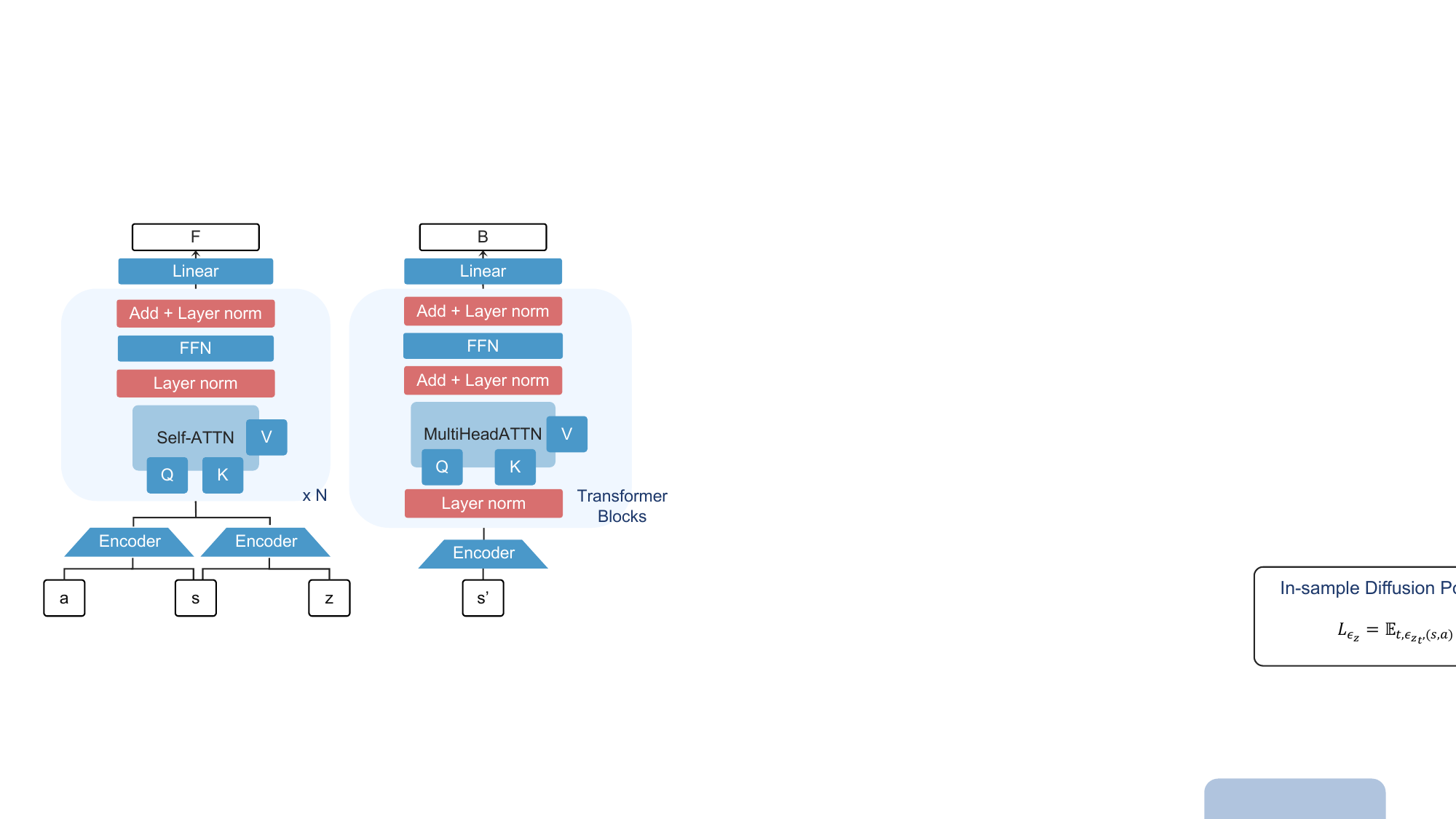} 
    \caption{ \small Architecture of BREEZE}
    \label{fig:architecture}
    \vspace{-20pt}
\end{wrapfigure}

As shown in Section~\ref{sec:motivation}, we empirically identify a pair of attention-based architectures that improve zero-shot performance over the original FB implementations. Below, we detail the design of our networks, illustrated in Figure~\ref{fig:architecture}.

\paragraph{Forward Network.}
The forward network encodes state-task and state-action pairs using two separate linear encoders, following Touati et al.~\citet{Touati2022DoesZR}. The state-task encoder captures task-conditioned dynamics, while the state-action encoder extracts agent behavior patterns. These two feature sets correspond to semantically distinct concepts in the MDP. To accurately approximate the measure $M(s, a, s', z)$ in Eq.~\ref{eq:fb}, the representation $F$ must integrate information from both behavioral and task contexts, capturing their interdependencies effectively. We therefore model the two encoded features as a length-2 embedding sequence and process them through self-attention blocks. This allows bidirectional feature refinement between task conditions and agent behaviors. The resulting representation is projected onto a $d$-dimensional space via linear layers.

\paragraph{Backward Network.} As defined in Eq.~\ref{eq:fb}, the backward representation $B$ encodes state-level structural information, serving as a global embedding of the environment. Intuitively, $B$ should employ a more complex architecture, maintaining orthogonality and enforcing $F$ alignment. We equip $B$ with a stack of standard transformer-based networks with multi-head attention. The final output is projected onto a $d$-dimensional space, consistent with the forward network.

These architecturally enhanced representations enable BREEZE to model complex relationships more accurately, leading to improved value estimates and policy performance. Further implementation details are provided in Appendix~\ref{appendix:Architectures}.

\begin{table}[t]
\centering
\begin{minipage}{\textwidth}
\centering
\caption{\small \textbf{IQM results on ExORL benchmark.} We report the best overall performance when all tasks perform well. Each value is averaged over 10 rollouts on 5 random seeds.}
\resizebox{0.92\linewidth}{!}{\scriptsize
\begin{tabular}{lc|cccccc}
\toprule
\textbf{Dataset} & \textbf{Domain} & \textbf{SF-LAP} & \textbf{FB} & \textbf{VCFB} & \textbf{MCFB} & \textbf{HILP} & \textbf{BREEZE}\\ 
\midrule
\multirow{3}{*}{RND}    & Walker    & $516 \pm 97$  & $661 \pm 10$   & $653 \pm 22$  & $659 \pm 51$  & $665 \pm 33$  & \colorbox{mine}{$693 \pm 16$} \\ [0.4em]
                        & Jaco      & $18 \pm 18$  & $32 \pm 23$   & $46\pm 35$    & $41 \pm 34$   & $52 \pm 21$    & \colorbox{mine}{$84 \pm 14$} \\ [0.4em]
                        & Quadruped & $330 \pm 165$ & $671 \pm 14$  & $609 \pm 29$   & $684 \pm 18$   & $674 \pm 28$  & \colorbox{mine}{$725 \pm 23$} \\ 
\cmidrule(lr){1-8}
\multirow{3}{*}{APS}    & Walker    & $324 \pm 24$ & $517 \pm 99$  & $487 \pm 75$  & $578 \pm 35$  & \colorbox{mine}{$643 \pm 22$}   & $637 \pm 21$ \\ [0.4em]
                        & Jaco      & $39 \pm 26$ & $22 \pm 14$    & $20 \pm 18$    & $22 \pm 3$    & $84 \pm 16$   & \colorbox{mine}{$132 \pm 16$} \\ [0.4em]
                        & Quadruped & $498 \pm 160$ & $668 \pm 29$  & $664 \pm 3$   & $659 \pm 50$   & $679 \pm 14$   & \colorbox{mine}{$698 \pm 24$} \\  
\cmidrule(lr){1-8}
\multirow{3}{*}{PROTO}  & Walker    & $382 \pm 129$ & $650 \pm 19$   & $611 \pm 94$  & $602 \pm 112$  & \colorbox{mine}{$715 \pm 31$}  & $663 \pm 19$ \\ [0.4em]
                        & Jaco      & $15 \pm 14$ & $21 \pm 26$    & $13 \pm 12$    & $20 \pm 21$   & $44 \pm 19$   & \colorbox{mine}{$74 \pm 26$} \\ [0.4em]
                        & Quadruped & $199 \pm 10$ & $222 \pm 107$  & $185 \pm 72$  & $219 \pm 135$  & $216 \pm 54$   & \colorbox{mine}{$389 \pm 44$} \\
\cmidrule(lr){1-8}
\multirow{3}{*}{DIAYN}  & Walker    & $239 \pm 79$ & $338 \pm 74$   & $268 \pm 67$  & $268 \pm 97$  & $461 \pm 64$  & \colorbox{mine}{$463 \pm 42$} \\ [0.4em]
                        & Jaco      & $32 \pm 26$  & $22 \pm 6$    & $24 \pm 3$    & $15 \pm 1$    & $52 \pm 7$    & \colorbox{mine}{$78 \pm 11$} \\ [0.4em]
                        & Quadruped & $207 \pm 168$ & $562 \pm 23$  & $511 \pm 37$  & $643 \pm 14$  & \colorbox{mine}{$670 \pm 4$}   & $666 \pm 2$ \\
\bottomrule
\end{tabular}}
\label{tab:iqmresult}
\end{minipage}

\vspace{0.5em}

% \begin{figure}[htbp]
\begin{minipage}[t]{0.73\textwidth}
    \centering
    \captionof{table}{\small \textbf{IQM results with 100k-dataset.} Experiments on randomly sampled 100k-transition data from each dataset on the ExORL benchmark}
    \resizebox{0.99\linewidth}{!}{\scriptsize
    \begin{tabular}{lc|cccc}
    \toprule
    \textbf{Dataset} & \textbf{Domain} & \textbf{FB} & \textbf{VCFB} & \textbf{MCFB}  & \textbf{BREEZE (ours)}\\ 
    \midrule
    \multirow{3}{*}{RND}    & Walker    & $264 \pm 33$  & $350 \pm 29$  & $287 \pm 48$  & \colorbox{mine}{$525 \pm 13$} \\ [0.4em]
                            & Jaco      & $7 \pm 5$     & $9 \pm 2$     & $13 \pm 7$    & \colorbox{mine}{$36 \pm 5$} \\ [0.4em]
                            & Quadruped & $176 \pm 123$ & $233 \pm 52$  & $123 \pm 61$  & \colorbox{mine}{$474 \pm 21$} \\ 
    \cmidrule(lr){1-6}
    \multirow{3}{*}{APS}    & Walker    & $370 \pm 66$  & $416 \pm 10$  & $389 \pm 77$  & \colorbox{mine}{$539 \pm 15$} \\ [0.4em]
                            & Jaco      & $21 \pm 17$   & $14 \pm 13$   & $29 \pm 27$   & \colorbox{mine}{$38 \pm 9$} \\ [0.4em]
                            & Quadruped & $340 \pm 29$  & $351 \pm 57$  & $318 \pm 122$ & \colorbox{mine}{$556 \pm 52$} \\  
    \cmidrule(lr){1-6}
    \multirow{3}{*}{PROTO}  & Walker    & $415 \pm 19$  & $513 \pm 31$  & $463 \pm 11$  & \colorbox{mine}{$553 \pm 18$} \\ [0.4em]
                            & Jaco      & $16 \pm 2$    & $18 \pm 12$   & $12 \pm 7$    & \colorbox{mine}{$29 \pm 12$} \\ [0.4em]
                            & Quadruped & $198 \pm 111$ & $106 \pm 103$ & \colorbox{mine}{$240 \pm 134$}  & $181 \pm 60$ \\
    \cmidrule(lr){1-6}
    \multirow{3}{*}{DIAYN}  & Walker    & $202 \pm 94$  & $210 \pm 81$  & $196 \pm 26$  & \colorbox{mine}{$330 \pm 43$} \\ [0.4em]
                            & Jaco      & $17 \pm 11$   & $18 \pm 5$    & $20 \pm 26$   & \colorbox{mine}{$22 \pm 15$} \\ [0.4em]
                            & Quadruped & $295 \pm 46$  & $288 \pm 48$  & $286 \pm 34$  & \colorbox{mine}{$446 \pm 78$} \\
    \bottomrule
    \end{tabular}}
    \label{tab:smallsamlpeiqmresult}
\end{minipage}
\hfill
\begin{minipage}[t]{0.26\textwidth}
    \vspace{10pt}
    \centering
    \begin{subfigure}[t]{\textwidth}
        \includegraphics[width=0.98\textwidth]{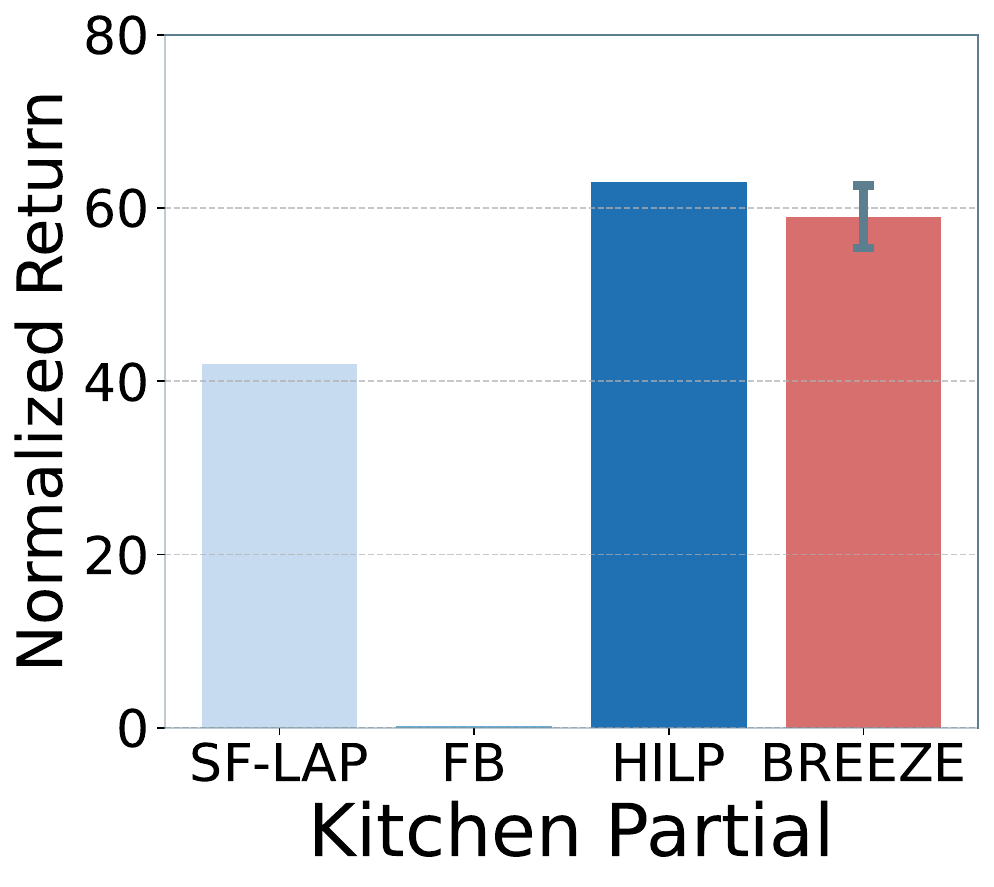}
    \end{subfigure}
    \begin{subfigure}[t]{\textwidth}
        \includegraphics[width=0.98\textwidth]{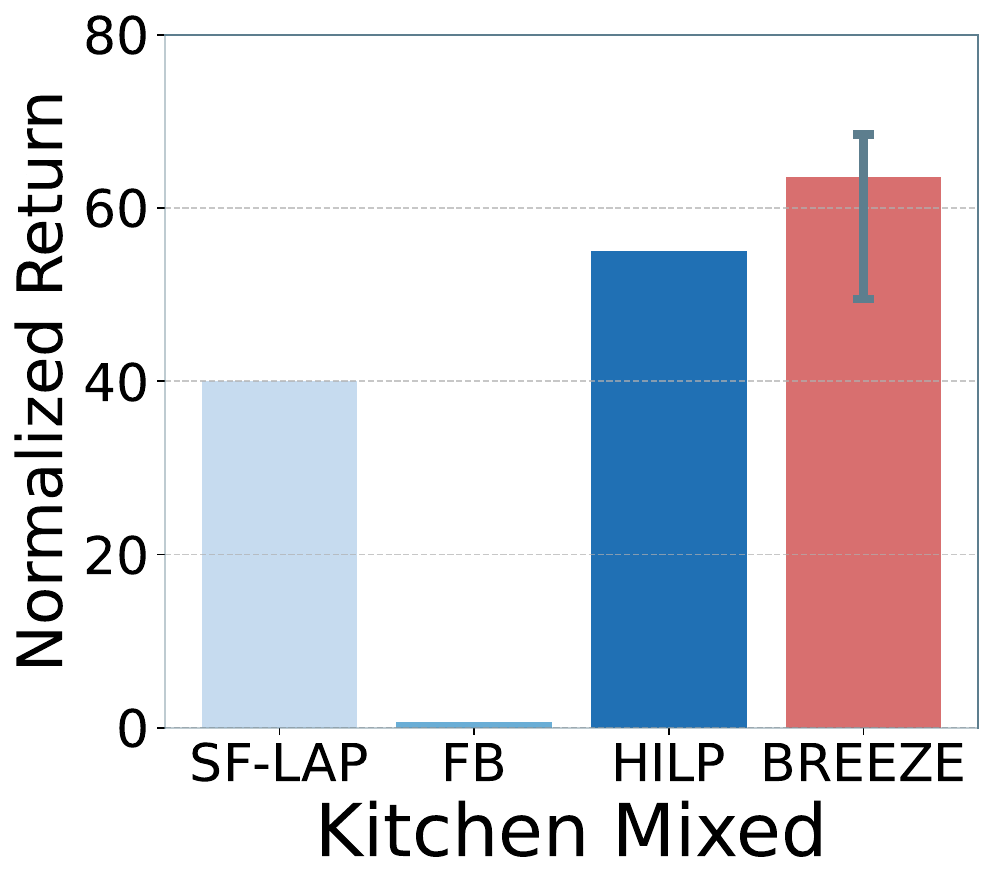}
    \end{subfigure}
    \captionof{figure}{\label{fig:kitchen_result}\small Normalized scores on Kitchen environment}
\end{minipage}
\end{table}

\section{Experiments}
We conduct extensive evaluations of BREEZE against previous offline zero-shot RL algorithms across various challenging domains with distinct tasks. We also present the ablations on component choices and hyperparameters.

\paragraph{Environmental setup.} Our main experiments are conducted on the ExORL benchmark~\citep{yarats2022dontchangealgorithmchange}, which provides a variety of datasets collected by several unsupervised RL algorithms~\citep{laskin2021URLB}. We select datasets collected by 4 algorithms: \textit{RND}~\citep{Burda2018ExplorationBR}, \textit{APS}~\citep{Liu2021APSAP}, \textit{DIAYN}~\citep{Eysenbach2018DiversityIA}, and \textit{PROTO}~\citep{Yarats2021ReinforcementLW}. The experiments span 3 domains and 12 tasks (Walker: Stand, Walk, Run, Flip; Jaco: Reach Top/Bottom Left/Right; Quadruped: Stand, Walk, Run, Jump), bringing the total to 48 state-based complex tasks for performance calculation. In addition, we consider four challenging multi-stage tasks in the D4RL~\citep{fu2021d4rldatasetsdeepdatadriven} Franka Kitchen domain~\citep{lynch2019play} with two datasets (mixed and partial), which require long-horizon sequential manipulation on 4 subtasks. The overall setup involves 2 locomotion domains and 2 manipulation (goal-reaching) domains. For all goal-conditioned domains, we use the backward representation to calculate the goal features as the task vector $z$.

\paragraph{Baselines.} We only consider offline algorithms that could perform a zero-shot policy generalization for our evaluation. Our baselines include: 1) \textit{SF-LAP}~\citep{Barreto2016SuccessorFF,Borsa2018UniversalSF}: successor feature-based method with basic features to be \textit{Laplacian Eigenfunction (LAP)}~\citep{wu2018laplacian}, 2) \textit{vanilla FB}~\citep{Touati2021FB}, 3) \textit{VCFB} and \textit{MCFB}~\citep{Jeen2023ZeroShotRL}: FB-based offline algorithms with \textit{CQL}-style regularizer~\citep{kummar2020cql} on successor measure $M^\pi$ and $Q$-value separately for OOD issue avoidance, 4) \textit{HILP}~\citep{park2024foundationpolicieshilbertrepresentations}: a state-based representation modeling algorithm keeping distance-preserving temporal structures in latent space having zero-shot capability.

\paragraph{Evaluation.} Our experiments are designed to evaluate two key aspects: \textbf{zero-shot performance} and \textbf{robustness}. To assess the overall zero-shot capability, we experiment on the full transitions of each dataset on ExORL and D4RL Kitchen. For robustness, we first present performance curves across training time to demonstrate learning stability and then conduct experiments on a uniformly subsampled 100,000 transitions to examine performance in a limited and low-quality data coverage scenario, following the evaluation process of~\citet{Jeen2023ZeroShotRL}. We record the Interquartile Mean (IQM), a robust metric that reduces the influence of outliers. We evaluate the checkpoints every 10,000 updates with 10 rollouts for each experiment. On Kitchen, we multiply the returns by 25 for normalization, following~\citet{park2024foundationpolicieshilbertrepresentations}.
Further implementation and experimental details are provided in Appendix~\ref{appendix:Experimental_details}.

\subsection{Experimental Results}
We analyze the findings of our experiments through the following key questions:
\begin{itemize}[leftmargin=*]
\item \textit{Does BREEZE possess zero-shot capability for novel tasks?}\quad
    Following previous studies~\citep{agarwal2021deep,Jeen2023ZeroShotRL}, we record the highest aggregated mean performance across random seeds in the offline pretrain stage. Table \ref{tab:iqmresult} reports this score across all tasks on each domain of ExORL, averaged on 5 random seeds, with standard deviations. BREEZE achieves the best or near-best returns in most domains, demonstrating superior zero-shot generalization capability. Moreover, BREEZE can enhance \textit{vanilla FB}'s generalization capability in long-horizon tasks. As shown in Figure~\ref{fig:kitchen_result}, we draw the box with the performance reported in the previous study~\cite{park2024foundationpolicieshilbertrepresentations}, and record BREEZE's top averaged returns across 4 random seeds. While vanilla FB struggles in these long-horizon tasks, BREEZE achieves significantly higher performance, suggesting its ability to better leverage the dataset for complex sequential manipulation.

    More results are provided in Appendix \ref{appendix:additional_result}.

\item \textit{How does BREEZE quantitatively compare to baselines in terms of stability and convergence speed?}\quad
    We provide the learning curves in \textit{RND}~\citep{Burda2018ExplorationBR} datasets as Figure \ref{fig:curves}. In locomotion domains (i.e., Quadruped and Walker), BREEZE converges faster to higher performance, with smoother curves and lower variance (as shown by the small shaded area). In the manipulation domain (i.e., Jaco), BREEZE substantially outperforms all baselines with a higher learning speed. Overall, BREEZE demonstrates faster convergence and enhanced stability across both locomotion and manipulation scenarios.
    
    % Refer to Appendix~\ref{appendix:curves} for more training time performance curves.
    
\item \textit{How does BREEZE perform facing different quality, diversity of datasets?}\quad
    Table~\ref{tab:smallsamlpeiqmresult} shows results on 100,000-transition subsets of ExORL datasets. BREEZE maintains a pronounced advantage over FB-based baselines in this small-sample regime. Compared to explicit constraint methods (MCFB/VCFB), these results highlight the importance of behavior alignment across different data regimes. BREEZE also exhibits superior stability, as shown in Figure~\ref{fig:curves_small}.
    
\end{itemize}
Refer to Appendix~\ref{appendix:curves} for more learning curves.

\begin{figure*}[t]
\centering
\includegraphics[width=1\linewidth]{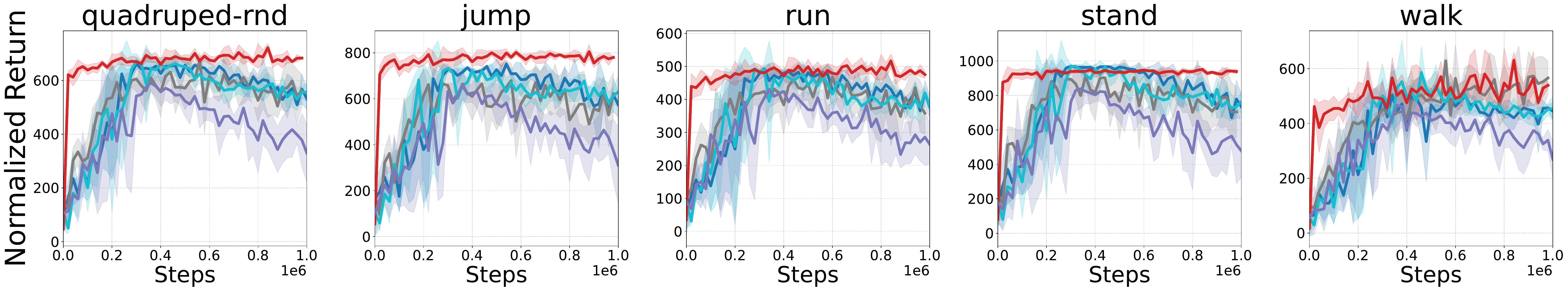}
\includegraphics[width=1\linewidth]{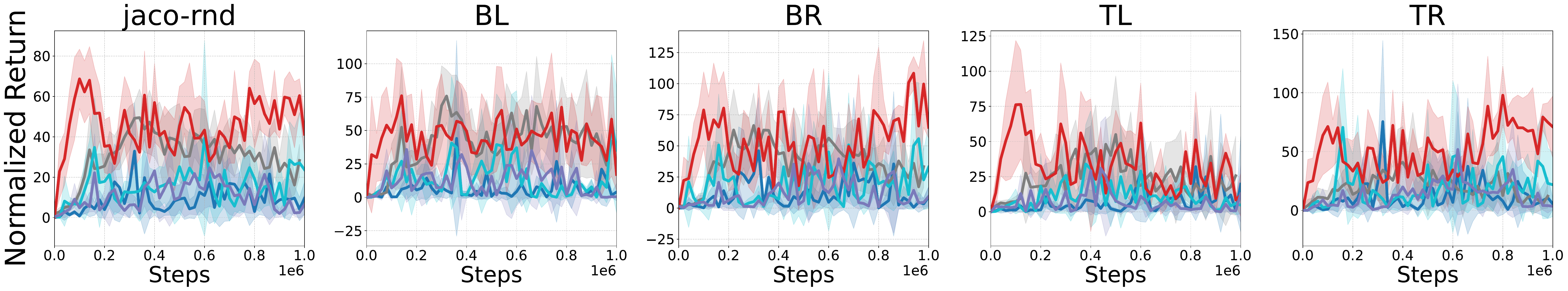}
\includegraphics[width=1\linewidth]{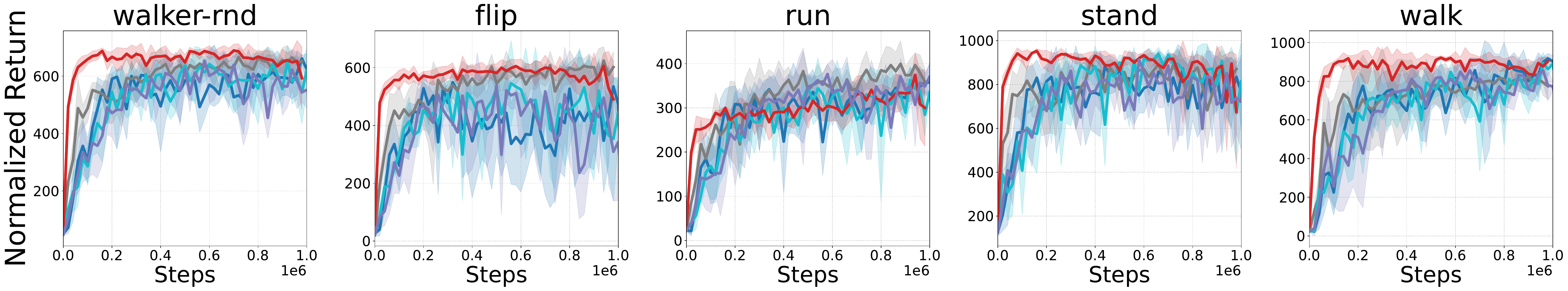}
\includegraphics[width=0.9\linewidth]{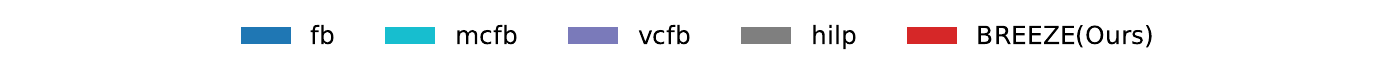}
\captionsetup{skip=3pt}
\caption{\label{fig:curves} \small \textbf{Learning Curves on ExORL RND.} The solid lines represent the average return over 5 random seeds, while the shaded area denotes the standard deviation.} 
\end{figure*}
\begin{figure*}[t]
\centering
\includegraphics[width=1\linewidth]{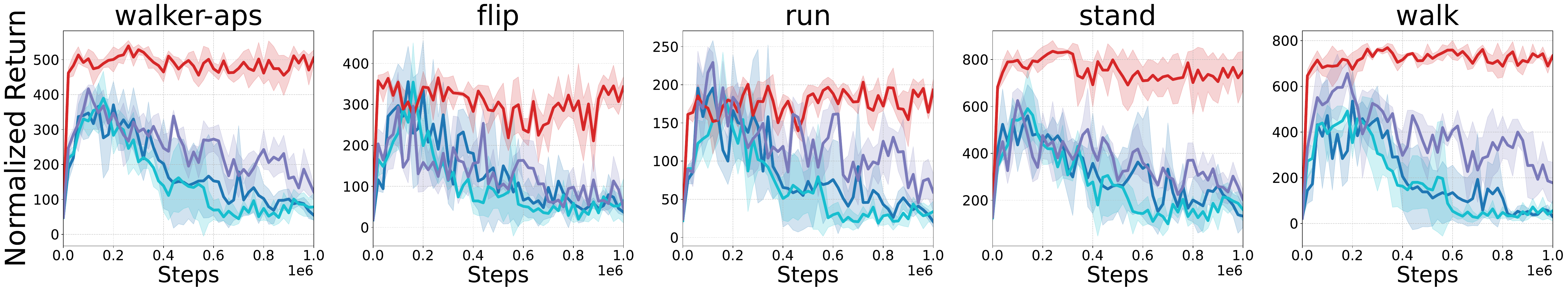}
\includegraphics[width=0.9\linewidth]{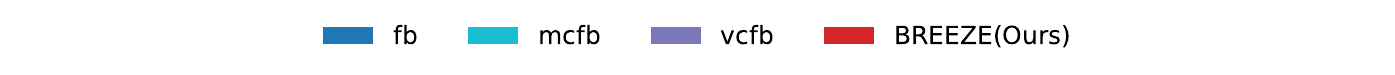}
\captionsetup{skip=3pt}
\caption{\label{fig:curves_small} \small \textbf{Learning Curves on 100k-subsample transitions on ExORL APS Walker domain.} The solid lines represent the average return over 3 random seeds, while the shaded area denotes the standard deviation.} 
\end{figure*}

\begin{minipage}[t]{1\textwidth}
\begin{minipage}{0.55\textwidth}
  \centering
  \resizebox{1\linewidth}{!}{\scriptsize
  \begin{tabular}{l|ccc} 
  \toprule
  \textbf{Dataset}  & \textbf{w/o FB Enhancement} & \textbf{w/o Diffusion} & \textbf{BREEZE} \\ 
  \midrule
  Walker-RND        & 646$\pm$18    & \textbf{707$\pm$13}   & 693 \\ 
  Jaco-RND          & 80$\pm$24     & 62$\pm$7              & \textbf{84} \\ 
  Quadruped-RND     & 685$\pm$13    & 530$\pm$33            & \textbf{725} \\
  \cmidrule(lr){1-4}
  Walker-APS        & 614$\pm$48    & 587$\pm$53        & \textbf{637} \\ 
  Jaco-APS          & 82$\pm$13     & 45$\pm$50         & \textbf{132} \\ 
  Quadruped-APS     & 655$\pm$20    & 568$\pm$19        & \textbf{698} \\ 
  \bottomrule
  \end{tabular}}
  \captionof{table}{\label{table:component_abl}Ablation on necessity of FB enhancement and diffusion policy (3 seeds).}
 % \vspace{10pt}
\end{minipage}%
\hspace{8pt}
\begin{minipage}{0.2\textwidth}
\vspace{-5pt}
\centering
\includegraphics[width=0.99\textwidth]{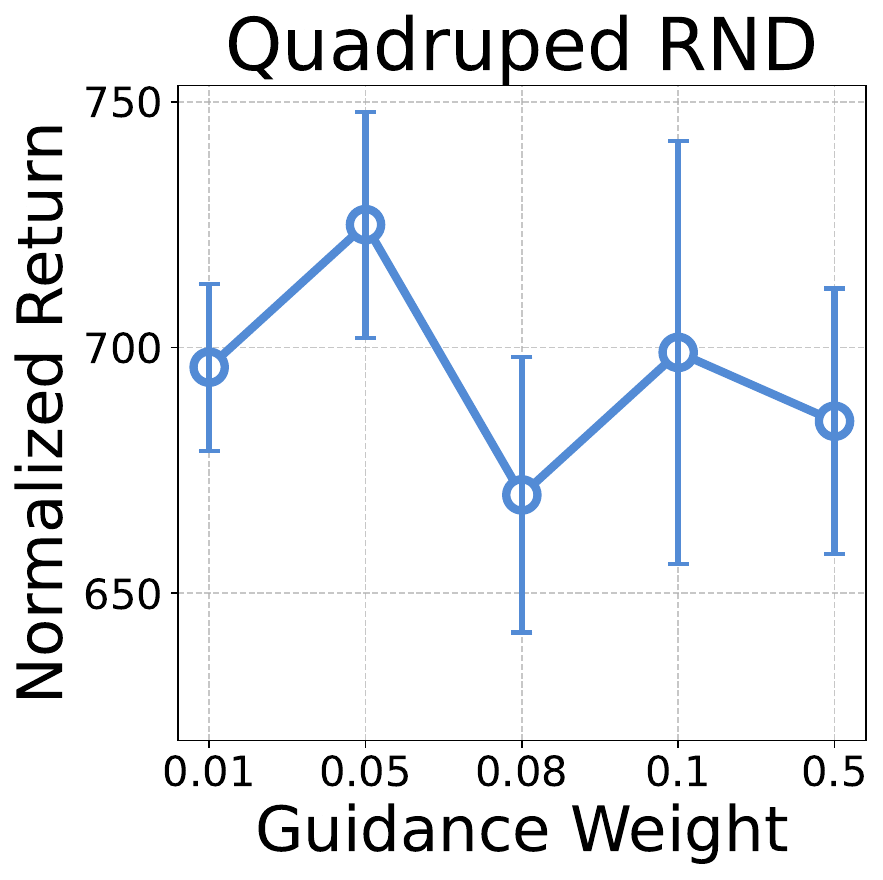}
\captionof{figure}{\label{img:alpha_abl}Ablation on the temperature.}
\end{minipage}%
\hspace{5pt}
\begin{minipage}{0.2\textwidth}
\vspace{-5pt}
\centering
\includegraphics[width=0.99\textwidth]{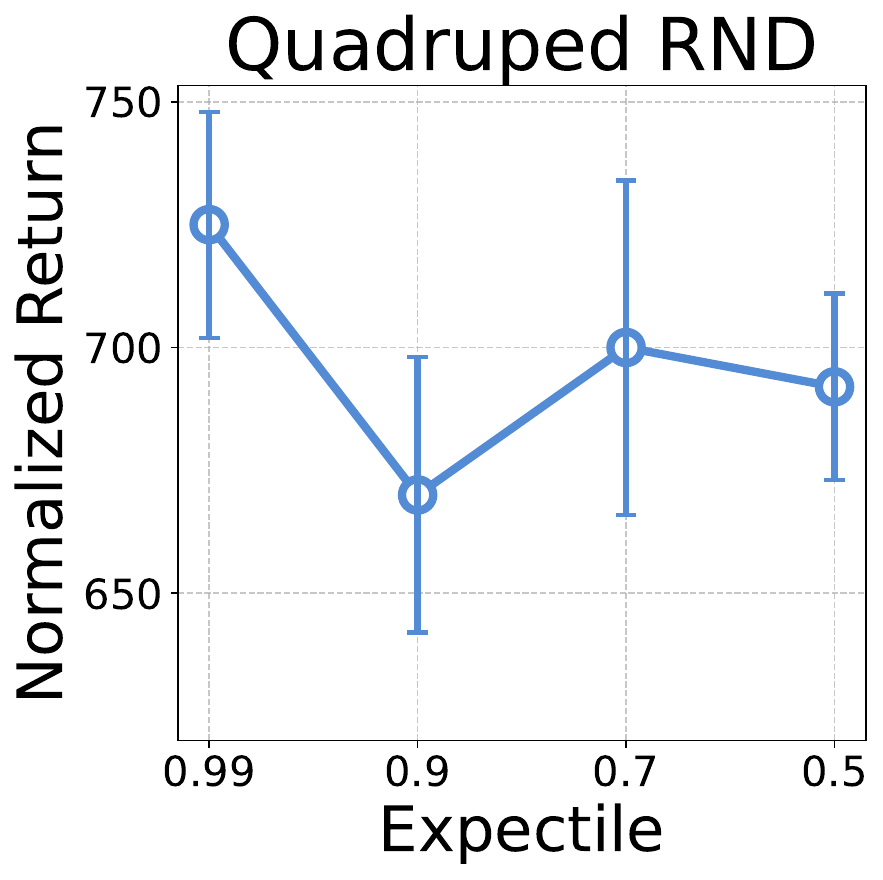}
\captionof{figure}{\label{img:tau_abl}Ablation on the expectile factor.}
\end{minipage}%
% \vspace{-15pt}
\end{minipage}%

\subsection{Ablation Studies}
\paragraph{Hyperparameter choices.}
We ablate two key hyperparameters in our behavior-constrained algorithm: the expectile value $\tau$ and the temperature $\alpha$. As shown in Figure \ref{img:alpha_abl} and Figure~\ref{img:tau_abl} (Quadruped domain, RND dataset, 3 random seeds), performance improves near monotonically with $\tau$ and peaks at $\alpha=0.05$, and similarly to other domains. This indicates that, while in-sample conservatism is crucial, the algorithm benefits from more aggressive, value-weighted optimization. Consequently, we select $\tau=0.99$ and $\alpha=0.05$ as defaults in ExORL, optimally balancing conservatism with performance for downstream tasks.

\paragraph{Necessity of each component.} We ablate our two core innovations—an enhanced representation model and a diffusion policy—to assess their necessity in RND and APS datasets. BREEZE is ablated using: a gaussian policy variant (\textit{w/o Diffusion}) and the baseline architecture variant (\textit{w/o FB Enhancement}). We process transitions with data normalization for the diffusion policy, as in a commonly used preprocessing step in previous diffusion-based methods. We cancel this process and return to the default policy setting to have a reasonable comparison of the components' contributions. As shown in Table~\ref{table:component_abl}, each component somehow individually slightly boosts performance, but their synergy creates a leap in zero-shot capability. This demonstrates a critical interdependence: the design of BREEZE is a logical combination to handle the system issue.
\section{Related Works}
\paragraph{Unsupervised zero-shot RL.}
Recent advancements in unsupervised zero-shot RL \cite{laskin2021URLB, park2024foundationpolicieshilbertrepresentations, levine2024FRE} mainly build upon the utilization of successor representation (SR)~\citep{SR1993}, with two main branches: successor feature (SF)~\citep{Barreto2016SuccessorFF,Borsa2018UniversalSF} and forward-backward representation (FB)~\citep{Touati2021FB,Touati2022DoesZR}. Both approaches require decomposing and linearizing reward-aware dynamics to enable zero-shot inference for arbitrary downstream rewards. Other approaches relax the linearity assumption by modeling SR as full distributions using generative models~\citep{gammamodel,GHM,distributionSR,Farebrother2025TemporalDF}. Among all, FB has emerged as a pivotal framework, factorizing successor measures into forward and backward components and avoiding the potential collapse issue in SF at the same time. Extensions like MCFB/VCFB~\citep{Jeen2023ZeroShotRL} build on offline RL, enforcing explicit value constraints to mitigate the over-estimation issues that arise from updating on OOD rollouts. FB-CPR~\citep{Tirinzoni2025ZeroShotWH} introduces a regularization of the $Q$-estimation by imitating pre-collected datasets in an online setting. FB-AWARE~\citep{Cetin2024FinerBF} introduces an autoregressive mechanism to handle the over-linearization problem in vanilla FB. Unlike SR-based approaches, HILP~\citep{park2024foundationpolicieshilbertrepresentations} structures the latent space so that distances correspond to transition times between states, enabling hierarchical, goal-conditioned, and zero-shot policy learning. Meanwhile, functional-encoder methods~\citep{levine2024FRE,amy2024FE} directly encode task information by pretraining on manually designed reward functions, but are constrained by the need for extensive reward engineering and fine-tuning.

\paragraph{Offline RL.} Numerous studies address offline RL, in which learning is restricted to a fixed dataset without online interaction, leading to a distributional shift. The earliest approaches use policy constraints, enforcing the learned policy to stay close to the behavior policy~\citep{Fujimoto2018OffPolicyDR,kumar2019stabilizing,fujimoto2021minimalist,li2023proto} or the distribution coverage to stay near the dataset~\citep{li2023when,cheng2023look}. Another established line of work employs value regularization, which directly penalizes the value estimates of OOD actions~\citep{kummar2020cql,Kostrikov2021fishercritic,niu2022when,yang2022rorl,xu2022constraints}. Instead of an explicit constraint, in-sample learning methods~\citep{Kostrikov2021OfflineRL,Xu2023OfflineRW,xu2022a,wang2023offline} improve stability by learning values and policies only from state-action pairs within the dataset.

Moving beyond conventional Gaussian policies, recent studies have highlighted the importance of modeling multimodal action distributions~\cite{Wang2022DiffusionPA,idql,Chen2022OfflineRL}. Diffusion models~\citep{diffusion2015Sohl,ho2020DDPM,score2023songyang}, known for their strong multi-modal modeling capability, have been well studied in imitation and trajectory modeling~\citep{pmlr-v162-janner22a,zheng2025diffusionbased,chi2023diffusionpolicy,ni2023MetaDiffuser,tan2025flow}, and have recently been integrated into offline RL frameworks. Some approaches use the value function as energy guidance for sampling~\citep{CEP,Mao2024DiffusionDICEID}. Other works, such as SfBC~\citep{Chen2022OfflineRL}, IDQL~\citep{idql}, and EDP~\citep{Kang0DPY23}, extract a diffusion policy via weighted regression under the in-sample learning framework.
\section{Conclusion}

In this work, we present BREEZE, a novel framework that mitigates the improper scaling issue in existing FB-based zero-shot RL methods. Specifically, BREEZE tackles two critical limitations: offline extrapolation errors and constrained expressivity. Our solution integrates behavior-regularized value estimation with a task-conditioned diffusion policy, enabling stable in-sample learning while capturing complex, multimodal action distributions. Coupled with expressive attention-based representations, BREEZE more accurately models dynamics and value functions. Extensive experiments demonstrate that BREEZE consistently outperforms existing methods across various benchmarks, underscoring the importance of calibrated regularization and sufficient model capacity for zero-shot generalization. While BREEZE demonstrates improved robustness, its primary limitation is the increased computational cost from diffusion-based sampling, which is a common trade-off for utilizing high-performance generative policies. Further discussion is provided in Appendix~\ref{appendix:limitation}.
\section*{Acknowledgment}
This work is supported by the Wuxi Research Institute of Applied Technologies, Tsinghua University, under Grant 20242001120, and funded by Horizon Robotics, AsiaInfo, and the Xiongan AI Institute.

\bibliography{references}
\bibliographystyle{plainnat}

% %%%%%%%%%%%%%%%%%%%%%%%%%%%%%%%%%%%%%%%%%%%%%%%%%%%%%%%%%%%%
\newpage
\appendix
\section{Theoretical Proofs}
\label{appendix:Theoretical_Interpretations}

\subsection{Proof of Proposition \ref{pro:weight_bc}}
\setcounter{proposition}{0}
\begin{proposition}\label{pro:weight_bc-appendix}
The closed-form optimal policy for Eq.~(\ref{obj}) follows the form:
\begin{equation} \label{weight_bc-appendix}
\pi_z^*(s) \propto \mu(a|s) \exp\left(\alpha \cdot (F(s, a, z)^\top z - V_{\pi_z}(s, z))\right), 
\end{equation}
where $\mu$ denotes the behavior distribution of the dataset.
\end{proposition}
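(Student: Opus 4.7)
The proof plan is a standard Lagrangian derivation for a KL-constrained policy optimization, analogous to those used in AWR, AWAC, and IQL. Writing $A(s,a,z) := F(s,a,z)^\top z - V_{\pi_z}(s,z)$ for brevity, the idea is to treat the problem pointwise in $s$: for each state $s$ the optimization is
\begin{equation}
\max_{\pi_z(\cdot|s)} \int \pi_z(a|s)\, A(s,a,z)\, da \quad \text{s.t.} \quad \int \pi_z(a|s)\, da = 1, \quad D_{\text{KL}}(\pi_z(\cdot|s)\|\mu(\cdot|s)) \leq \epsilon,
\end{equation}
so the stochastic constraint reduces to a deterministic functional problem.

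The plan is to form the Lagrangian
\begin{equation}
\mathcal{L}(\pi_z, \tfrac{1}{\alpha}, \lambda) = \int \pi_z(a|s) A(s,a,z)\, da + \tfrac{1}{\alpha}\bigl(\epsilon - D_{\text{KL}}(\pi_z(\cdot|s)\|\mu(\cdot|s))\bigr) + \lambda\bigl(1 - \int \pi_z(a|s)\, da\bigr),
\end{equation}
where $1/\alpha \geq 0$ is the multiplier associated to the KL constraint and $\lambda$ enforces normalization. I would then take the functional derivative with respect to $\pi_z(a|s)$, using the fact that $\tfrac{\delta}{\delta \pi_z(a|s)} D_{\text{KL}}(\pi_z\|\mu) = \log(\pi_z(a|s)/\mu(a|s)) + 1$, and set it to zero. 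This yields $A(s,a,z) - \tfrac{1}{\alpha}\bigl(\log(\pi_z(a|s)/\mu(a|s)) + 1\bigr) - \lambda = 0$, which can be solved algebraically for $\pi_z(a|s)$.

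Rearranging gives $\pi_z^*(a|s) = \mu(a|s)\exp\bigl(\alpha A(s,a,z) - \alpha\lambda - 1\bigr)$, and absorbing the $s$-dependent but $a$-independent factor $\exp(-\alpha\lambda - 1)$ into a partition function $Z(s)$ determined by the normalization constraint produces the stated form
\begin{equation}
\pi_z^*(a|s) = \tfrac{1}{Z(s)}\, \mu(a|s)\exp\bigl(\alpha\bigl(F(s,a,z)^\top z - V_{\pi_z}(s,z)\bigr)\bigr),
\end{equation}
which is exactly Eq.~(\ref{weight_bc-appendix}) up to the implicit normalization indicated by $\propto$.

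Since the objective is linear in $\pi_z$ and the KL constraint is convex in $\pi_z$, the feasible set is convex and the problem is concave, so the stationary point is the global maximum; I would note this briefly to justify that the Lagrangian stationary condition is sufficient. The only nontrivial subtlety, rather than a genuine obstacle, is the interplay between the KL multiplier $1/\alpha$ and $\epsilon$: strictly, $\alpha$ is determined by $\epsilon$ via the KL-budget equation, but in practice $\alpha$ is treated as a free temperature hyperparameter, as the proposition already implies. I would mention that the problem is well-posed under mild support conditions (the optimal $\pi_z^*$ is absolutely continuous with respect to $\mu$, preventing support mismatch), which matches the behavior-regularization intent of Section~\ref{sec:behavior_constraint}.
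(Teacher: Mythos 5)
Your proposal is correct and follows essentially the same route as the paper: form the Lagrangian with multipliers for the normalization and KL constraints, take the functional derivative (using $\tfrac{\delta}{\delta \pi_z} D_{\mathrm{KL}}(\pi_z\|\mu) = \log(\pi_z/\mu) + 1$), set it to zero, solve for $\pi_z$, and absorb the action-independent terms into the partition function $Z(s,z)$ with $\alpha$ identified as the reciprocal of the KL multiplier. Your added remarks on concavity (justifying sufficiency of stationarity) and on the implicit determination of $\alpha$ by $\epsilon$ are sound refinements that the paper omits, but they do not change the argument.
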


\begin{proof}
Construct the Lagrangian with dual variables \(\lambda_1\) and \(\lambda_2\) for the equality and KL-divergence constraints, respectively:
\begin{equation}
\mathcal{L}(\pi_z, \lambda_1, \lambda_2) = \mathbb{E}_{a \sim \pi_z} \left[ F(s, a, z)^\top z - V_{\pi_z}(s, z) \right] - \lambda_1 \left( \int \pi_z(a|s) \, da - 1 \right) - \lambda_2 \left( D_{\text{KL}}(\pi_z \| \mu) - \epsilon \right).
\end{equation}

Taking the functional derivative of \(\mathcal{L}\) with respect to \(\pi_z(a|s)\) and setting it to zero yields:
\begin{equation}
\frac{\partial \mathcal{L}}{\partial \pi_z} = F(s, a, z)^\top z - V_{\pi_z}(s, z) - \lambda_2 \left( \log \pi_z(a|s) - \log \mu(a|s) + 1 \right) - \lambda_1 = 0.
\end{equation}

Solving above for $\pi_z$ and absorbing constant terms into the normalization factor \(Z(s, z)\) gives:
\begin{equation}
\pi_z^*(a|s) = \frac{\mu(a|s)}{Z(s, z)} \exp\left( \alpha \cdot (F(s, a, z)^\top z - V_{\pi_z}(s, z)) \right).
\end{equation}
where $\alpha=1/\lambda_2$ and the partition function is defined as:
\begin{equation}
Z(s, z) = \int \mu(a|s) \exp\left( \alpha \cdot (F(s, a, z)^\top z - V_{\pi_z}(s, z)) \right) \, da,
\end{equation}
This ensures the policy is properly normalized. The proof is completed.
\end{proof}

\subsection{Proof of Proposition \ref{thm:weighted-regression}}
We begin by conducting a theoretical review of diffusion models, and then demonstrate how to derive the optimal policy through task-conditioned weighted regression.

\textbf{Diffusion (probabilistic) models.} Diffusion models~\citep{diffusion2015Sohl,ho2020DDPM,score2023songyang} are powerful generative models designed to learn and replicate complex distributions. Given an $N$-sample dataset $\mathcal{D}=\{x_0^{i}\}^N_{i=1}$ from an unknown distribution $q(x_0)$, diffusion models can accurately approximate $q(x_0)$ and generate new samples from the approximation by the following two processes: 
\begin{itemize}[leftmargin=*]
    \item \textbf{Forward process.} The forward process gradually adds Gaussian noise on sample \( x_0 \) from time 0 to \( x_T \) over \( T \) timesteps, with transformation distribution follows:
    \begin{equation}
   q_{t0}(x_t|x_0) = \mathcal{N}(x_t|\alpha_t x_0, \sigma_t^2 I),
    \end{equation}
    where \( \alpha_t \) and \( \sigma_t \) are predefined noise schedules ensuring \( q_T(x_T|x_0) \approx q_T(x_T) \approx \mathcal{N}(x_T|0, \tilde{\sigma}^2 I) \) for some $\tilde{\sigma}>0$ that is independent of $x_0$.

    \item \textbf{Backward process.} Starting from \( x_T \sim \mathcal{N}(x_T| 0, \tilde{\sigma}^2 I) \), diffusion models reconstruct the original data \( x_0 \) by solving diffusion ODE/SDE \cite{song2021scorebased} from $T$ to 0:
   \begin{equation} \label{eq:ode}
   \text{(Diffusion ODE) }\quad dx_t = \left[ f(t)x_t - \frac{1}{2}g^2(t)\nabla_{x_t}\log q_t(x_t) \right] dt,
   \end{equation}
   \begin{equation} \label{eq:sde}
   \text{(Diffusion SDE) }\quad dx_t = \left[ f(t)x_t - g^2(t)\nabla_{x_t}\log q_t(x_t) \right] dt + g(t)d\tilde{\omega},
   \end{equation}
   where \( f(t) = \frac{d \log \alpha_t}{dt} \), \( g^2(t) = \frac{d\sigma_t^2}{dt} - 2 \frac{d \log \alpha_t}{dt} \sigma_t^2 \)~\citep{kingma2023variationaldiffusionmodels}, and \( \tilde{\omega} \) is a standard Wiener process when time flows backwards from $T$ to 0. 
\end{itemize}
Diffusion models train a neural network $\epsilon_{\theta}(x_t,t)$ parameterized by $\theta$ with the following objective~\citep{ho2020DDPM,score2023songyang}:
\begin{equation}
\min_{\theta} \mathbb{E}_{x_t, \, t\sim\mathcal{U}([0,T]), \epsilon\sim\mathcal{N}(0, I)} \left[\|\epsilon-\epsilon_{\theta}(x_t,t)\|_2^2\right],
\end{equation}
where \( x_0 \sim q_0(x_0) \), \( x_t = \alpha_t x_0 + \sigma_t \epsilon \). The fitted $\theta$ can be use to estimate the score function $\nabla_{x_t}\log q_t(x_t)$ by $\epsilon_{\theta}(x_t,t) \approx - \sigma_t \nabla_{x_t}\log q_t(x_t)$ and substituted into Eq.~(\ref{eq:ode}-\ref{eq:sde}) for $x_0$ generation.

\textbf{Energy-guided diffusion sampling.} To bias sampling toward preferred distributions, the prior energy-guidance framework~\citep{CEP} provides the re-weighted distribution $p_0(x_0)$:
\begin{equation} \label{eq:energy-prop-origin}
p_0(x_0) \propto q_0(x_0) f(\mathcal{E}(x_0)),
\end{equation}
where \( q_0(x_0) \) is the unknown data distribution, \( \mathcal{E}(x_0) \) is any form of energy function that encodes human preferences, and \( f(x) \geq 0 \) can be any non-negative function. To solve this sampling problem, previous approaches train a separate time-dependent classifier $\mathcal{E}_t$~\citep{CEP,pmlr-v162-janner22a}, which is computationally expensive and may introduce additional training errors. To address this, previous work~\citep{Zheng2024fisor} introduces weighted regression method to perform exact energy guidance:
\begin{lemma}[\textit{Weighted regression as exact energy guidance}] \label{lemma:weight-regression-fisor}
We can sample $x_0 \sim p_0(x_0)$ where $p_0$ is the weighted distribution in Eq.~(\ref{eq:energy-prop-origin}) by (i) optimizing the weighted regression loss:
\begin{equation} \label{eq:weighted-regression-fisor-thm}
\min_{\theta} \mathbb{E}_{x_t, t\sim\mathcal{U}([0,T]), \epsilon\sim\mathcal{N}(0, I)} \left[f\big(\mathcal{E}(x_0)\big)\|\epsilon-\epsilon_{\theta}(x_t,t)\|_2^2\right],
\end{equation}
where \( x_0 \sim q_0(x_0) \), \( x_t = \alpha_t x_0 + \sigma_t \epsilon \); and (ii) substituting the obtained $\epsilon_\theta$ into diffusion ODEs/SDEs~\citep{song2021scorebased} solving process.
\end{lemma}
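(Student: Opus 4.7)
The plan is to prove the lemma via a change-of-measure argument that reinterprets the weighted regression loss as the standard denoising score-matching loss, but with $q_0$ replaced by the target distribution $p_0$. The standard diffusion training theory then immediately implies that the minimizer $\epsilon_\theta^\star$ is the scaled score function of the forward-diffused $p_t$, so plugging it into the diffusion ODE/SDE and integrating from $t=T$ down to $t=0$ produces samples from $p_0$.

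Concretely, I would first let $Z := \int q_0(x_0) f(\mathcal{E}(x_0))\,\mathrm{d}x_0$ be the (assumed finite, positive) normalizing constant, so that $p_0(x_0) = q_0(x_0) f(\mathcal{E}(x_0))/Z$. Next, I would rewrite the weighted objective of Eq.~(\ref{eq:weighted-regression-fisor-thm}) by an importance-sampling identity:
\begin{equation}
\mathbb{E}_{x_0 \sim q_0,\, t,\, \epsilon}\!\left[f(\mathcal{E}(x_0))\,\|\epsilon - \epsilon_\theta(x_t,t)\|_2^2\right] \;=\; Z\cdot\mathbb{E}_{x_0 \sim p_0,\, t,\, \epsilon}\!\left[\|\epsilon - \epsilon_\theta(x_t,t)\|_2^2\right].
\end{equation}
Because $Z$ does not depend on $\theta$, minimizing the weighted regression loss is equivalent to minimizing the standard denoising loss, but now under $x_0 \sim p_0$ with forward process $x_t = \alpha_t x_0 + \sigma_t \epsilon$. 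Letting $p_t$ denote the marginal of $x_t$ under this $p_0$-initialized forward process, the classical score-matching/Tweedie argument pointwise in $(x_t,t)$ gives
\begin{equation}
\epsilon_\theta^\star(x_t,t) \;=\; \mathbb{E}_{p}[\epsilon \mid x_t, t] \;=\; -\sigma_t\,\nabla_{x_t}\log p_t(x_t).
\end{equation}

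To finish, I would invoke the equivalence between score functions and the probability-flow ODE / reverse SDE of Eq.~(\ref{eq:ode})--(\ref{eq:sde}): substituting $\epsilon_\theta^\star$ for the score term yields the exact reverse dynamics of the $p$-diffusion, whose marginal at $t=0$ is $p_0$. Since the noise schedules $(\alpha_t,\sigma_t)$ are chosen so that both $q_T$ and $p_T$ collapse to approximately $\mathcal{N}(0,\tilde{\sigma}^2 I)$ (the same Gaussian used at initialization of the reverse process), starting from $x_T \sim \mathcal{N}(0,\tilde{\sigma}^2 I)$ and integrating the ODE/SDE with $\epsilon_\theta^\star$ therefore produces $x_0 \sim p_0$, establishing both claims (i) and (ii).

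The main obstacle is the change-of-measure step and its justification: one must assume $f(\mathcal{E}(x_0))$ is nonnegative and $p_0$-integrable so that $Z$ is finite and the reweighted objective actually corresponds to a valid probability measure; one must also argue pointwise optimality $\epsilon_\theta^\star(x_t,t) = \mathbb{E}_p[\epsilon\mid x_t,t]$ at each $(x_t,t)$ with positive density, which requires expressivity of $\epsilon_\theta$ (as is standard in score-matching proofs). A secondary but conceptually important subtlety is ensuring the Gaussian terminal-approximation $p_T \approx \mathcal{N}(0,\tilde{\sigma}^2 I)$ is accurate enough that initializing the reverse process at this Gaussian introduces no bias; this is the usual diffusion-model assumption, and I would flag it explicitly rather than formalize it.
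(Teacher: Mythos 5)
Your proposal is correct, and in fact it supplies more than the paper itself does: the paper never proves this lemma, but instead imports it wholesale from prior work (the FISOR paper, \citet{Zheng2024fisor}), using it only as a black box to derive the task-conditioned Proposition~\ref{thm:weighted-regression} by substituting the energy $\mathcal{E}$ with the advantage term. Your change-of-measure argument --- pulling the weight $f(\mathcal{E}(x_0))$ into the density so that the weighted loss equals $Z$ times the standard denoising loss under $p_0$, then invoking the pointwise minimizer $\epsilon_\theta^\star(x_t,t) = \mathbb{E}_p[\epsilon \mid x_t,t] = -\sigma_t \nabla_{x_t}\log p_t(x_t)$ and the exactness of the reverse ODE/SDE --- is precisely the standard proof of this result, and it is the argument underlying the cited reference. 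Your flagged caveats (nonnegativity and integrability of the weight so that $Z \in (0,\infty)$, expressivity of $\epsilon_\theta$ for pointwise optimality, and the terminal-Gaussian approximation $p_T \approx \mathcal{N}(0,\tilde{\sigma}^2 I)$) are exactly the hypotheses under which the statement holds, so there is no gap.
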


\textbf{Task-conditioned diffusion sampling.} In this work, we extend the Lemma~\ref{lemma:weight-regression-fisor} and provide the following proposition for task-specific sampling:
\begin{proposition}[Task-conditioned diffusion policy extraction via weighted regression]
\label{pro:weighted-regression}
The extraction of optimal policy $\pi_z^*$ in Eq.~(\ref{weight_bc}) can be achieved by (i) minimizing the weighted regression loss defined as:
\[
\min_{\theta} \mathbb{E}_{t\sim\mathcal{U}([0,T]), \epsilon\sim\mathcal{N}(0, I),(s,a)\sim \mathcal{D}} \Big[\exp\Big( \alpha \cdot(F(s,a, z)^\top z - V_{\pi_z}(s,z))\Big) \|\epsilon - \epsilon_{\theta,z}(a_t,s,z,t)\|_2^2 \Big],
\]
and (ii) solving diffusion ODEs/SDEs~\citep{song2021scorebased} by substituting $\epsilon_{\theta,z}$ in Eq.~(\ref{eq:ode}-\ref{eq:sde}). 
\end{proposition}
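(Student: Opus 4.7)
The plan is to reduce the statement to Lemma~\ref{lemma:weight-regression-fisor} (the weighted-regression-as-exact-energy-guidance result) by recognizing that the target policy derived in Proposition~\ref{pro:weight_bc-appendix} has exactly the re-weighted form $p_0(x_0)\propto q_0(x_0)f(\mathcal{E}(x_0))$ required by that lemma, once we freeze $(s,z)$ as conditioning variables. Concretely, I would identify $x_0 := a$, take the base distribution $q_0(x_0) := \mu(a\mid s)$, and set the non-negative weighting function to
\begin{equation}
f\bigl(\mathcal{E}(a)\bigr) \;=\; \exp\Bigl(\alpha\cdot\bigl(F(s,a,z)^\top z - V_{\pi_z}(s,z)\bigr)\Bigr).
\end{equation}
Since $V_{\pi_z}(s,z)$ does not depend on $a$, it contributes only to the normalizing constant, so $\pi_z^*(a\mid s)\propto \mu(a\mid s)\,f(\mathcal{E}(a))$ matches Eq.~(\ref{eq:energy-prop-origin}) verbatim.

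Next I would argue that the forward/backward diffusion machinery carries over to the conditional setting with $(s,z)$ held fixed. For each fixed pair $(s,z)$, define the forward process $q_{t0}(a_t\mid a,s,z)=\mathcal{N}(a_t\mid \alpha_t a,\sigma_t^2 I)$ (independent of $(s,z)$), and consider the score $\nabla_{a_t}\log p_t(a_t\mid s,z)$ of the noised conditional distribution $p_t(a_t\mid s,z)=\int q_{t0}(a_t\mid a)\pi_z^*(a\mid s)\,da$. Applying Lemma~\ref{lemma:weight-regression-fisor} pointwise in $(s,z)$ shows that minimizing
\begin{equation}
\mathbb{E}_{t,\epsilon,a\sim\mu(\cdot\mid s)}\Bigl[f\bigl(\mathcal{E}(a)\bigr)\,\|\epsilon-\epsilon_{\theta,z}(a_t,s,z,t)\|_2^2\Bigr]
\end{equation}
yields an $\epsilon_{\theta,z}$ satisfying $\epsilon_{\theta,z}(a_t,s,z,t)\approx -\sigma_t\nabla_{a_t}\log p_t(a_t\mid s,z)$. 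Finally, taking an outer expectation over $s\sim\mathcal{D}$ (which is what the dataset sampler $(s,a)\sim\mathcal{D}$ already provides, with $a\sim\mu(\cdot\mid s)$ by construction) and over $z$ simply averages this per-condition guarantee, so the single objective in Eq.~(\ref{eq:diffusion_objective}) delivers the correct conditional score for every $(s,z)$.

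Having obtained the score, step (ii) is standard: substituting $\epsilon_{\theta,z}$ into the diffusion ODE/SDE in Eq.~(\ref{eq:ode})--(\ref{eq:sde}), with $a_T\sim\mathcal{N}(0,\tilde\sigma^2 I)$ and $(s,z)$ treated as exogenous inputs at every integration step, generates samples from $p_0(a\mid s,z)=\pi_z^*(a\mid s)$ by the convergence guarantees of score-based diffusion sampling.

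The main obstacle is a rigor-rather-than-difficulty issue: justifying the pointwise-in-$(s,z)$ application of Lemma~\ref{lemma:weight-regression-fisor} and showing that the outer $(s,z)$-expectation in Eq.~(\ref{eq:diffusion_objective}) does not distort the per-condition optimum. This boils down to a measurability/Fubini argument, noting that for each $(s,z)$ the inner weighted-regression loss is minimized by the conditional score, and the outer expectation is a non-negative weighted sum of these per-condition losses, so any joint minimizer must minimize each conditional loss almost surely. A secondary subtlety is ensuring that the weights $\exp(\alpha\cdot(F^\top z - V_{\pi_z}))$ remain finite so the loss is well-defined; this is typically handled in practice by clipping or normalizing the advantage, which does not affect the proportionality structure of $\pi_z^*$.
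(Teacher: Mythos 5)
Your proposal is correct and follows essentially the same route as the paper's proof: both identify the behavior policy $\mu(a\mid s)$ with the base distribution $q_0$ and the exponentiated advantage $\exp\bigl(\alpha\cdot(F(s,a,z)^\top z - V_{\pi_z}(s,z))\bigr)$ with the non-negative weight $f(\mathcal{E}(x_0))$ in Lemma~\ref{lemma:weight-regression-fisor}, then invoke that lemma conditionally on $(s,z)$ and conclude via the standard ODE/SDE sampling step. Your additional remarks on the pointwise-in-$(s,z)$ justification and on keeping the exponential weights finite go slightly beyond what the paper writes down (the latter is handled in their implementation by clipping the exponential advantages), but they do not change the argument.
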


\begin{proof}
We aim to extract the optimal policy forming as a weighted distribution as follows:\[
\pi_z^*(a|s) \propto \mu(a|s) \exp\left( \alpha \cdot (F(s, a, z)^\top z - V_{\pi_z}(s,z)) \right),
\]
where $\mu(a|s)$ is the behavior distribution in dataset $\mathcal{D}$ and $F(s, a, z)^\top z - V_{\pi_z}(s,z)$ is the task-dependent weight function. Set \( p_z(a|s) \) as a task-conditioned reweighted version of the unknown data distribution \( q_0(a|s) \) and substitute the task-dependent weight function into Eq.~(\ref{eq:energy-prop-origin}), we have:
\begin{equation} \label{eq:eq:energy-prop-origin-task-condition }
    p_z(a|s) \propto q_0(a|s) \exp\left(\alpha \cdot \left(F(s,a,z)^\top z - V_{\pi_z}(s,z)\right)\right).
\end{equation}
There exist a normalization constant \( Z(s,z) = \int q_0(a|s) \exp\left(\alpha \cdot (F(s,a,z)^\top z - V_{\pi_z}(s,z))\right) da \) ensures \( p_z(a|s) \) to be a valid distribution. We now aim to sample from the distribution \( p_z(a|s) \).

Parameterize the denoising process with a task-conditioned model $\epsilon_{\theta,z}(a_t,s,z,t)$ and substitute in Eq.~(\ref{eq:weighted-regression-fisor-thm}), we can get a modified task-conditioned weighted regression objective for the diffusion models by replacing the energy function with $F(s,a,z)^\top z - V_{\pi_z}(s,z)$:
\begin{equation} \label{eq:diffusion_objective-derive}
\min_{\theta} \mathbb{E}_{t\sim\mathcal{U}([0,T]), \epsilon\sim\mathcal{N}(0, I),(s,a)\sim \mathcal{D}} \Big[\exp\Big( \alpha \cdot(F(s,a, z)^\top z - V_{\pi_z}(s,z))\Big) \|\epsilon - \epsilon_{\theta,z}(a_t,s,z,t)\|_2^2 \Big].
\end{equation}
According to the Lemma~\ref{lemma:weight-regression-fisor}, substituting \(\epsilon_{\theta,z}^*\) into the diffusion ODEs/SDEs \cite{song2021scorebased} generates data that sample from \(p_z(a|s)\), thereby recovering \(\pi_z^*(a|s)\). The proof is completed. 
\end{proof}
\section{Experimental Setups}
\label{appendix:Experimental_details}
Our experiments span two benchmarks, ExORL~\citep{yarats2022dontchangealgorithmchange} and D4RL~\citep{fu2021d4rldatasetsdeepdatadriven}, across two locomotion domains (\textit{Walker} and \textit{Quadruped}) and two manipulation goal-reaching domains (\textit{Jaco} and \textit{Kitchen}). All experiments we conducted are in state-based tasks.

\subsection{Datasets}
\label{appendix:datasets}
\paragraph{ExORL~\citep{yarats2022dontchangealgorithmchange}.} ExORL consists of datasets collected by several unsupervised RL algorithms \cite{laskin2021URLB} on the DeepMind Control Suite~\citep{tassa2018deepmindcontrolsuite}. We select datasets collected by four unsupervised RL algorithms: \textit{APS}~\citep{Liu2021APSAP}, \textit{RND}~\citep{Burda2018ExplorationBR}, \textit{PROTO}~\citep{Yarats2021ReinforcementLW}, and \textit{DIAYN}~\citep{Eysenbach2018DiversityIA} for each of the three domains (\textit{Walker}, \textit{Jaco}, and \textit{Quadruped}). Each domain has four tasks for evaluation at test time.

\paragraph{D4RL~\citep{fu2021d4rldatasetsdeepdatadriven} Kitchen.} D4RL Kitchen datasets consist of robotic arm manipulating trajectories of different tasks in the Franka Kitchen~\citep{lynch2019play} domain. We adopt two datasets: "kitchen-partial-v0" and "kitchen-mixed-v0". 
\begin{itemize}[leftmargin=*]
\item \textbf{Partial}: Subtasks involve the microwave, kettle, light switch, and slide cabinet. Tasks are guaranteed to be solved in a subset of the 'partial' dataset.
\item \textbf{Mixed}: Subtasks involve microwave, kettle, bottom burner, and light switch. No trajectories solve any tasks completely in the 'mixed' dataset.
\end{itemize}

\paragraph{Dataset preprocessing.} BREEZE incorporates standardized data normalization for diffusion models across all training datasets as a preprocessing step to stabilize the learning process and enhance generalization capability.

\subsection{Domains}
\label{appendix:domains}

\begin{figure}[t]
    \centering
    \includegraphics[width=0.9\linewidth]{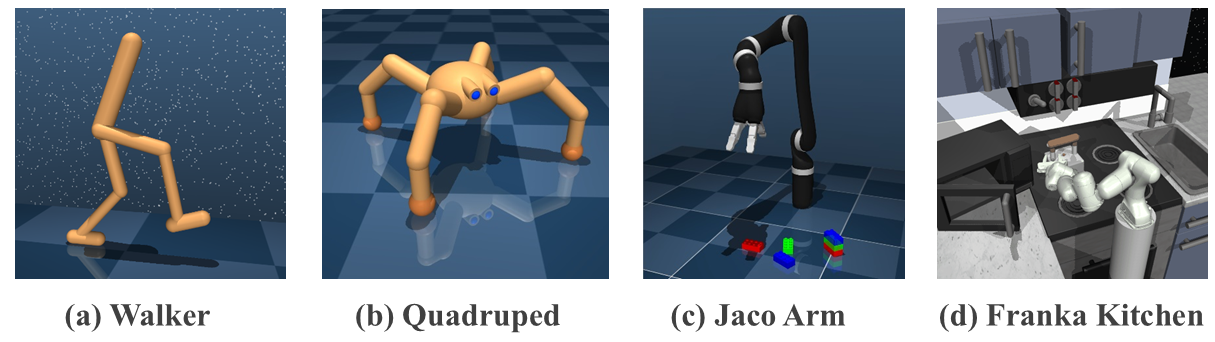}
    \caption{\textbf{Domains.} Walker, Quadruped, Jaco, and Franka Kitchen.}
    \label{fig:env}
\end{figure}

\begin{itemize}[leftmargin=*]
    \item \textbf{Walker} (Locomotion): A bipedal robot with 24-dimensional states (joint positions/velocities) and 6-dimensional actions. Test tasks include \textit{Flip}, \textit{Run}, \textit{Stand}, and \textit{Walk}. Rewards combine dense objectives: maintaining torso height (\textit{Stand}), achieving target velocities (\textit{Run/Walk}), or angular momentum (\textit{Flip}).

    \item \textbf{Quadruped} (Locomotion): A four-legged robot with 78-dimensional states and 12-dimensional actions. Tasks include \textit{Jump}, \textit{Run}, \textit{Stand}, and \textit{Walk}, with rewards for torso stability and velocity tracking. 

    \item \textbf{Jaco} (Manipulation/Goal-reaching): A 6-DoF robotic arm with 55-dimensional states and 6-dimensional actions. Tasks involve reaching four target positions (\textit{Top/Bottom Left/Right}) using sparse rewards based on proximity to goals. 

    \item \textbf{Franka Kitchen}~\citep{lynch2019play} (Manipulation/Goal-reaching): A robotic manipulation environment controlling a 9-DoF Franka robot with multiple given test-time objectives. The agent must sequentially achieve four subtasks per episode, receiving sparse rewards for each subtask. States include arm joint positions, velocities, torques, and task-specific object features. Test-time goals are defined as proprioceptive states concatenated with target object states.
\end{itemize}

\subsection{Baselines}
We utilize the following open-source codebases for experiments on baselines. We set the batch size to 512 to ensure a fair comparison with other methods, while keeping all other hyperparameters at their default values.

\begin{itemize} [leftmargin=*]
    \item For \textit{SF-LAP}~\citep{Borsa2018UniversalSF} and \textit{FB}~\citep{Touati2021FB}, \textit{MCFB} and \textit{VCFB}~\citep{Jeen2023ZeroShotRL}, we utilized the open-source implementation available at \url{https://github.com/enjeeneer/zero-shot-rl}.

    \item For \textit{HILP} \cite{park2024foundationpolicieshilbertrepresentations}, we utilized the open-source implementation for state-based zero-shot RL setting, which is available at \url{https://github.com/seohongpark/HILP}.
\end{itemize}

\subsection{Architectures}
\label{appendix:Architectures}
This section details the model architectures used in BREEZE.

\paragraph{$F$ Network.}
The $F$ network processes inputs $(s, a)$ and $(s, z)$ using two separate MLPs, each with two hidden layers of size 1024 and ReLU activation, projecting the inputs into a 512-dimensional feature space. The resulting embeddings are concatenated along the sequence dimension and processed through two identical processing blocks. Each block consists of a self-attention layer~\citep{vaswani2017attention}, followed by a feed-forward network with residual connections, LayerNorm~\citep{ba2016layer}, and dropout (rate=0.1). Specifically, the final representation is flattened and passed through two independent output heads, denoted as $F1$ and $F2$, each comprising two linear layers with a hidden dimension of 256 that map to the $d$-dimensional output space.

\paragraph{$B$ Network.}
The $B$ network is based on a standard transformer architecture with 8-head multi-head attention~\citep{vaswani2017attention} and ReLU activation. State embeddings are processed through transformer blocks and projected to a $d$-dimensional space via a linear layer and further scaled to $\sqrt{d}$ by L2 normalization. We observe that a sufficiently large $B$ network is essential for achieving strong performance on the ExORL benchmark.

\paragraph{Diffusion Policy.}
The diffusion policy uses a residual-connected MLP as the noise predictor, following the IDQL~\citep{idql} implementation. The hidden dimension is set to 1024. The task vector $z$ is incorporated via Feature-wise Linear Modulation (FiLM)~\citep{perez2018film}: two parallel linear layers generate a scaling factor $\gamma$ and a shifting factor $\beta$ from $z$, which modulate intermediate features $h$ as $\gamma \odot h + \beta$.

\subsection{Sampling of \( z \)}
\label{subsec:z_sampling}

Following previous studies~\citep{Touati2022DoesZR,Jeen2023ZeroShotRL}, we utilize a mixture of two methods for sampling the latent variable \( z \):
\begin{itemize} %[leftmargin=*]
    \item Sample \( z \) uniformly within a sphere of radius \( \sqrt{d} \) in \( \mathbb{R}^d \).
    
    \item Derive \( z \) by setting \( z = B(s) \), where $s \sim \mathcal{D}$ is randomly sampled from the data set.
\end{itemize}

In our primary experiments, we employ a balanced mix ratio of 50\% between the two methods. 

\subsection{Value Learning}
\label{subsec:optimal_sampling}
\paragraph{Training details.} We implement the $V$-network with 3-layer MLPs with 512 hidden dimensions and ReLU activation functions. During training, we set the $\tau$ for expectile regression in Eq.~(\ref{eql:vloss}) shown in Section ~\ref{appendix:Hyperparameter}. We use clipped double value learning~\citep{fujimoto2018addressing} for both the $Q$-value and the $M$-value.

\paragraph{Rejection sampling.} To improve the zero-shot performance, we use rejection sampling rather than tuning existing parameters for greedy optimization. Rejection sampling is commonly used in diffusion policies~\citep{Chen2022OfflineRL,idql,he2024aligniql}, a stable mechanism that selects $N$ actions from the policy to boost with the highest value. Our practical implementation of rejection sampling is in two phases:

\begin{itemize}[leftmargin=*]
    \item \textbf{Training phase}:
    To compute the FB loss (Eq.~\ref{fb_loss}),  we first sample a full transition batch, which contains states, actions, next states, and next actions. A hyperparameter $\rho_a$, referred to as the dataset mixture ratio, determines the proportion of next actions drawn directly from the dataset. The remaining fraction is replaced by actions sampled from the policy $\pi_z$: we generate $K_\text{train}$ candidate actions ${a_{t+1}^{(1)}, \dots, a_{t+1}^{(K_\text{train})}}$ via the diffusion policy and select the optimal next action according to:
    \begin{equation}
    a_{t+1}^* = \arg\max_{a_{t+1}^{(i)}} ; F(s_{t+1}, a_{t+1}^{(i)}, z)^\top z .
    \end{equation}

    \item \textbf{Evaluation phase}: We sample $K_\text{eval}$ candidate actions during evaluation, and select the optimal action according to the $Q$-value.
\end{itemize}
Details of practical $\rho_a$, $K_\text{train}$ and $K_\text{eval}$ is provided in Section \ref{appendix:Hyperparameter}.

\subsection{Hyperparameter} 
\label{appendix:Hyperparameter}
This section provides the detailed hyperparameter setup. In our experiments, the model architecture and basic algorithm hyperparameters remain unchanged, as detailed in Table \ref{table:general_hyp}. Domain-specific hyperparameters are detailed in Table \ref{table:hyp_domain_full} and Table \ref{table:hyp_domain_small}.

\begin{table}[ht]
\caption{General hyperparameters used for BREEZE}
\label{table:general_hyp}
\centering
% \resizebox{0.8\columnwidth}{!}{
\begin{tabular}{@{}lll@{}}
\toprule
\multicolumn{1}{c}{\multirow{17}{*}{\centering General hyperparameters}} & Hyperparameter                    & Value     \\ \cmidrule(lr){2-3}
                                                                    & Optimizer                         & AdamW      \\
                                                                    & Representation learning rate      & 1e-4      \\
                                                                    & $V$ network learning rate         & 3e-4      \\
                                                                    & Policy learning rate              & 1e-4      \\
                                                                    & Discount factor $\gamma$          & 0.98      \\
                                                                    & Learning steps                    & 1,000,000 \\
                                                                    & Mini-batch                        & 512       \\
                                                                    & Representation soft update factor $\lambda$    & 0.01      \\
                                                                    & Policy soft update factor $\lambda$   & 0.001      \\
                                                                    & Latent dimension $d$               & 50 \\
                                                                    & $z$ mixing ratio                  & 0.5 \\
                                                                    & $z$ inference steps               & 10,000 \\
                                                                    & Regularization weight for orthogonality loss $w_1$ & 10 \\
                                                                    & Beta schedule                     & vp        \\
                                                                    & Exponential advantages clip & $(-\infty, 100]$\\
                                                                    \cmidrule(lr){1-3}
\multicolumn{1}{c}{\multirow{21}{*}{\centering Architecture}}       & \(F\) preprocessor hidden dimension      & 1024       \\
                                                                    & \(F\) preprocessor hidden layers    & 2     \\
                                                                    & \(F\) preprocessor output dimension    & 512     \\
                                                                    & \(F\) preprocessor activation function   & relu    \\
                                                                    & \(F\) attention blocks              & 2     \\
                                                                    & \(F\) linear layer hidden dimension        & 256     \\
                                                                    & \(F\) dropout          & 0.1     \\
                                                                    & \(B\) nhead                          & 8       \\
                                                                    & \(B\) encoder layers              & 2       \\
                                                                    & \(B\) decoder layers              & 2       \\
                                                                    & \(B\) d model             & 256    \\
                                                                    & \(B\) dropout             & 0.1    \\
                                                                    & \(B\) feedforward dimension             & 2048    \\
                                                                    & \(B\) activation function           & relu   \\
                                                                    & \(V\) hidden dimension           & 512   \\
                                                                    & \(V\) hidden layers           & 2   \\
                                                                    & \(V\) activation function          & relu   \\
                                                                    & Policy MLP blocks                   & 3       \\
                                                                    & Policy hidden dimension             & 1024    \\
                                                                    & Policy activation function          & mish    \\
                                                                    & Policy time embedding               & learned    \\
                                                                    \bottomrule
\end{tabular}
% }
\end{table}

\begin{table}[ht]
\centering
\caption{\label{table:hyp_domain_full} Domain-specific hyperparameters for BREEZE with full dataset.}
\resizebox{\linewidth}{!}{\scriptsize
\begin{tabular}{lccccccccc}
\toprule
\textbf{Domain-dataset} & $K_{\text{train}}$   & $\rho_a$  & $K_{\text{eval}}$    & expectile $\tau$    & $F\text{-reg}$ coef. $\omega_q$ & Diffusion steps $T$ & Temperature $\alpha$ \\
\midrule
Walker-RND              & 9 & 0.1 & 64 & 0.99 & 0.001 & 5 & 0.05 \\
Walker-APS              & 9 & 0.1 & 64 & 0.99 & 0.001 & 5 & 0.05\\
Walker-PROTO            & 9 & 0.1 & 64 & 0.99 & 0.001 & 5 & 0.05\\
Walker-DIAYN            & 9 & 0.1 & 64 & 0.99 & 0.001 & 5 & 0.05\\
\cmidrule(lr){1-8}
Jaco-RND                & 1 & 0 & 64 & 0.99 & 0.001 & 10 & 0.05\\
Jaco-APS                & 9 & 0.1 & 64 & 0.99 & 0.0001 & 5 & 0.05\\
Jaco-PROTO              & 1 & 0 & 64 & 0.99 & 0.001 & 10 & 0.05\\
Jaco-DIAYN              & 9 & 0.1 & 64 & 0.99 & 0.0001 & 5 & 0.05\\
\cmidrule(lr){1-8}
Quadruped-RND           & 1 & 0 & 64 & 0.99 & 0.001 & 10 & 0.05\\
Quadruped-APS           & 1 & 0 & 64 & 0.99 & 0.001 & 10 & 0.05\\
Quadruped-PROTO         & 1 & 0 & 64 & 0.99 & 0.001 & 10 & 0.05\\
Quadruped-DIAYN         & 1 & 0 & 64 & 0.99 & 0.001 & 10 & 0.05\\
\cmidrule(lr){1-8}
Kitchen-mixed           & 1 & 0 & 16 & 0.7 & 0.001 & 5 & 0.1\\
Kitchen-partial         & 2 & 0.2 & 4 & 0.7 & 0.001 & 5 & 0.08\\
\bottomrule
\end{tabular}}
\end{table}

\begin{table}[ht]
\centering
\caption{\label{table:hyp_domain_small} Domain-specific hyperparameters for BREEZE with small sample dataset.}
% \vspace{-1em}
\resizebox{\linewidth}{!}{\scriptsize
\begin{tabular}{lccccccccc}
\toprule
\textbf{Domain-dataset} & $K_{\text{train}}$   & $\rho_a$  & $K_{\text{eval}}$    & expectile $\tau$    & $F\text{-reg}$ coef. $\omega_q$ & Diffusion steps $T$  & Temperature $\alpha$ \\
\midrule
Walker-RND              & 2 & 0.2 & 32 & 0.99 & 0.0001 & 10 & 0.05\\
Walker-APS              & 2 & 0.2 & 32 & 0.99 & 0.0001 & 10 & 0.05\\
Walker-PROTO            & 2 & 0.2 & 32 & 0.99 & 0.001 & 10 & 0.05\\
Walker-DIAYN            & 8 & 0.2 & 32 & 0.99 & 0.001 & 10 & 0.05\\
\cmidrule(lr){1-8}
Jaco-RND                & 1 & 0 & 32 & 0.99 & 0.0001 & 10 & 0.05\\
Jaco-APS                & 1 & 0 & 32 & 0.99 & 0.0001 & 10 & 0.05\\
Jaco-PROTO              & 1 & 0 & 32 & 0.99 & 0.0001 & 10 & 0.05\\
Jaco-DIAYN              & 1 & 0 & 32 & 0.99 & 0.0001 & 10 & 0.05\\
\cmidrule(lr){1-8}
Quadruped-RND           & 2 & 0.2 & 32 & 0.99 & 0.001 & 10 & 0.05\\
Quadruped-APS           & 1 & 0 & 32 & 0.99 & 0.001 & 10 & 0.05\\
Quadruped-PROTO         & 2 & 0.2 & 32 & 0.99 & 0.001 & 10 & 0.05\\
Quadruped-DIAYN         & 2 & 0.2 & 32 & 0.99 & 0.0001 & 10 & 0.05\\
\bottomrule
\end{tabular}}
% \vspace{-1.2em}
\end{table}
\section{Pseudo-Code}
\label{appendix:Pseudo-Code}
Algorithm~\ref{alg:BREEZE} provides the pseudocode. Our implementation uses PyTorch~\cite{pytorch2019}, with all experiments conducted on a single NVIDIA A6000 GPU. Peak memory usage under 23 GB.

\begin{algorithm}[H]
\caption{Behavior-Regularized Zero-shot RL (BREEZE)}
\label{alg:BREEZE}
\begin{algorithmic}[1]
\Require Latent dimension $d$, mini-batch $b$, gradient steps $N$, orthogonality coefficient $w_1$, value regularization coefficient $w_q$.

\State Normalize dataset $\mathcal{D}$; Initialize networks $F_{\phi}$, $B_{\psi}$, $V_{\omega}$, diffusion actor $\pi_z$ with network $\epsilon_{\theta}$
\For{$n=1,\ldots,N$}
        \State Sample a mini-batch of transitions $\{(s_{t},a_{t},s_{t+1},a_{t+1})_i\}_{i\in |b|}\subset\mathcal{D}$
        \State Sample a mini-batch of $\{z_{i}\}_{i\in |b|}\sim\mathbb{R}^d$
        \State Collect $\{a'_{t+1}\}_{i\in |b|}$ by mixture of data batch and $\pi_z$ outputs
        \State \textcolor{blue}{// Representation learning:}
        \State Compute $\mathcal{L}_{\text{FB}}(\phi, \psi)$ and $\mathcal{L}_{F\text{-reg}}(\phi)$ using Eq.~(\ref{fb_loss}), Eq.~(\ref{eq:constraint})
        \State Compute orthogonality regularization loss $\mathcal{L}_{\text{ortho}}$
        \State $\mathcal{L}_{\mathrm{ortho}}(\psi) = \frac{1}{|b|^2}\sum_{i,j\in|b|}\left[(B_\psi(s_i)^{\top}B_\psi(s_j'))^{2}-\|B_\psi(s_i)\|_{2}^{2}-\|B_\psi(s_j')\|_{2}^{2}\right]$
        \State Update $F_{\phi}$ and $B_{\psi}$ by $\mathcal{L}_{\text{FB}} + w_1 \cdot \mathcal{L}_{\text{ortho}} + w_q \cdot \mathcal{L}_{F\text{-reg}}$
        \State \textcolor{blue}{// Value function learning:}
        \State Update $V_{\omega}$ using Eq.~(\ref{eql:vloss})
        \State \textcolor{blue}{// Diffusion policy learning:}
        \State Update ${\epsilon}_\theta$ using Eq.~(\ref{eq:diffusion_objective})
\EndFor
\end{algorithmic}
\end{algorithm}
\section{Additional Results}
This section provides a comprehensive showcase of all experimental results. 
\begin{itemize} %[leftmargin=*]
    \item Section \ref{appendix:curves} provides the learning curve comparisons.
    \item Section \ref{appendix:additional_result} reports the detailed results on ExORL benchmark.
    \item Section \ref{app:empirical_motivation} presents the visualizations of our empirically evaluated value distributions.
\end{itemize}

\subsection{Learning Curves}
\label{appendix:curves}
This section presents the zero-shot learning curves on the ExORL benchmark. Results on the full dataset and the 100k small-sample dataset are shown in Figure~\ref{fig:cur_quadruped}-\ref{fig:cur_walker} and Figure~\ref{fig:cur_quadruped100}-\ref{fig:cur_walker100}, respectively. The solid lines represent the mean IQM returns over 5 random seeds, and the shaded regions correspond to the standard deviation.

\begin{figure}[H]
\centering
\includegraphics[width=0.97\linewidth]{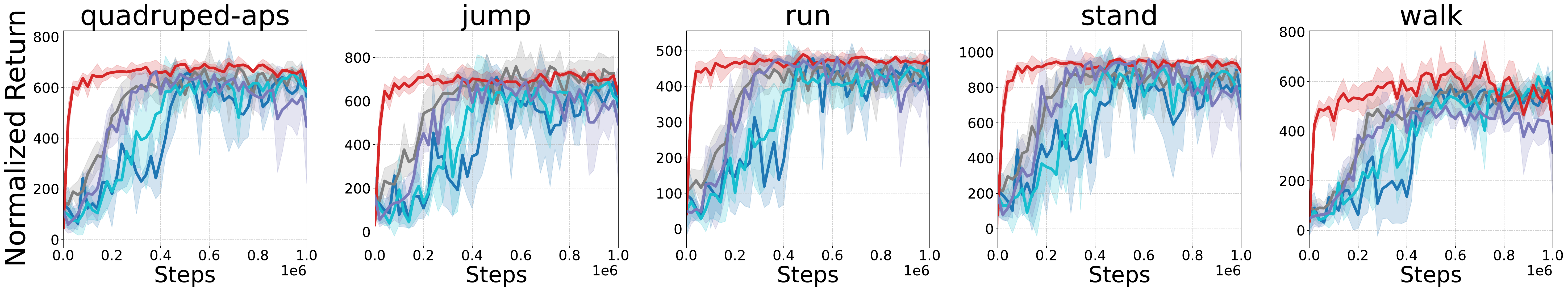}
\includegraphics[width=0.97\linewidth]{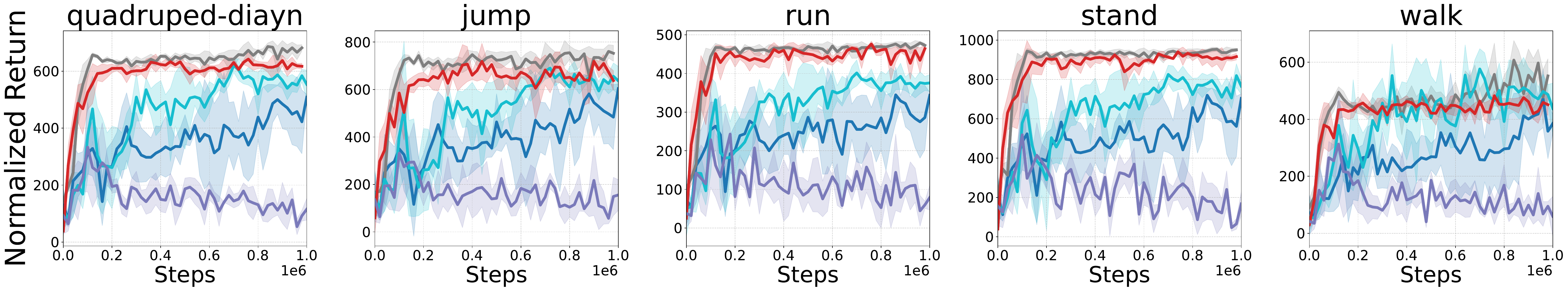}
\includegraphics[width=0.97\linewidth]{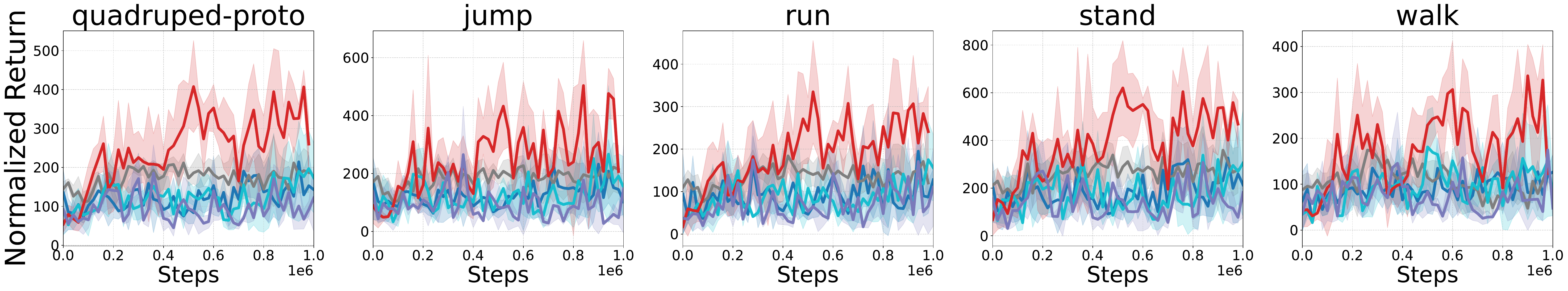}
\includegraphics[width=0.97\linewidth]{img/curves/quadruped-rnd.pdf}
\includegraphics[width=0.9\linewidth]{img/curves/legend.pdf}
\captionsetup{skip=3pt}
\caption{\label{fig:cur_quadruped}\small Curves of zero-shot performance on Quadruped domain.}
\vspace{-15pt}
\end{figure}

\begin{figure}[H]
\centering
\includegraphics[width=0.97\linewidth]{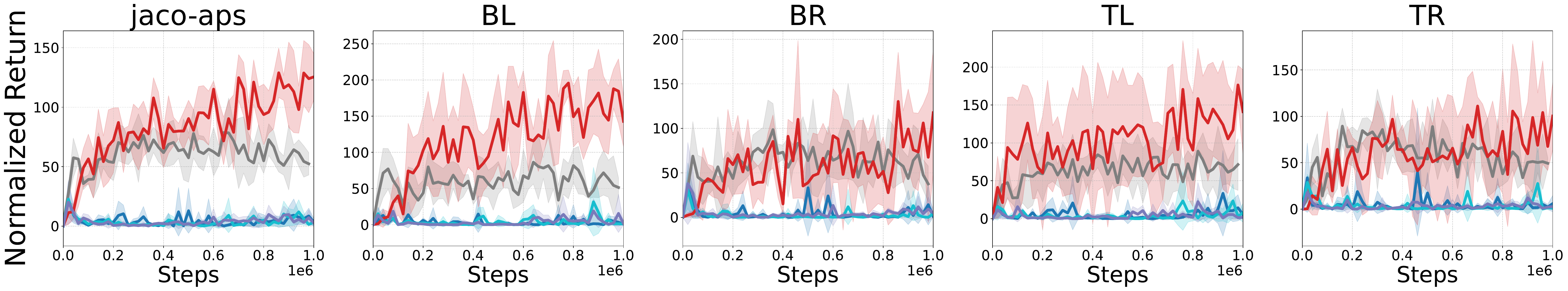}
\includegraphics[width=0.97\linewidth]{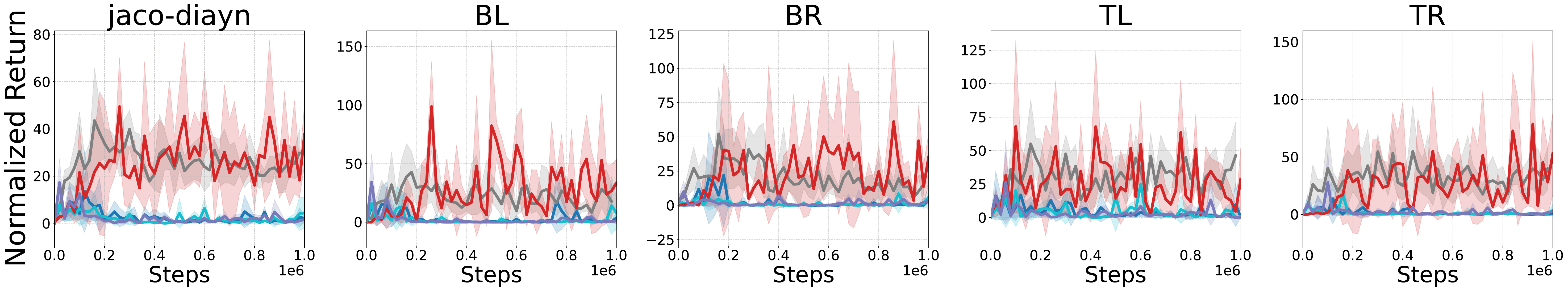}
\includegraphics[width=0.97\linewidth]{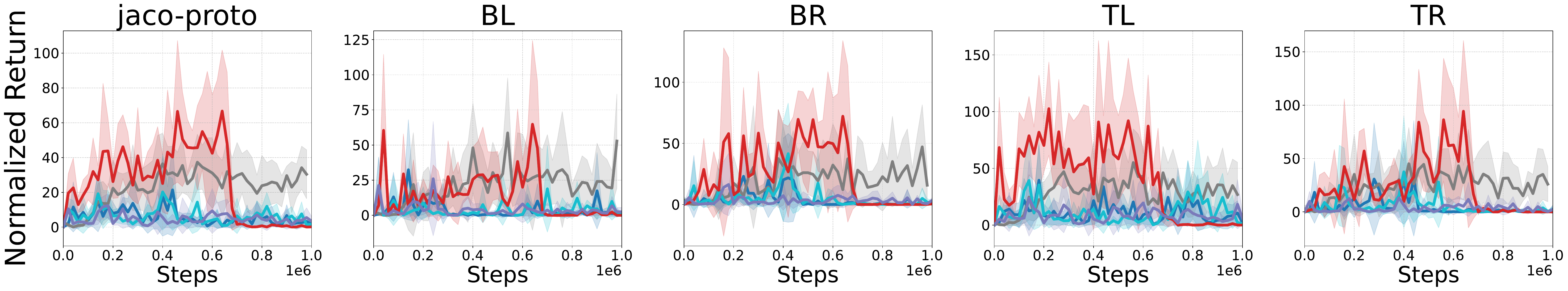}
\includegraphics[width=0.97\linewidth]{img/curves/jaco-rnd.pdf}
\includegraphics[width=0.9\linewidth]{img/curves/legend.pdf}
\captionsetup{skip=3pt}
\caption{\label{fig:cur_jaco}\small Curves of zero-shot performance on Jaco domain.} 
\vspace{-15pt}
\end{figure}

\begin{figure}[H]
% \vspace{-10pt}
\centering
\includegraphics[width=0.97\linewidth]{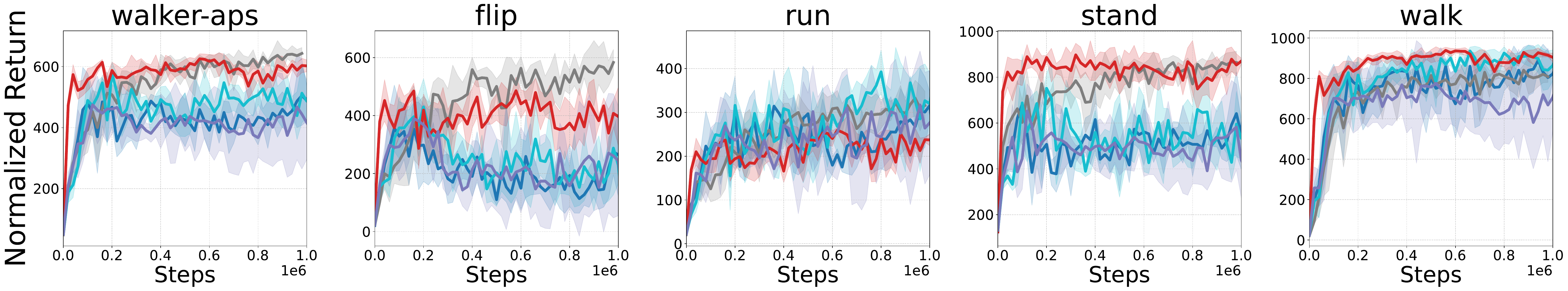}
\includegraphics[width=0.97\linewidth]{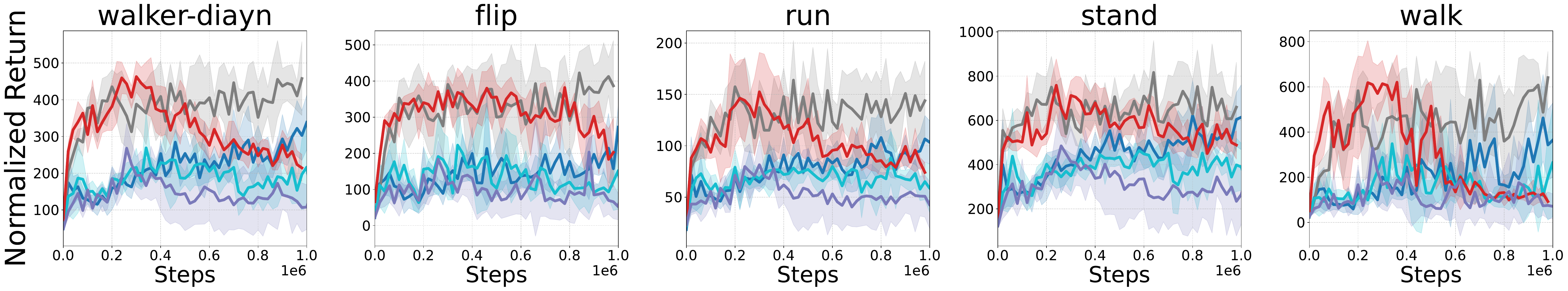}
\includegraphics[width=0.97\linewidth]{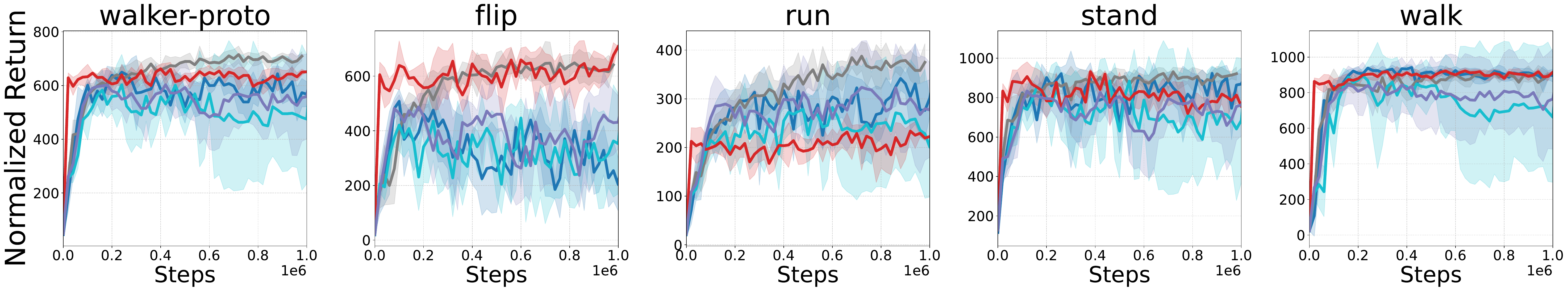}
\includegraphics[width=0.97\linewidth]{img/curves/walker-rnd.pdf}
\includegraphics[width=0.9\linewidth]{img/curves/legend.pdf}
\captionsetup{skip=3pt}
\caption{\label{fig:cur_walker}\small Curves of zero-shot performance on Walker domain.} 
\vspace{-15pt}
\end{figure}

\begin{figure}[H]
\centering
\includegraphics[width=0.97\linewidth]{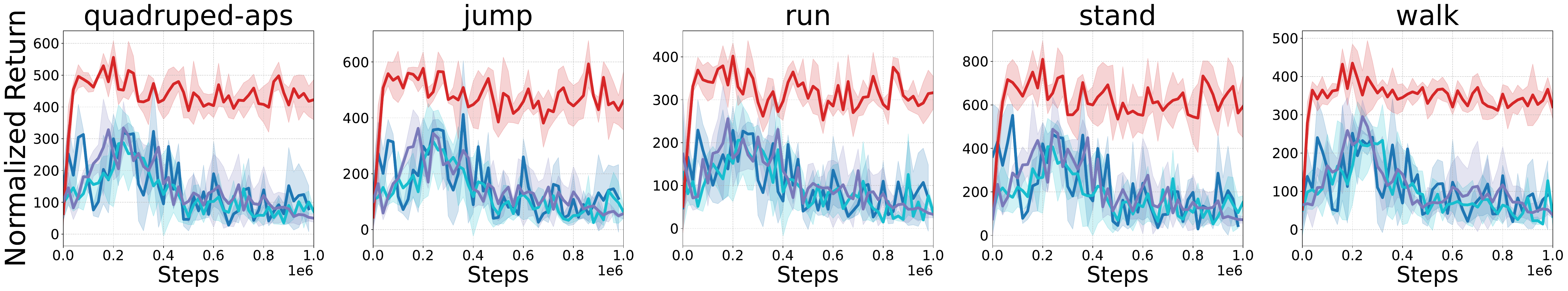}
\includegraphics[width=0.97\linewidth]{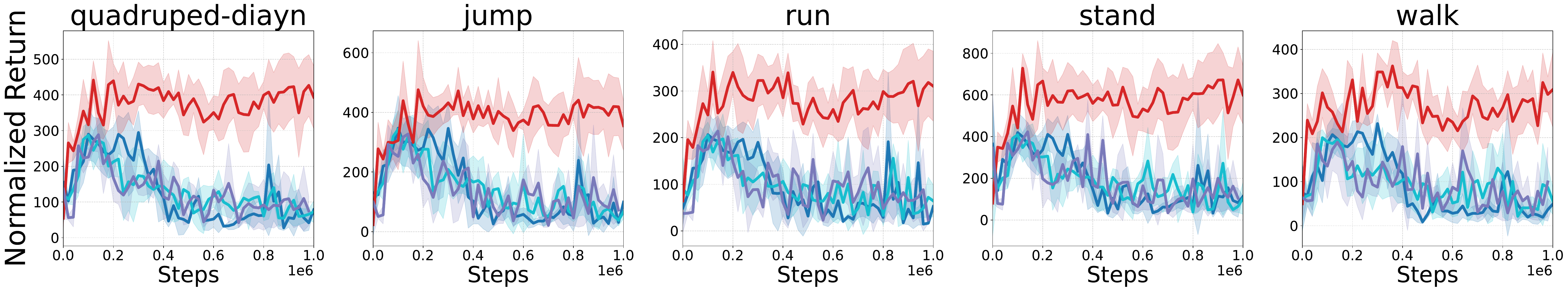}
\includegraphics[width=0.97\linewidth]{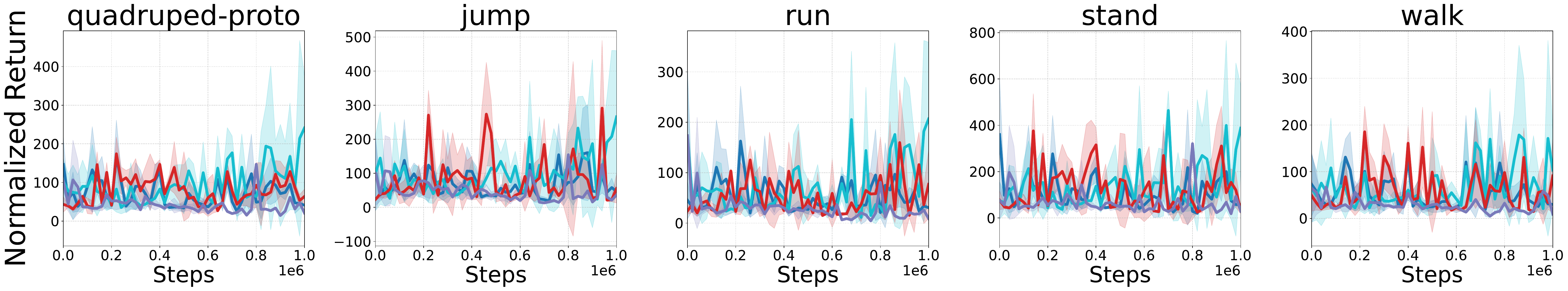}
\includegraphics[width=0.97\linewidth]{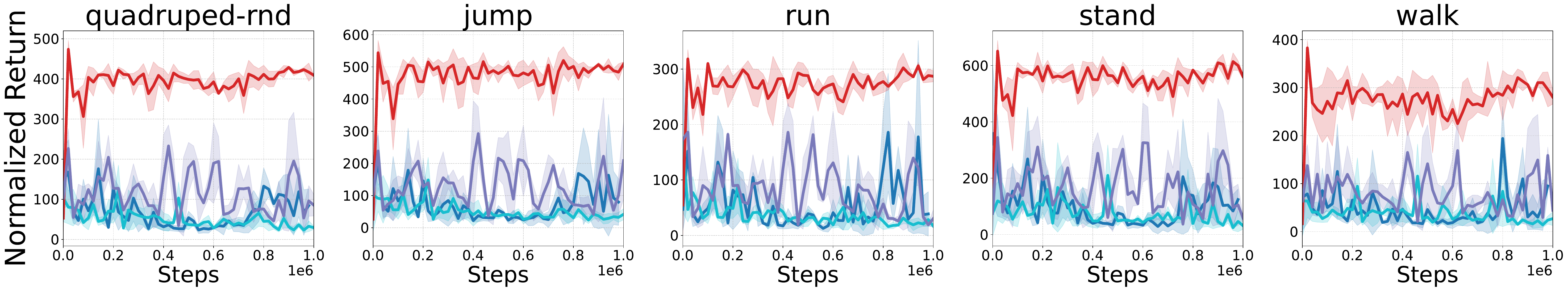}
\includegraphics[width=0.9\linewidth]{img/curves_small/legend.pdf}
\captionsetup{skip=3pt}
\caption{\label{fig:cur_quadruped100}\small Curves of zero-shot performance on Quadruped domain with 100k dataset.} 
\vspace{-15pt}
\end{figure}

\begin{figure}[H]
\centering
\includegraphics[width=0.97\linewidth]{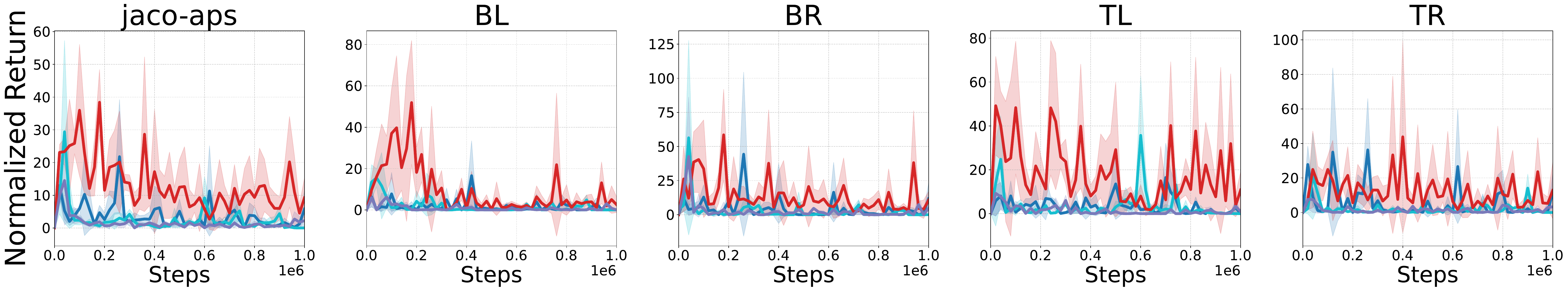}
\includegraphics[width=0.97\linewidth]{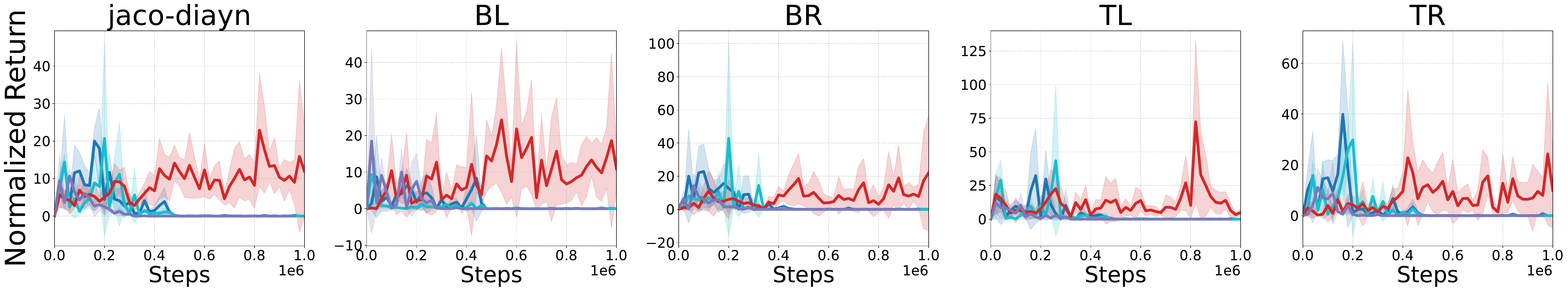}
\includegraphics[width=0.97\linewidth]{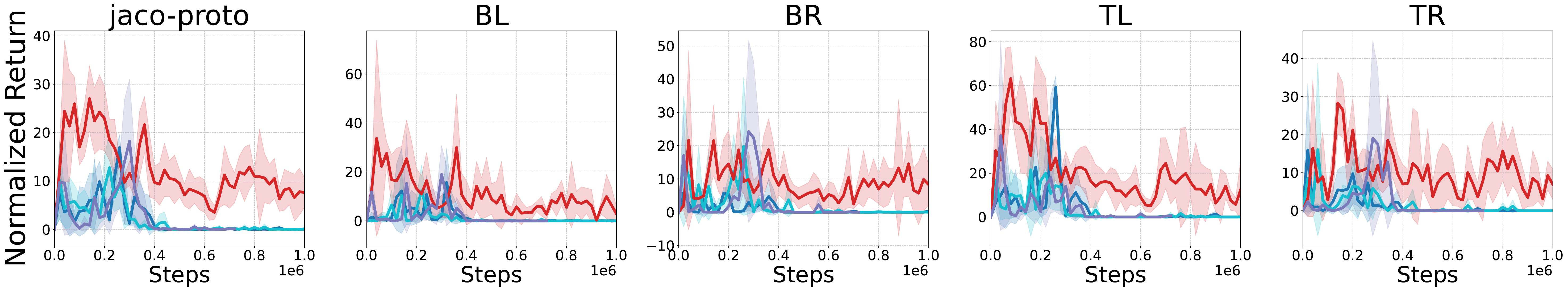}
\includegraphics[width=0.97\linewidth]{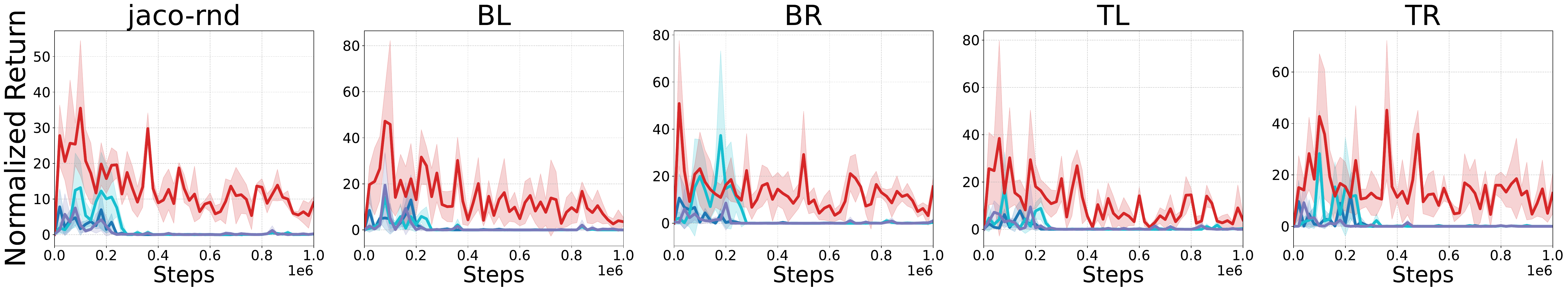}
\includegraphics[width=0.9\linewidth]{img/curves_small/legend.pdf}
\captionsetup{skip=3pt}
\caption{\label{fig:cur_jaco100}\small Curves of zero-shot performance on Jaco domain with 100k dataset.} 
\vspace{-15pt}
\end{figure}

\begin{figure}[H]
\centering
\includegraphics[width=0.97\linewidth]{img/curves_small/walker-aps.pdf}
\includegraphics[width=0.97\linewidth]{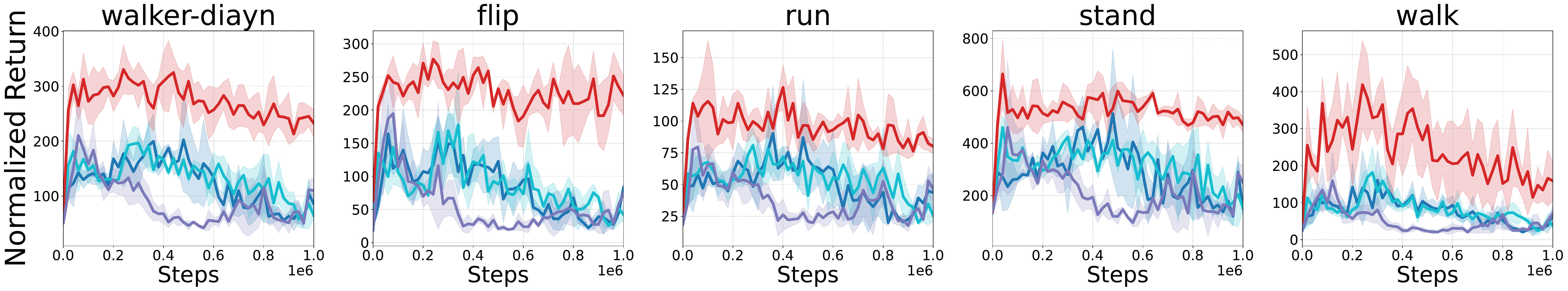}
\includegraphics[width=0.97\linewidth]{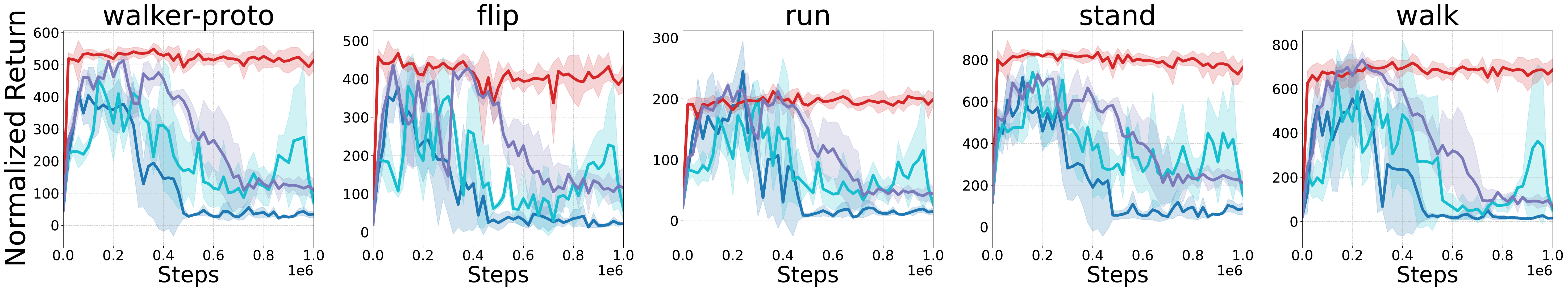}
\includegraphics[width=0.97\linewidth]{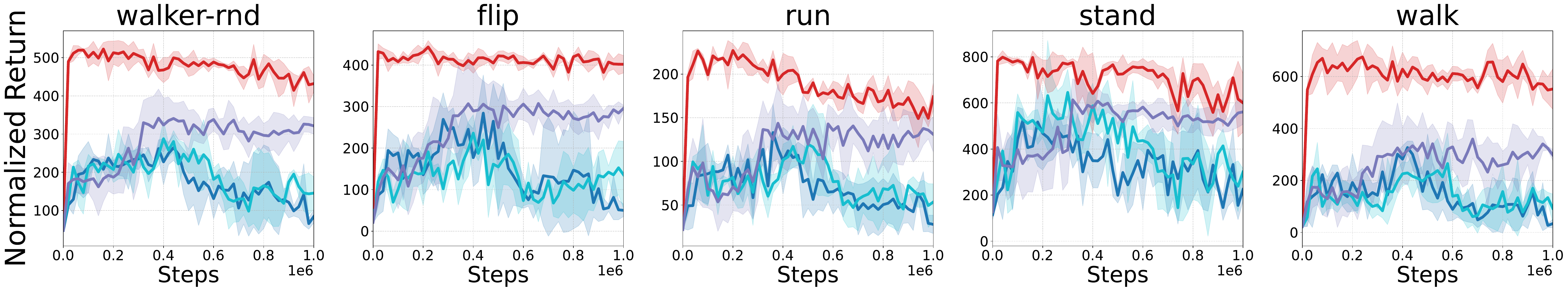}
\includegraphics[width=0.9\linewidth]{img/curves_small/legend.pdf}
\captionsetup{skip=3pt}
\caption{\label{fig:cur_walker100}\small Curves of zero-shot performance on Walker domain with 100k dataset.} 
\vspace{-15pt}
\end{figure}

\begin{figure}[H]
\centering
\includegraphics[width=0.24\linewidth]{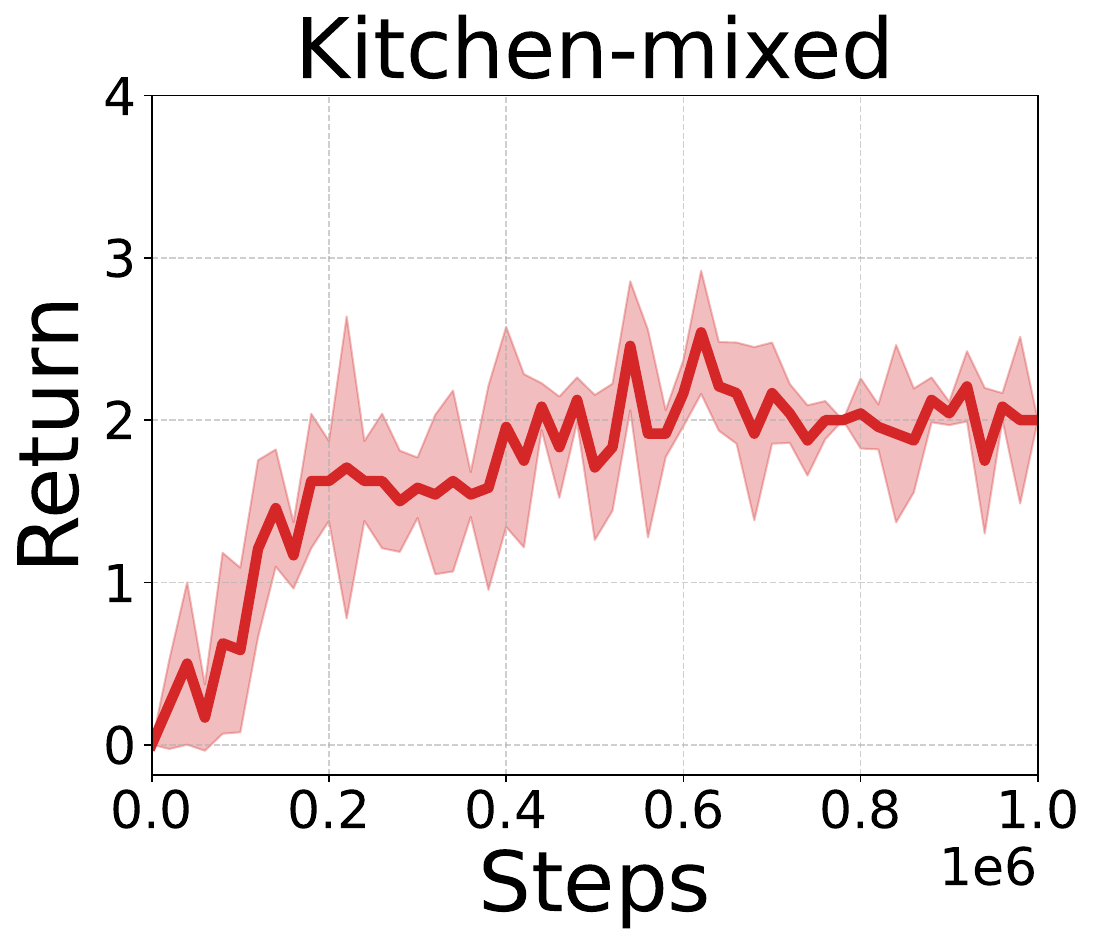}
\includegraphics[width=0.24\linewidth]{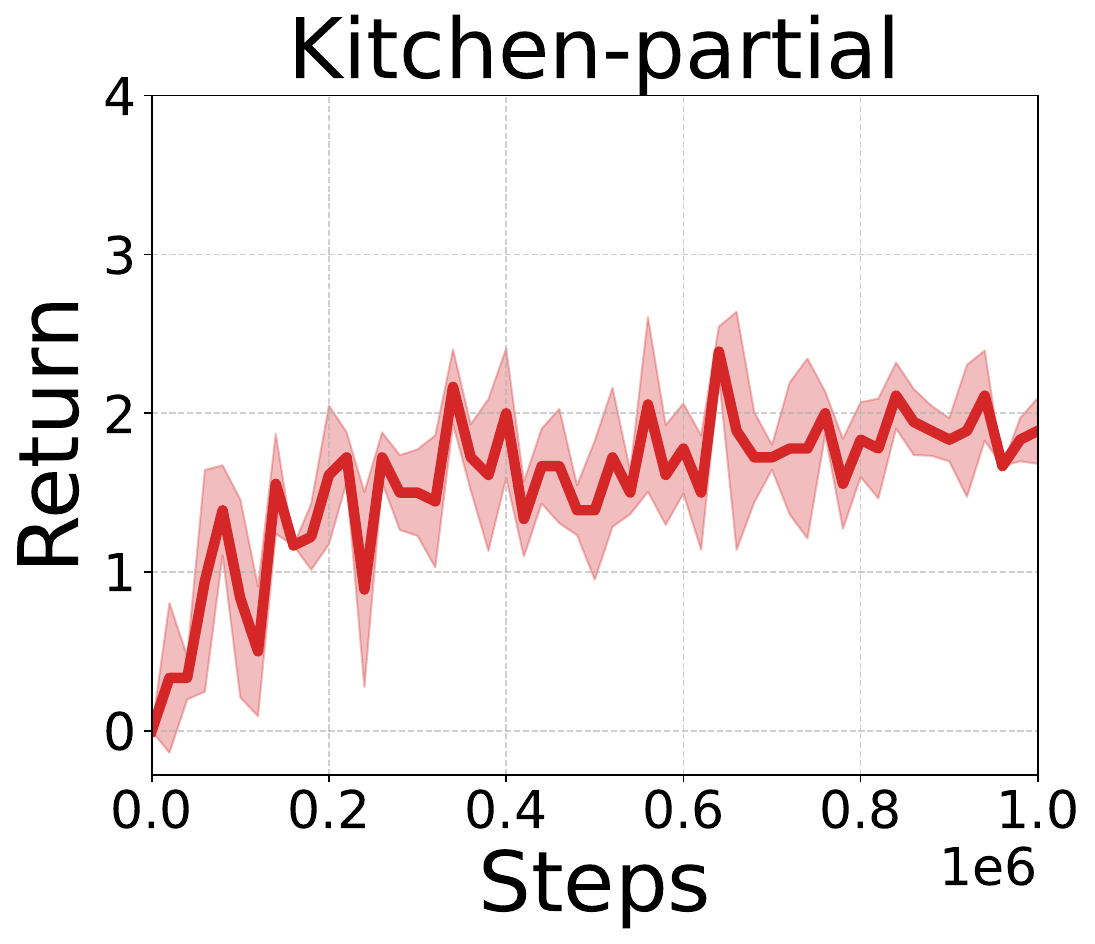}
\captionsetup{skip=3pt}
\caption{\label{fig:cur_kitchen}\small Curves of zero-shot performance on Kitchen domain.} 
\end{figure}

\subsection{Detailed Results on ExORL Benchmarks}
\label{appendix:additional_result}
This section presents complete experimental results on the ExORL benchmark. Table \ref{tab:fullresult} summarizes performance on the full dataset, while Table~\ref{tab:smallsampleresult} reports results on the 100k small-sample datasets. Evaluations were conducted every 10,000 steps during offline training. We computed the aggregate score across all tasks as the overall performance measure, and selected the highest domain-level score across 5 random seeds along with corresponding per-task results.

\newpage
\begin{table}[H]
\vspace{-15pt}
\centering
\caption{\small Full dataset experimental results on ExORL}
\resizebox{0.89\linewidth}{!}{\scriptsize
\begin{tabular}{lcccccccc}
\toprule
\textbf{Dataset} & \textbf{Domain} & \textbf{Task} & \textbf{SF-LAP} & \textbf{FB} & \textbf{VCFB} & \textbf{MCFB} & \textbf{HILP}  & \textbf{BREEZE}\\ 
\midrule
\multirow{16}{*}{RND} & \multirow{6}{*}{Walker}  & walk    & $555 \pm 221$  & \colorbox{mine}{$918 \pm 26$}  & $813 \pm 55$ & $842 \pm 126$ & $798 \pm 29$  & $907 \pm 23$ \\
                      &                          & stand   & $810 \pm 47$  & $834 \pm 39$  & $913 \pm 29$  & $950 \pm 20$  & $888 \pm 55$  & \colorbox{mine}{$938 \pm 12$} \\
                      &                          & run     & $235 \pm 61$  & $357 \pm 20$  & \colorbox{mine}{$376 \pm 22$} & $328 \pm 20$  & $368 \pm 34$  & $320 \pm 25$ \\
                      &                          & flip    & $465 \pm 107$  & $535 \pm 29$  & $511 \pm 52$  & $517 \pm 89$  & \colorbox{mine}{$608 \pm 48$}  & $606 \pm 22$ \\
\cmidrule(lr){3-9}
                      &                          & whole performance & $516 \pm 97$ & $661 \pm 10$  & $653 \pm 22$   & $659 \pm 51$ & $665 \pm 33$  & \colorbox{mine}{$693 \pm 16$} \\
\cmidrule(lr){2-9}
                    & \multirow{6}{*}{Jaco}  & reach top right     & $2 \pm 2$  & $75 \pm 68$   & $78\pm 63$    & $86 \pm 68$   & $43 \pm 42$   & \colorbox{mine}{$108 \pm 42$} \\
                    &                        & reach top left      & $2 \pm 2$  & $4 \pm 8$     & $15\pm 16$    & $10 \pm 9$    & $53 \pm 17$   & \colorbox{mine}{$68 \pm 30$} \\
                    &                        & reach bottom right  & $1 \pm 0$  & $46 \pm 42$   & \colorbox{mine}{$81 \pm 62$}   & $56 \pm 56$   & $53 \pm 41$   & $77 \pm 45$ \\
                    &                        & reach bottom left   & $70 \pm 71$  & $4 \pm 5$   & $9\pm 10$     & $11 \pm 13$   & $60 \pm 39$   & \colorbox{mine}{$84 \pm 55$} \\
\cmidrule(lr){3-9}
                    &                        & whole performance   & $18 \pm 18$  & $32 \pm 23$   & $46\pm 35$  & $41 \pm 34$   & $52 \pm 21$   & \colorbox{mine}{$84 \pm 14$} \\
                    
\cmidrule(lr){2-9}
                    & \multirow{6}{*}{Quadruped}  & walk   & $206 \pm 136$  & $519 \pm 35$  & $462 \pm 6$   & $603 \pm 68$  & $629 \pm 69$  & \colorbox{mine}{$641 \pm 64$} \\
                    &                             & stand  & $539 \pm 224$  & $956 \pm 11$  & $883 \pm 52$  & \colorbox{mine}{$965 \pm 5$}  & $904 \pm 41$   & $936 \pm 7$ \\
                    &                             & run    & $233 \pm 136$  & $484 \pm 14$  & $431 \pm 20$  & $486 \pm 22$   & $484 \pm 20$ & \colorbox{mine}{$514 \pm 22$} \\
                    &                             & jump   & $342 \pm 238$  & $722 \pm 4$   & $659 \pm 53$  & $683 \pm 41$  & $679 \pm 72$  & \colorbox{mine}{$808 \pm 16$} \\
\cmidrule(lr){3-9}
                    &                          & whole performance & $330 \pm 165$ & $671 \pm 14$   & $609 \pm 26$   & $684 \pm 18$   & $674 \pm 28$  & \colorbox{mine}{$725 \pm 23$} \\

\midrule
\multirow{16}{*}{APS} & \multirow{6}{*}{Walker}  & walk    & $272 \pm 81$  & $725 \pm 118$  & $705 \pm 130$ & $817 \pm 88$  & $802 \pm 22$  & \colorbox{mine}{$935 \pm 23$} \\
                      &                          & stand   & $663 \pm 54$  & $720 \pm 208$  & $567 \pm 109$  & $751 \pm 65$  & $860 \pm 10$  & \colorbox{mine}{$865 \pm 57$} \\
                      &                          & run     & $139 \pm 25$  & $244 \pm 58$   & $245 \pm 53$  & $316 \pm 26$  & \colorbox{mine}{$332 \pm 26$}  & $267 \pm 14$ \\
                      &                          & flip    & $221 \pm 35$  & $380 \pm 54$  & $431 \pm 78$  & $430 \pm 97$  & \colorbox{mine}{$578 \pm 49$}  & $482 \pm 52$ \\
\cmidrule(lr){3-9}
                      &                          & whole performance & $324 \pm 24$ & $517 \pm 99$ & $487 \pm 75$ & $578 \pm 35$ & \colorbox{mine}{$643 \pm 22$} & $637 \pm 21$ \\

\cmidrule(lr){2-9}
                    & \multirow{6}{*}{Jaco}  & reach top right      & $29 \pm 22$ & $33 \pm 37$ & $18 \pm 10$ & $28 \pm 8$   & $98 \pm 44$   & \colorbox{mine}{$119 \pm 52$} \\
                    &                        & reach top left       & $9 \pm 15$ & $16 \pm 14$ & $11 \pm 11$ & $15 \pm 17$   & $80 \pm 28$   & \colorbox{mine}{$153 \pm 76$} \\
                    &                        & reach bottom right   & $50 \pm 32$ & $33 \pm 27$ & $37 \pm 46$ & $31 \pm 21$   & \colorbox{mine}{$105 \pm 26$}   & $70 \pm 34$ \\
                    &                        & reach bottom left    & $70 \pm 72$ & $6 \pm 8$ & $13 \pm 11$  & $15 \pm 8$   & $52 \pm 20$   & \colorbox{mine}{$186 \pm 26$} \\
\cmidrule(lr){3-9}
                    &                        & whole performance    & $39 \pm 26$ & $22 \pm 14$ & $20 \pm 18$ & $22 \pm 3$    & $84 \pm 16$   & \colorbox{mine}{$132 \pm 16$} \\

\cmidrule(lr){2-9}
                    & \multirow{6}{*}{Quadruped}  & walk    & $312 \pm 158$ & $615 \pm 48$  & $541 \pm 38$  & $601 \pm 90$  & $557 \pm 40$  & \colorbox{mine}{$660 \pm 92$}\\
                    &                             & stand   & $796 \pm 186$ & $898 \pm 57$  & $946 \pm 24$   & $895 \pm 63$  & $934 \pm 9$  & \colorbox{mine}{$950 \pm 12$} \\
                    &                             & run     & $342 \pm 199$ & $471 \pm 32$  & $476 \pm 10$   & $440 \pm 36$  & $466 \pm 13$   & \colorbox{mine}{$472 \pm 27$} \\
                    &                             & jump    & $542 \pm 126$ & $686 \pm 33$  & $692 \pm 6$   & $699 \pm 70$  & \colorbox{mine}{$758 \pm 21$}  & $710 \pm 27$ \\
\cmidrule(lr){3-9}
                    &                             & whole performance   & $498 \pm 160$ & $668 \pm 29$  & $664 \pm 3$   & $659 \pm 50$   & $679 \pm 14$   & \colorbox{mine}{$698 \pm 24$} \\

\midrule
\multirow{16}{*}{PROTO} & \multirow{6}{*}{Walker}   & walk  & $352 \pm 210$ & $889 \pm 45$   & $783 \pm 165$ & $867 \pm 109$ &  $894 \pm 38$ & \colorbox{mine}{$923 \pm 2$} \\
                      &                             & stand & $703 \pm 155$ & \colorbox{mine}{$936 \pm 30$}  & $832 \pm 86$ & $845 \pm 135$ & $931 \pm 22$ & $858 \pm 35$ \\
                      &                             & run   & $202 \pm 69$ & $298 \pm 27$  & $324 \pm 81$ & $262 \pm 69$ & \colorbox{mine}{$387 \pm 29$} & $228 \pm 26$ \\
                      &                             & flip  & $271 \pm 156$ & $477 \pm 47$  & $503 \pm 54$ & $435 \pm 143$ & \colorbox{mine}{$647 \pm 76$} & $645 \pm 62$ \\
\cmidrule(lr){3-9}
                    &                          & whole performance & $382 \pm 129$ & $650 \pm 19$ & $611 \pm 94$ & $602 \pm 112$ & \colorbox{mine}{$715 \pm 31$} & $663 \pm 19$ \\

\cmidrule(lr){2-9}
                    & \multirow{6}{*}{Jaco}  & reach top right      & $5 \pm 9$   & $31 \pm 54$ & $12 \pm 17$   & $37 \pm 52$   & $55 \pm 23$   & \colorbox{mine}{$80 \pm 75$}\\
                    &                        & reach top left       & $2 \pm 2$   & $33 \pm 34$ & $7 \pm 9$     & $14 \pm 13$   & $52 \pm 28$   & \colorbox{mine}{$99 \pm 48$}\\
                    &                        & reach bottom right   & $29 \pm 46$ & $19 \pm 33$ & $25 \pm 36$   & $26 \pm 40$   & $30 \pm 24$   & \colorbox{mine}{$91 \pm 79$}\\
                    &                        & reach bottom left    & $24 \pm 41$ & $0 \pm 0$   & $8 \pm 13$    & $2 \pm 2$     & \colorbox{mine}{$38 \pm 33$}   & $26 \pm 29$\\
\cmidrule(lr){3-9}
                    &                        & whole performance    & $15 \pm 14$ & $21 \pm 26$ & $13 \pm 12$ & $20 \pm 21$   & $44 \pm 19$    & \colorbox{mine}{$74 \pm 26$} \\

\cmidrule(lr){2-9}
                    & \multirow{6}{*}{Quadruped}  & walk    & $131 \pm 32$  & $168 \pm 76$  & $136 \pm 63$  & $235 \pm 144$  & $118 \pm 58$  & \colorbox{mine}{$298 \pm 79$} \\
                    &                             & stand   & $293 \pm 88$  & $351 \pm 263$  & $239 \pm 95$ & $259 \pm 132$  & $329 \pm 95$  & \colorbox{mine}{$569 \pm 55$}\\
                    &                             & run     & $163 \pm 51$  & $125 \pm 112$  & $100 \pm 45$ & $213 \pm 172$  & $175 \pm 44$  & \colorbox{mine}{$283 \pm 33$} \\
                    &                             & jump    & $211 \pm 77$  & $244\pm 92$   & $265 \pm 166$  & $169 \pm 101$  & $242 \pm 70$   & \colorbox{mine}{$407 \pm 7$} \\
\cmidrule(lr){3-9}
                    &                             & whole performance   & $199 \pm 10$ & $222 \pm 107$  & $185 \pm 72$  & $219 \pm 135$  & $216 \pm 54$   & \colorbox{mine}{$389 \pm 44$} \\

\midrule
\multirow{16}{*}{DIAYN} & \multirow{6}{*}{Walker}   & walk  & $174 \pm 151$ & $364 \pm 164$ & $329 \pm 147$ & $348 \pm 174$ & $604 \pm 126$ & \colorbox{mine}{$613 \pm 103$} \\
                      &                             & stand & $558 \pm 129$ & $614 \pm 146$ & $484 \pm 194$ & $429 \pm 83$  & $696 \pm 168$ & \colorbox{mine}{$712 \pm 66$}\\
                      &                             & run   & $96 \pm 25$   & $103 \pm 20$  & $74 \pm 26$   & $75 \pm 25$   & \colorbox{mine}{$156 \pm 25$}  & $152 \pm 38$\\
                      &                             & flip  & $128 \pm 24$  & $272 \pm 58$  & $183 \pm 51$  & $222 \pm 112$ & \colorbox{mine}{$386 \pm 76$}  & $373 \pm 39$\\
\cmidrule(lr){3-9}
                    &                          & whole performance & $239 \pm 79$ & $338 \pm 74$ & $268 \pm 67$ & $268 \pm 97$ & $461 \pm 64$   & \colorbox{mine}{$463 \pm 42$} \\

\cmidrule(lr){2-9}
                    & \multirow{6}{*}{Jaco}  & reach top right      & $43 \pm 37$  & $17 \pm 12$   & $24 \pm 16$   & $1 \pm 1$     & $64 \pm 5$    & \colorbox{mine}{$76 \pm 14$}\\
                    &                        & reach top left       & $10 \pm 12$  & $18 \pm 7$    & $30 \pm 17$   & $55 \pm 2$    & $55 \pm 6$    & \colorbox{mine}{$58 \pm 33$}\\
                    &                        & reach bottom right   & $42 \pm 42$  & $27 \pm 17$   & $7 \pm 4$     & $1 \pm 1$     & $45 \pm 26$   & \colorbox{mine}{$96 \pm 36$}\\
                    &                        & reach bottom left    & $32 \pm 41$  & $28 \pm 12$   & $36 \pm 12$   & $4 \pm 3$     & $46 \pm 4$    & \colorbox{mine}{$82 \pm 35$}\\
\cmidrule(lr){3-9}
                    &                        & whole performance    & $32 \pm 26$ & $22 \pm 6$    & $24 \pm 3$    & $15 \pm 1$    & $52 \pm 7$    & \colorbox{mine}{$78 \pm 11$} \\

\cmidrule(lr){2-9}
                    & \multirow{6}{*}{Quadruped}  & walk    & $196 \pm 151$ & $475 \pm 77$  & $311 \pm 47$  & \colorbox{mine}{$596 \pm 26$}  & $497 \pm 18$  & $482 \pm 11$ \\
                    &                             & stand   & $305 \pm 270$ & $756 \pm 30$  & $793 \pm 87$  & $865 \pm 30$  & \colorbox{mine}{$944 \pm 4$}   & \colorbox{mine}{$944 \pm 9$}\\
                    &                             & run     & $139 \pm 121$ & $380 \pm 22$  & $364 \pm 22$  & $408 \pm 15$  & $469 \pm 2$   & \colorbox{mine}{$472 \pm 5$}\\
                    &                             & jump    & $190 \pm 158$ & $636 \pm 19$  & $576 \pm 47$  & $700 \pm 18$  & \colorbox{mine}{$769 \pm 8$}   & $766 \pm 25$\\
\cmidrule(lr){3-9}
                    &                          & whole performance  & $207 \pm 168$ & $562 \pm 23$  & $511 \pm 37$  & $643 \pm 14$ & \colorbox{mine}{$670 \pm 4$}    & $666 \pm 2$ \\

\bottomrule
\end{tabular}}
\label{tab:fullresult}
\end{table}

\newpage
\begin{table}[H]
\vspace{-15pt}
\centering
\caption{\small Small sample dataset experimental results on ExORL.}
\resizebox{0.71\linewidth}{!}{\scriptsize
\begin{tabular}{lcccccc}
\toprule
\textbf{Dataset} & \textbf{Domain} & \textbf{Task} & \textbf{FB} & \textbf{VCFB} & \textbf{MCFB}  & \textbf{BREEZE (ours)}\\ 
\midrule
\multirow{16}{*}{RND} & \multirow{6}{*}{Walker}  & walk    & $300 \pm 50$ & $377 \pm 18$ & $225 \pm 46$ & \colorbox{mine}{$663 \pm 41$} \\
                      &                          & stand   & $368 \pm 70$ & $556 \pm 44$ & $587 \pm 97$ & \colorbox{mine}{$791 \pm 21$} \\
                      &                          & run     & $104 \pm 11$ & $144 \pm 17$ & $101 \pm 18$ & \colorbox{mine}{$224 \pm 12$} \\
                      &                          & flip    & $284 \pm 91$ & $319 \pm 46$ & $236 \pm 44$ & \colorbox{mine}{$421 \pm 9$} \\
\cmidrule(lr){3-7}
                      &                          & whole performance & $264 \pm 33$ & $350 \pm 29$ & $287 \pm 48$ & \colorbox{mine}{$525 \pm 13$} \\
\cmidrule(lr){2-7}
                    & \multirow{6}{*}{Jaco}  & reach top right     & $9 \pm 8$      & $6 \pm 7$     & $28 \pm 16$   & \colorbox{mine}{$51 \pm 10$} \\
                    &                        & reach top left      & $2 \pm 2$      & $6 \pm 5$     & $1 \pm 1$     & \colorbox{mine}{$7 \pm 3$} \\
                    &                        & reach bottom right  & $10 \pm 10$    & $4 \pm 4$     & $19 \pm 15$   & \colorbox{mine}{$38 \pm 22$}\\
                    &                        & reach bottom left   & $8 \pm 9$      & $20 \pm 12$   & $1 \pm 1$     & \colorbox{mine}{$47 \pm 22$} \\
\cmidrule(lr){3-7}
                    &                        & whole performance   & $7 \pm 5$ & $9 \pm 2$ & $13 \pm 7$ & \colorbox{mine}{$36 \pm 5$} \\
                    
\cmidrule(lr){2-7}
                    & \multirow{6}{*}{Quadruped}  & walk   & $125 \pm 98$   & $164 \pm 45$ & $94 \pm 58$    & \colorbox{mine}{$382 \pm 15$} \\
                    &                             & stand  & $268 \pm 183$  & $288 \pm 73$ & $159 \pm 70$   & \colorbox{mine}{$651 \pm 37$} \\
                    &                             & run    & $132 \pm 83$   & $186 \pm 52$ & $89 \pm 49$    & \colorbox{mine}{$318 \pm 16$} \\
                    &                             & jump   & $178 \pm 130$  & $292 \pm 82$ & $148 \pm 70$   & \colorbox{mine}{$544 \pm 37$} \\
\cmidrule(lr){3-7}
                    &                          & whole performance & $176 \pm 123$ & $233 \pm 52$ & $123 \pm 61$ & \colorbox{mine}{$474 \pm 21$} \\

\midrule
\multirow{16}{*}{APS} & \multirow{6}{*}{Walker}  & walk    & $535 \pm 139$  & $533 \pm 114$ & $449 \pm 64$ & \colorbox{mine}{$762 \pm 25$}\\
                      &                          & stand   & $447 \pm 45$   & $625 \pm 52$  & $562 \pm 104$ & \colorbox{mine}{$828 \pm 19$}\\
                      &                          & run     & $166 \pm 73$   & \colorbox{mine}{$216 \pm 33$}  & $192 \pm 50$ & $200 \pm 2$ \\
                      &                          & flip    & $334 \pm 121$  & $293 \pm 85$  & $354 \pm 95$ & \colorbox{mine}{$365 \pm 24$} \\
\cmidrule(lr){3-7}
                      &                          & whole performance & $370 \pm 66$ & $416 \pm 10$ & $389 \pm 77$ & \colorbox{mine}{$539 \pm 15$}\\

\cmidrule(lr){2-7}
                    & \multirow{6}{*}{Jaco}  & reach top right      & \colorbox{mine}{$36 \pm 29$}   & $7 \pm 7$     & $20 \pm 24$   & $21 \pm 16$ \\
                    &                        & reach top left       & $3 \pm 2$     & $7 \pm 5$     & \colorbox{mine}{$24 \pm 19$}   & $21 \pm 15$ \\
                    &                        & reach bottom right   & $44 \pm 60$   & $42 \pm 43$   & $56 \pm 71$   & \colorbox{mine}{$58 \pm 33$}\\
                    &                        & reach bottom left    & $2 \pm 3$     & $0 \pm 0$     & $15 \pm 4$    & \colorbox{mine}{$51 \pm 29$}\\
\cmidrule(lr){3-7}
                    &                        & whole performance    & $21 \pm 17$   & $14 \pm 13$ & $29 \pm 27$ & \colorbox{mine}{$38 \pm 9$} \\

\cmidrule(lr){2-7}
                    & \multirow{6}{*}{Quadruped}  & walk    & $299 \pm 27$  & $293 \pm 26$  & $255 \pm 116$ & \colorbox{mine}{$435 \pm 49$} \\
                    &                             & stand   & $479 \pm 12$  & $524 \pm 141$ & $456 \pm 92$  & \colorbox{mine}{$810 \pm 85$} \\
                    &                             & run     & $260 \pm 10$  & $232 \pm 46$  & $222 \pm 101$ & \colorbox{mine}{$401 \pm 30$}\\
                    &                             & jump    & $321 \pm 115$ & $356 \pm 30$  & $338 \pm 179$ & \colorbox{mine}{$577 \pm 59$} \\
\cmidrule(lr){3-7}
                    &                             & whole performance   & $340 \pm 29$ & $351 \pm 57$ & $318 \pm 122$ & \colorbox{mine}{$556 \pm 52$} \\

\midrule
\multirow{16}{*}{PROTO} & \multirow{6}{*}{Walker}   & walk  & $522 \pm 136$ & \colorbox{mine}{$732 \pm 38$} & $685 \pm 87$ & $709 \pm 27$ \\
                      &                             & stand & $608 \pm 119$ & $709 \pm 36$ & $637 \pm 39$ & \colorbox{mine}{$846 \pm 22$} \\
                      &                             & run   & $176 \pm 58$  & \colorbox{mine}{$215 \pm 24$} & $174 \pm 49$ & $210 \pm 7$ \\
                      &                             & flip  & $354 \pm 67$  & $394 \pm 36$ & $357 \pm 43$ & \colorbox{mine}{$447 \pm 27$} \\
\cmidrule(lr){3-7}
                    &                          & whole performance & $415 \pm 19$ & $513 \pm 31$ & $463 \pm 11$ & \colorbox{mine}{$553 \pm 18$} \\

\cmidrule(lr){2-7}
                    & \multirow{6}{*}{Jaco}  & reach top right      & $7 \pm 10$ & $17 \pm 19$ & $6 \pm 8$ & \colorbox{mine}{$18 \pm 13$}\\
                    &                        & reach top left       & \colorbox{mine}{$59 \pm 4$} & $14 \pm 12$ & $19 \pm 14$ & $51 \pm 22$ \\
                    &                        & reach bottom right   & $0 \pm 0$ & $22 \pm 22$ & $12 \pm 10$ & \colorbox{mine}{$27 \pm 19$} \\
                    &                        & reach bottom left    & $0 \pm 0$ & $18 \pm 11$ & $12 \pm 10$ & \colorbox{mine}{$19 \pm 12$} \\
\cmidrule(lr){3-7}
                    &                        & whole performance    & $16 \pm 2$ & $18 \pm 12$ & $12 \pm 7$ & \colorbox{mine}{$29 \pm 12$} \\

\cmidrule(lr){2-7}
                    & \multirow{6}{*}{Quadruped}  & walk    & $105 \pm 63$  & $22 \pm 7$    & $103 \pm 85$  & \colorbox{mine}{$129 \pm 66$} \\
                    &                             & stand   & $327 \pm 180$ & $196 \pm 202$ & \colorbox{mine}{$387 \pm 185$} & $59 \pm 35$ \\
                    &                             & run     & $169 \pm 99$  & $98 \pm 110$  & $206 \pm 152$ & \colorbox{mine}{$244 \pm 173$} \\
                    &                             & jump    & $193 \pm 111$ & $106 \pm 103$ & $265 \pm 194$ & \colorbox{mine}{$291 \pm 47$} \\
\cmidrule(lr){3-7}
                    &                             & whole performance   & $198 \pm 111$ & $106 \pm 103$ & \colorbox{mine}{$240 \pm 134$} & $181 \pm 60$ \\

\midrule
\multirow{16}{*}{DIAYN} & \multirow{6}{*}{Walker}   & walk  & $103 \pm 79$  & $114 \pm 16$  & $129 \pm 38$  & \colorbox{mine}{$418 \pm 119$} \\
                      &                             & stand & $513 \pm 245$ & $460 \pm 256$ & $424 \pm 27$  & \colorbox{mine}{$524 \pm 39$} \\
                      &                             & run   & $87 \pm 45$   & $79 \pm 22$   & $63 \pm 19$   & \colorbox{mine}{$102 \pm 17$} \\
                      &                             & flip  & $106 \pm 50$  & $186 \pm 44$  & $169 \pm 33$  & \colorbox{mine}{$276 \pm 28$} \\
\cmidrule(lr){3-7}
                    &                          & whole performance & $202 \pm 94$ & $210 \pm 81$ & $196 \pm 26$ & \colorbox{mine}{$330 \pm 43$} \\

\cmidrule(lr){2-7}
                    & \multirow{6}{*}{Jaco}  & reach top right      & $18 \pm 23$   & $14 \pm 13$   & \colorbox{mine}{$29 \pm 37$}   & $5 \pm 34$ \\
                    &                        & reach top left       & $32 \pm 35$   & $25 \pm 9$    & $9 \pm 8$     & \colorbox{mine}{$72 \pm 58$} \\
                    &                        & reach bottom right   & $15 \pm 11$   & $12 \pm 9$    & \colorbox{mine}{$42 \pm 58$} & $6 \pm 29$ \\
                    &                        & reach bottom left    & $5 \pm 2$     & \colorbox{mine}{$19 \pm 24$}   & $0 \pm 0$ & $7 \pm 26$ \\
\cmidrule(lr){3-7}
                    &                        & whole performance    & $17 \pm 11$   & $18 \pm 5$ & $20 \pm 26$  & \colorbox{mine}{$22 \pm 15$} \\

\cmidrule(lr){2-7}
                    & \multirow{6}{*}{Quadruped}  & walk    & $158 \pm 41$ & $189 \pm 73$ & $190 \pm 10$ & \colorbox{mine}{$303 \pm 69$} \\
                    &                             & stand   & $430 \pm 84$ & $423 \pm 46$ & $408 \pm 57$ & \colorbox{mine}{$680 \pm 117$} \\
                    &                             & run     & $224 \pm 41$ & $214 \pm 28$ & $204 \pm 28$ & \colorbox{mine}{$341 \pm 79$} \\
                    &                             & jump    & $368 \pm 83$ & $327 \pm 58$ & $343 \pm 63$ & \colorbox{mine}{$459 \pm 61$} \\
\cmidrule(lr){3-7}
                    &                          & whole performance  & $295 \pm 46$ & $288 \pm 48$ & $286 \pm 34$ & \colorbox{mine}{$446 \pm 78$} \\

\bottomrule
\end{tabular}}
\label{tab:smallsampleresult}
\end{table}

\newpage
\subsection{Empirically Evaluated $M^{\pi_z}$ and $Q_z$ on Other Tasks}
\label{app:empirical_motivation}

We present empirical visualization results of $M^{\pi_z}$ and $Q_z$ distributions, shown in Figure~\ref{fig:dis1}-\ref{fig:dis3}. We trained FB, VCFB, and MCFB on the RND~\citep{Burda2018ExplorationBR} dataset. Specifically, the trained $B$ network was utilized for inferring the task latent vector $z_\text{task}$, and as the input to both $M^{\pi_z}$ and $Q_z$ during the evaluation phase.

\begin{figure}[H]
\centering
\begin{minipage}{\linewidth}
\centering
\includegraphics[width=\linewidth]{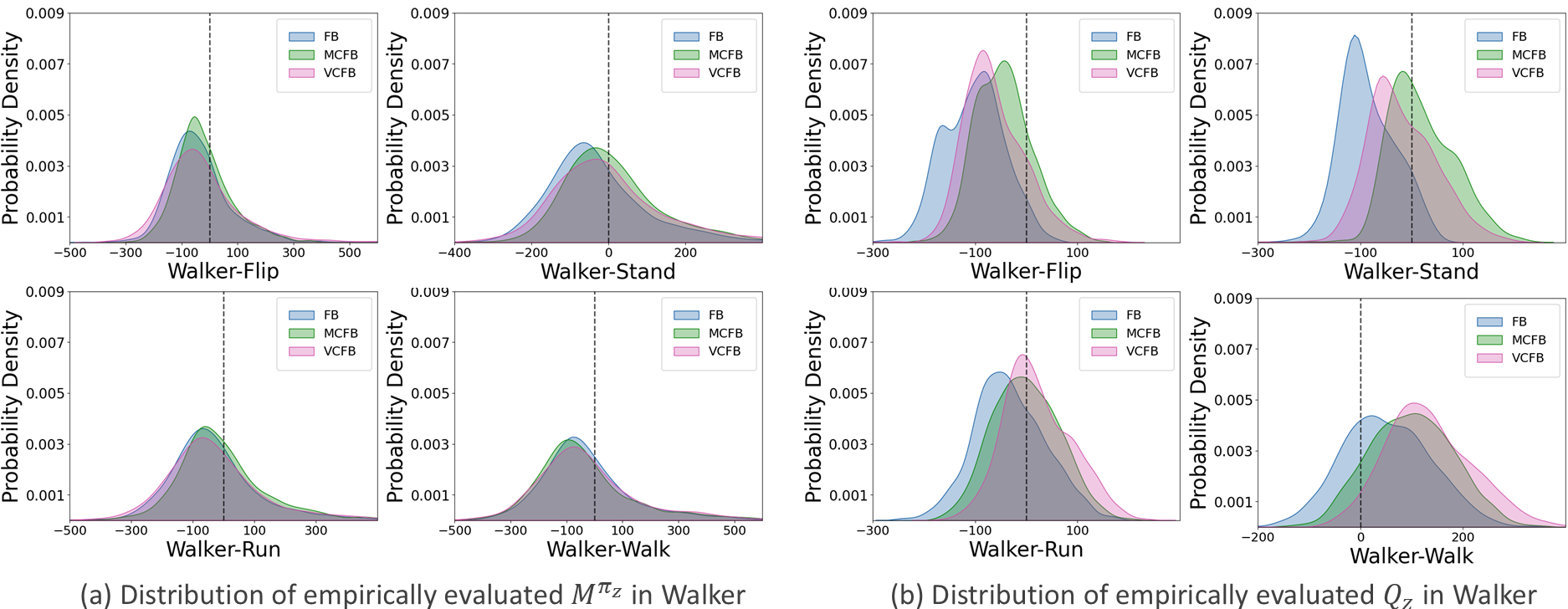}
\captionsetup{skip=3pt}
\caption{\small Visualization of the empirical $M^{\pi_z}$ and $Q_z$ distribution on Walker RND.} 
\label{fig:dis1}
\vspace{20pt}
\end{minipage}
% \hfill
\begin{minipage}{\linewidth}
\centering
\includegraphics[width=\linewidth]{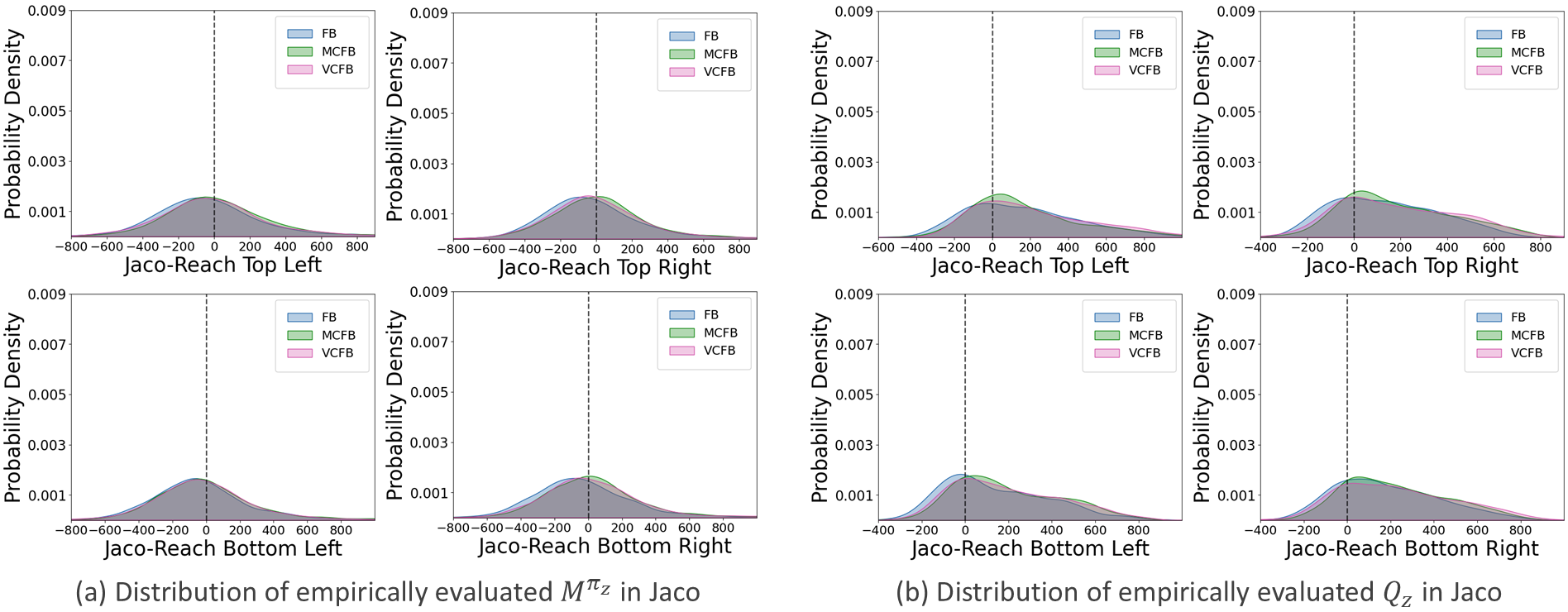}
\captionsetup{skip=3pt}
\caption{\small Visualization of the empirical $M^{\pi_z}$ and $Q_z$ distribution on Jaco RND.} 
\label{fig:dis2}
\vspace{20pt}
\end{minipage}
% \hfill
\begin{minipage}{\linewidth}
\centering
\includegraphics[width=\linewidth]{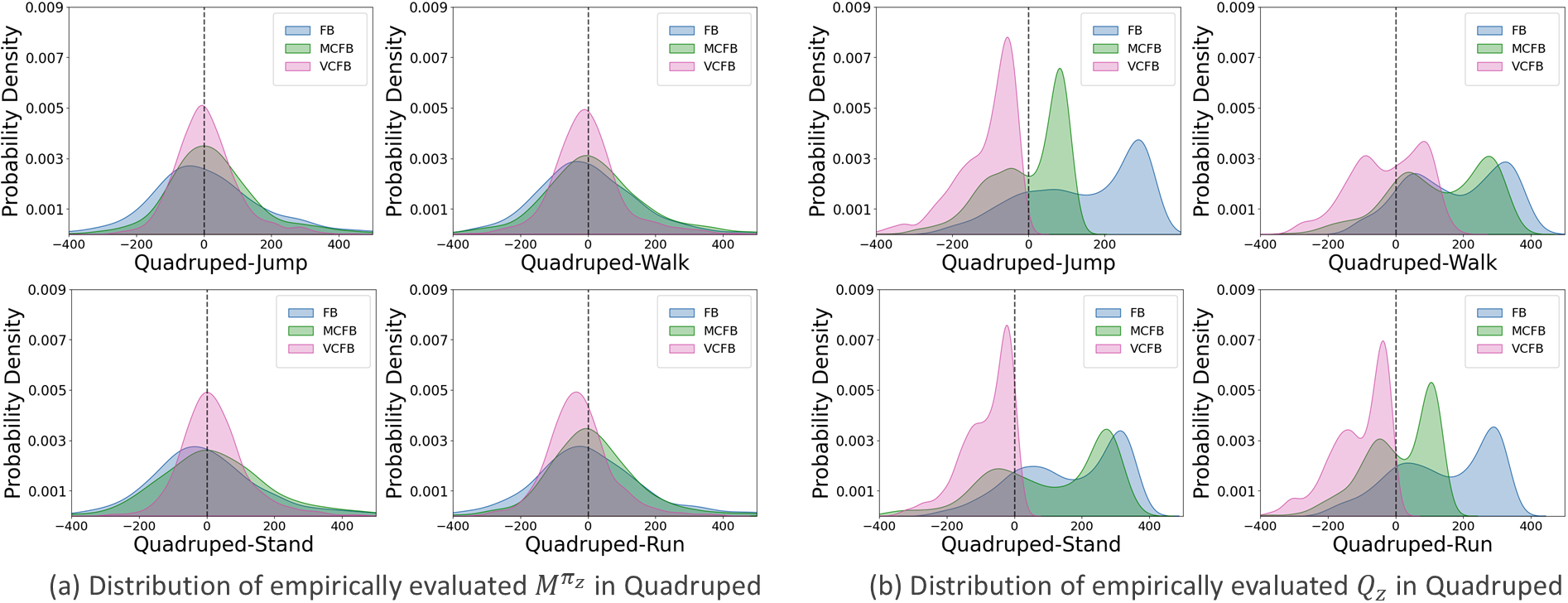}
\captionsetup{skip=3pt}
\caption{\small Visualization of the empirical $M^{\pi_z}$ and $Q_z$ distribution on Quadruped RND.} 
\label{fig:dis3}
\end{minipage}
\end{figure}
\section{Discussion and Limitation}
\label{appendix:limitation}

\textit{\textbf{E1}. Hyperparameter Sensitivity and Tuning.}

BREEZE introduces additional components compared to vanilla FB, resulting in more hyperparameters. While performance is generally robust, we provide the following tuning guidance:

\begin{itemize}[leftmargin=*]
    \item \textit{Effect of components.}
    
    Representation architectures are responsible for a high performance ceiling, as we can observe from Table~\ref{table:component_abl}. The diffusion policy ensures a solid performance lower bound by maintaining stable alignment with any quality dataset.
    
    \item \textit{Greedy demand for upstream value learning.} 
    
    Training cost is mainly influenced by the action ratio $\rho_a$ and the number of rejection sampling candidates $K_\text{train}$. While the diffusion policy provides conservatism, we find that increasing $\rho_a$ and $K_\text{train}$ further improves performance in some domains at the cost of additional computation. Near-optimal results can be achieved cost-effectively by reducing the batch size and diffusion timesteps to reallocate the computational resources to increase $\rho_a$ and $K_\text{train}$.
\end{itemize}

\textit{\textbf{E2.} Computation and Performance Trade-off.}

A limitation of BREEZE is its higher computational cost, attributable to the expressive representation networks and diffusion model. We consider this a reasonable trade-off given the significant improvements in stability and zero-shot performance. For a comparable and fair comparison, all reported results use a batch size of 512, and training for 1 million steps takes approximately 20 hours or less for environments with standard TD updates~\citep{suttonTD}; configurations with higher $K_\text{train}$ and mixture ratios require up to 39 hours. As shown in Table~\ref{tab:computation}, BREEZE typically converges within 400k steps, matching or exceeding baselines trained for 1M steps. Future work may explore framework-level optimizations, such as replacing diffusion with more efficient flow-based policies to lower costs without compromising performance.

\begin{table}[H]
\centering
\caption{\small \textbf{Computation cost v.s. performance trade-off.} Aggregated scores are averaged over 5 random seeds across datasets. Experiments are conducted on a single NVIDIA A6000 GPU.}
\resizebox{0.99\linewidth}{!}{\scriptsize
\begin{tabular}{lcccccc}
\toprule
\textbf{Methods} & \textbf{Training Steps (k)} & \textbf{Training Time (h)} & \textbf{IQM Walker} & \textbf{IQM Jaco} & \textbf{IQM Quadruped} & \textbf{Aggregate IQM}\\ 
\midrule
\multirow{3}{*}{FB}   & \cellcolor{gray!20}200  & \cellcolor{gray!20}0.8 & \cellcolor{gray!20}480 & \cellcolor{gray!20}16 & \cellcolor{gray!20}303 & \cellcolor{gray!20}266 \\
                      & 400  & 1.6 & 508 & 21 & 401 & 310 \\
                      & 1000  & 4.0 & 542 & 24 & 531 & 366 \\
\cmidrule(lr){1-7}
\multirow{3}{*}{VCFB}   & \cellcolor{gray!20}200  & \cellcolor{gray!20}2.4 & \cellcolor{gray!20}441 & \cellcolor{gray!20}18 & \cellcolor{gray!20}361 & \cellcolor{gray!20}273\\
                      & 400  & 4.8 & 489 & 19 & 457 &322\\
                      & 1000  & 12.0 & 505 & 26 & 492 & 341 \\ 
\cmidrule(lr){1-7}
\multirow{3}{*}{MCFB}   & \cellcolor{gray!20}200  & \cellcolor{gray!20}2.4 & \cellcolor{gray!20}473 & \cellcolor{gray!20}21 & \cellcolor{gray!20}347 & \cellcolor{gray!20}280\\ 
                      & 400  & 4.8 & 514 & 21 & 469 & 335 \\
                      & 1000  & 12.0 & 527 & 25 & 551 & 368 \\ 
\cmidrule(lr){1-7}
\multirow{2}{*}{BREEZE (ours)}   & \cellcolor{gray!20}200  & \cellcolor{gray!20}4.0-7.8 & \cellcolor{gray!20}586 & \cellcolor{gray!20}57 & \cellcolor{gray!20}550 & \cellcolor{gray!20}398 \\
                    & 400  & 8.0-15.6 & \textbf{606} & \textbf{75} & \textbf{567} & \textbf{416} \\
\bottomrule
\end{tabular}}
\label{tab:computation}
\end{table}

\end{document}